\def\eqref#1{equation~\ref{#1}}
\def\1{\bm{1}}
\def\mA{{\bm{A}}}
\def\mI{{\bm{I}}}
\def\mK{{\bm{K}}}
\DeclareMathAlphabet{\mathsfit}{\encodingdefault}{\sfdefault}{m}{sl}
\SetMathAlphabet{\mathsfit}{bold}{\encodingdefault}{\sfdefault}{bx}{n}
\def\gH{{\mathcal{H}}}
\def\sN{{\mathbb{N}}}
\def\sR{{\mathbb{R}}}
\newcommand{\R}{\mathbb{R}}
\newcommand{\eqdef}{\ensuremath{\stackrel{\mbox{\upshape\tiny def.}}{=}}}
\newcommand{\opnorm}[1]{\left\|#1\right\|_\text{op}}
\newcommand{\norm}[2]{\left\|#1\right\|_{#2}}
\newcommand{\mol}{{\cal M}}
\newcommand{\x}{\boldsymbol{x}}
\newcommand{\z}{\boldsymbol{z}}
\newcommand{\y}{\boldsymbol{y}}
\newcommand{\X}{\boldsymbol{X}}
\newcommand{\Na}{\text{Na}}
\newcommand*\kernel[1]{k_{\text{#1}}}
\newtheorem{theorem}{Theorem}
\newtheorem{proposition}[theorem]{Proposition}
\newtheorem{definition}[theorem]{Definition}
\title{Spectral Analysis of Molecular Features: When Richer Features Do Not Guarantee Better Generalization}
\author{\name Asma Jamali \email jamalira@mcmaster.ca \\
      \addr School of Computational Science and Engineering, McMaster University, Canada \\
      \addr Department of Chemistry and Chemical Biology, McMaster University, Canada
      \AND
      \name Tin Sum Cheng \email tinsum.cheng@unibas.ch \\
      \addr Department of Mathematics and Computer Science, University of Basel, Switzerland
      \AND
      \name Rodrigo A. Vargas-Hernández \email vargashr@mcmaster.ca\\
      \addr School of Computational Science and Engineering, McMaster University, Canada \\
      \addr Department of Chemistry and Chemical Biology, McMaster University, Canada \\
      Brockhouse Institute for Materials Research, McMaster University, Canada}
\begin{document}

\maketitle

\begin{abstract}

The spectral properties of feature embeddings offer critical insights into model generalization and representation quality. While deep learning models are widely used for molecular property prediction, kernel methods remain competitive in low-data regimes, yet their spectral behavior is largely unexplored. 
We present the first comprehensive spectral analysis of kernel ridge regression across diverse representations—including molecular fingerprints (ECFP), pretrained transformers, graph neural networks, and 3D descriptors—evaluated on QM9 and 3 MoleculeNet benchmarks. Surprisingly, richer spectral features do not consistently yield better generalization performance, contradicting common representation heuristics used in self-supervised learning (SSL). Across 4 spectral metrics, only ECFP-based kernels show a strictly positive correlation with performance. Transformer and global 3D representations exhibit mixed behavior, whereas local 3D representations show consistently negative correlations. Truncation analysis further emphasizes this disparity: for local 3D representations on thermodynamic targets, fewer than 2\% of eigenvalues (and occasionally as few as 0.02\%) are needed to recover 95\% of performance, whereas ECFP and transformer kernels require significantly more. By demonstrating a strong dependence on both task and representation, our results challenge the heuristic that richer spectra inherently improve generalization, providing new guidance for evaluating representations in SSL and in label-limited scientific tasks.

\end{abstract}

\section{Introduction}\label{introduction}
Accurate molecular property prediction lies at the heart of modern materials-discovery pipelines, where the rapid estimation of chemical properties dramatically accelerates screening and design \cite{bohacek1996art, reymond2015chemical, goh2017deep, kailkhura2019reliable, shen2019molecular, schapin2023machine,kuang2024impact}. Within this domain, two dominant modeling paradigms have emerged: neural network-based and kernel-based models. Neural networks (NNs) have advanced rapidly, driven by massive datasets and specialized architectures such as graph neural networks (GNNs) and pre-trained transformers \cite{jiang2021could, le2022equivariant, ju2023tgnn}, giving latent feature representations for molecules \cite{pretrainedmolecularembedding:2025}. 

In contrast, traditional kernel methods excel in low-data regimes and rely on tailor-made kernel features implicitly defined by the kernel. Their non-parametric nature allows them to capture complex similarity structures without requiring massive training datasets or extensive hyperparameter tuning, making them especially valuable for sample-efficient tasks like active learning and Bayesian optimization \cite{GAUCHE:griffiths:2023, fgkernel:2005, soap:bartok:2013, mbdf:2023:anatole}. Furthermore, kernel methods underpin some of the most successful machine-learned interatomic potentials \cite{gp_vs_nn:jcp:2018,vargashernández2021gaussianprocessesspectraldelta,molkernels:review:2025}, enabling accurate predictions of atomic forces and energies across diverse chemical systems.
However, the design of these molecular kernels is challenging and highly varied: molecules can be represented using Cartesian or internal coordinates, cheminformatics descriptors like extended connectivity fingerprints (ECFPs), or complex graphs of atoms and bonds \cite{GAUCHE:griffiths:2023}.

Moreover, traditional evaluation of these molecular representations/embeddings (tailor-made kernel features or learnt latent features), involves solely on their test-set performance in downstream tasks. While pragmatic, this approach obscures deeper, fundamental questions regarding representation quality:
\vspace{-5pt}
\begin{center}
    \textit{How well does a kernel capture the intrinsic structure of the target function,\\
    and what does this reveal about its generalization capacity?}
\end{center}
\vspace{-5pt}

To peek inside this black box, machine learning theory points toward the \textit{kernel spectrum}. Spurred by the theory of the neural tangent kernel (NTK) in over-parameterized neural networks \cite{jacot2018neural}, theoretical interest in the performance guarantees of kernel methods has surged, particularly concerning bounds dictated by spectral properties \cite{arora2019fine, mallinar2022benign, li2023asymptotic, barzilai2024generalization, cheng2024comprehensive}. Concurrently, the self-supervised learning (SSL) community has adopted spectral metrics to evaluate representation quality using unlabeled data \cite{agrawal2022alpha, garrido2023rankme}. This has popularized a prevailing heuristic: one should choose the model with the richest feature spectrum, under the belief that ``richer features yield better generalization.''

\paragraph{Contribution}
In this work, we investigate whether these theoretical insights regarding spectral richness hold true in the context of molecular chemistry. Our key contributions are summarized as follows: \vspace{-10pt}

\begin{itemize}
    \setlength{\itemsep}{-2pt}
    \item \textbf{Comprehensive spectral analysis of kernel and SSL features} 
    We present the first systematic spectral analysis of molecular property prediction, evaluated across the QM9 dataset and three MoleculeNet benchmarks (ESOL, FreeSolv, and Lipophilicity), including $R^2$ score, four spectral metrics, and Pearson correlation between them. We also include an ablation study on the feature spectrum with artificially injected noise in the molecular features. 
   \item \textbf{Kernel Probing.} 
   We also invent and apply Kernel Probing (KP)---kernel ridge regression on SSL features with linear probing (LP) as a special case---achieving improved performance over the commonly used LP baseline. This could be of independent for practitioners on SSL model evaluation and theorists on this hybrid approach of kernelized SSL features.

    \item \textbf{Truncated Threshold of Features.} 
    We extend the concept of \textit{truncated kernels} \cite{amini2022target} to ECFP-based representations, quantifying the fraction of eigenvalues required to recover 95\% and 99\% of the original performance. Our results demonstrate that the top eigenvalues capture the vast majority of predictive power, further questioning the general belief that a long-tailed, richer spectrum is strictly necessary for generalization.
\end{itemize}

\begin{table}[htpb]
    \centering
    \caption{Summary of molecular representations analyzed and the correlation between their spectral richness and predictive performance.}
    \label{tab:correlation_summary}
    \renewcommand{\arraystretch}{1.2}
    \resizebox{\linewidth}{!}{%
    \begin{tabular}{@{}lllcl@{}}
        \toprule
        \textbf{Features} & \textbf{Computation} & \textbf{Type} & \textbf{Count} & \textbf{Correlation} \\
        \midrule
        \multirow{3}{*}{\textbf{Kernel Features}} & \multirow{3}{*}{Defined by hand-crafted kernels} & ECFP & 13 & Consistently Positive \\
        & & 3D-Global & 3 x 3 & Mixed \\
        & & 3D-Local & 3 x 2 & Consistently Negative \\
        \midrule
        \multirow{2}{*}{\textbf{Pre-Trained Features}} & \multirow{2}{*}{Learnt by pre-trained backbone} & Transformer-based & 4 x 3 & Mixed \\
        & & GNN-based & 2 x 3 & Mixed \\
        \bottomrule
    \end{tabular}%
    }
\end{table}

\paragraph{Organization} 
The paper is structured as follows. In Section \ref{background}, we review the relevant background and key concepts. Section \ref{result} presents our experimental methodology and results. In Section \ref{discussion}, we discuss the novelty, limitations, and potential future directions of our work. Due to space constraints, additional experimental results, analyses, and discussions are provided in the Appendix.

\section{Background} \label{background}
In molecular property prediction, the inputs are molecules $\mol$, discrete objects without an inherent Euclidean representation. This necessitates the use of domain-specific representations, each inducing a corresponding kernel that encodes molecular similarity.\vspace{-8pt}

\begin{figure}[ht]
    \centering
    \includegraphics[width=0.95\linewidth]{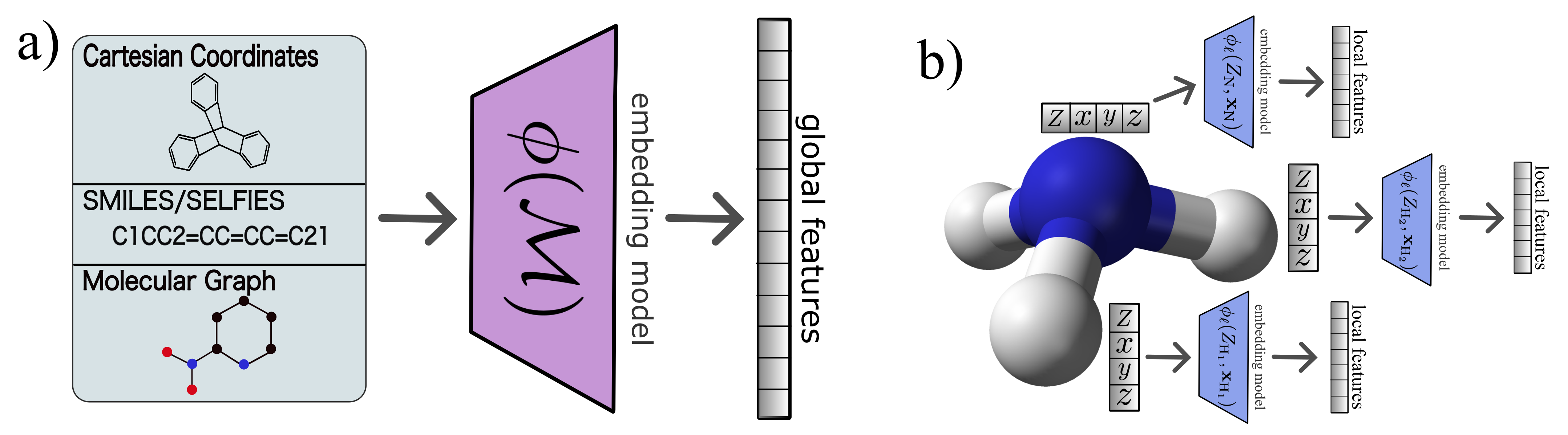}
    \caption{Molecular representation generation workflow compatible with kernel functions; (a) global ($\phi(\mol)$), where $\mol$ represents a molecule, and (b) local ($\phi_\ell(Z_i,\mathbf{x}_i)$) molecular representations, $Z$ and $\mathbf{x}_i$ are the atomic number and coordinates, respectively, of the $i$th atom. }
    \label{fig:diagram}
\end{figure}

\paragraph{Molecular Representations \& Features} 
In the machine learning literature, the terms \emph{representation}, \emph{feature}, and \emph{embedding} are often used interchangeably. To avoid ambiguity, we define \emph{molecular representations} as the raw formats used to encode molecules, such as 3D Cartesian coordinates, strings (SMILES or SELFIES), or molecular graphs. As illustrated in Fig.~\ref{fig:diagram}, representations are typically grouped into two categories: (i) \textit{global descriptors}, which encode the entire molecule, and (ii) \textit{local descriptors}, which capture individual atomic environments. In this work, \textbf{3D} kernels refer to those based on Cartesian coordinates (see Section \ref{molecular_kernels} for details). 
And we define \emph{(pre-trained) molecular features} as the vector embeddings extracted by passing a molecular representation through a pre-trained encoder or backbone $h$. For a given molecule $\mol_i$, its pre-trained feature is denoted as $\mathbf{h}_i \eqdef h(\mol_i)\in \mathbb{R}^P$.

\paragraph{Kernel Methods \& Kernel Features}
On the other hand, a molecular kernel $k$ acts as a similarity measure mapping any two molecules $\mol_i$ and $\mol_j$ to a real number, forming a symmetric positive semi-definite kernel matrix $\mK \in \mathbb{R}^{N \times N}$ over a dataset. By Mercer's theorem, any such kernel implicitly maps the input data into a reproducing kernel Hilbert space (RKHS) $\gH$, allowing us to express the kernel as an inner product: $k(\mol_i, \mol_j) = \langle \phi(\mol_i), \phi(\mol_j) \rangle_\gH$. Analogous to \emph{pre-trained features}, we define \emph{kernel features} as the vector $\bm{\phi}_i\eqdef[\phi_p(\mol_i)]_{p=1}^P \in \mathbb{R}^P$, where $\phi_p$ are the eigenfunctions of the kernel and $P$ can be infinite.

\paragraph{Truncated Kernel Ridge Regression}
Standard kernel ridge regression (KRR) inherently biases toward eigenfunctions associated with larger eigenvalues \cite{basri2020frequency}. To isolate the most informative components, truncated kernel ridge regression (TKRR) \cite{amini2022target} restricts the model to the top $r$ eigen-components. Given a kernel matrix with eigendecomposition $\mathbf{K} = \sum_{k=1}^N \mu_k\mathbf{u}_k \mathbf{u}_k^\top$, the rank-$r$ truncated kernel matrix is $\mathbf{K}^{(r)} = \sum_{k=1}^r \mu_k\mathbf{u}_k \mathbf{u}_k^\top$. To evaluate the TKRR predictor on a new test point $\mol$, we use the approximated truncated kernel:
\begin{equation} \label{eq:TKRR}
    \tilde{k}^{(r)}(\mol_i,\mol) = [\mathbf{U}_{\leq r}\mathbf{U}_{\leq r}^\top \mathbf{k}]_i, 
\end{equation}
where $\mathbf{U}_{\leq r} = (\mathbf{u}_k)_{k=1}^r \in \mathbb{R}^{N\times r}$ and $\mathbf{k}= (k(\mol_j,\mol))_{j=1}^N \in \mathbb{R}^N$. This formulation effectively limits the predictor to a finite subset of \emph{truncated kernel features} $\tilde{\bm{\phi}}^{(r)}\in\mathbb{R}^r$. Further theoretical properties of $\tilde{k}^{(r)}$ are detailed in Section \ref{proof}.

\paragraph{Feature Spectrum}
To analyze the structural properties of these feature spaces, we examine their covariance. For pre-trained features stacked in matrix $\mathbf{H}$, the (empirical) feature covariance is $\hat{\bm{\Sigma}} = \frac1N\mathbf{H}\mathbf{H}^\top \in \mathbb{R}^{P\times P}$ which can be computed readily once the molecular features are extracted from the pre-trained backbone encoder. For (truncated) kernel features, the equivalent is the empirical covariance operator $\hat{\bm{\Sigma}} = \frac1N\Phi\Phi^\top \in\mathbb{R}^{P\times P}$, which has the same non-zero spectrum as the empirical kernel matrix $\frac1N\mK\in\R^{N\times N}$. The eigenvalues of $\bm{\Sigma}$ constitute the \emph{feature spectrum}. Intuitively, a richer feature spectrum means the feature vectors span more diverse directions in the ambient space, capturing finer details. In a kernel perspective, a faster spectral decay restricts the RKHS size, reducing kernel capacity. 
To quantify this richness, we arrange the empirical spectrum $\{\mu_1, \mu_2, \dots, \mu_p\}$ in decreasing order. Assuming a power law $\mu_j \propto j^{-\alpha}$ \cite{agrawal2022alpha, mallinar2022benign}, a smaller decay rate $\alpha$ indicates richer features. Alternative non-parametric measures of richness include spectral Shannon entropy (SSE) \cite{huh2023low, garrido2023rankme}, intrinsic dimension (ID), and stable rank (SR) \cite{ipsen2024stable}. These metrics are explicitly defined in Section \ref{metric}.

\paragraph{Self-Supervised Learning \& generalization}
In self-supervised learning (SSL), downstream property prediction is conventionally executed via linear probing—training a linear regressor with learned weights on the frozen pre-trained features $\mathbf{h}_i$. Crucially, $\ell_2$-regularized linear probing is mathematically equivalent to KRR using the linear kernel defined by the pre-trained features. Motivated by this equivalence, evaluating model quality via the feature spectrum has become standard practice in SSL \cite{agrawal2022alpha, garrido2023rankme}. These SSL analyses, much like parallel studies on empirical kernel spectra \cite{mallinar2022benign, cheng2024characterizing}, rely heavily on the heuristic that ``richer features yield better generalization.'' By unifying pre-trained and kernel features under this spectral framework, we rigorously test the validity of this heuristic within the domain of molecular chemistry.

\section{Result} \label{result}
This section consists of (1) a comprehensive evaluation of the molecular property prediction as regression task over various molecular features, (2) a computation on 4 spectral metrics on each type of features together with a performance-feature correlation analysis, (3) an ablation test on the robustness of spectral metrics, and (4) a study of truncated threshold for truncated Kernel Ridge Regression, where a small portion of remaining molecular features ensure 95\% of original performance.

\subsection{Regression Performance} \label{subsec:regression_performance}

\begin{table*}[t!]
\centering
\caption{
Comparison of spectral metrics and \textbf{MAE} obtained from KRR using different molecular representations. Spectral metrics are reported for kernels with hyperparameters tuned on the $C_V$ property. 
The \textcolor{blue}{best} and \textcolor{red}{second-best} MAE values for each property are highlighted in \textcolor{blue}{blue} and \textcolor{red}{red}, respectively. The four spectral metrics quantify the richness of the kernel spectrum (direction indicated by arrows). 
}
\resizebox{\textwidth}{!}{%
\normalsize
\begin{tabular}{|
>{\small}c|
>{\footnotesize}c||
c|c|c|c||
*{7}{c|}
}
\hline
\multirow{2}{*}{\textbf{Mol. Rep.}} &
\multirow{2}{*}{\textbf{Kernel}} &
\multirow{2}{*}{$\bm{\alpha}\,\downarrow$} &
\multirow{2}{*}{\textbf{SSE} $\uparrow$} &
\multirow{2}{*}{\textbf{ID} $\uparrow$} &
\multirow{2}{*}{\textbf{SR} $\uparrow$} &
\multicolumn{7}{c|}{\textbf{MAE} $\downarrow$} \\
\cline{7-13}
 & & & & & &
 Gap (eV) & $C_V$ (cal/molK) & $\Delta H$ (eV) & $U_0$ (eV) & $U_{298}$ (eV) & $G$ (eV) & ZPVE (eV) \\
\hline
\multirow{13}{*}{ECFP6} & Tanimoto & 0.71 & 1699.71 & 13.71 & 1.30 & \textcolor{red}{0.40\underline{6}} & \textcolor{blue}{1.49\underline{9}} & \textcolor{blue}{415.975} & \textcolor{blue}{415.982} & \textcolor{blue}{415.975} & \textcolor{blue}{415.992} & \textcolor{blue}{0.25\underline{3}} \\
 & Dice & 0.79 & 432.12 & 7.57 & 1.24 & 0.46\underline{6} & 1.56\underline{7} & 431.930 & 431.936 & 431.930 & 431.946 & 0.27\underline{2} \\
 & Otsuka & 0.79 & 429.57 & 7.57 & 1.24 & 0.48\underline{4} & 1.64\underline{8} & 458.947 & 458.954 & 458.947 & 458.964 & 0.28\underline{2} \\
 & Sogenfrei & \textbf{0.52} & \textbf{3121.53} & \textbf{40.52} & \textbf{2.05} & \textcolor{blue}{0.38\underline{1}} & \textcolor{red}{1.5\underline{57}} & \textcolor{red}{438.923} & \textcolor{red}{438.930} & \textcolor{red}{438.923} & \textcolor{red}{438.941} & \textcolor{red}{0.25\underline{4}} \\
 & Braun-Blanquet & 0.79 & 426.19 & 7.57 & 1.24 & 0.4\underline{99} & 1.7\underline{11} & 503.477 & 503.483 & 503.477 & 503.494 & 0.2\underline{93} \\
 & Faith & 0.83 & 1.25 & 1.02 & 1.00 & 0.48\underline{1} & 1.6\underline{23} & 454.478 & 454.484 & 454.478 & 454.494 & 0.28\underline{3} \\
 & Forbes & 0.79 & 432.12 & 7.57 & 1.24 & 0.50\underline{8} & 1.6\underline{95} & 473.178 & 473.185 & 473.178 & 473.196 & 0.29\underline{1} \\
 & Inner-Product & 0.79 & 426.19 & 7.57 & 1.24 & 0.50\underline{1} & 1.7\underline{28} & 496.016 & 496.022 & 496.016 & 496.032 & 0.29\underline{3} \\
 & Intersection & 0.83 & 1.13 & 1.01 & 1.00 & 0.48\underline{8} & 1.6\underline{56} & 465.531 & 465.537 & 465.531 & 465.547 & 0.28\underline{7} \\
 & Min-Max & 0.71 & 1699.71 & 13.71 & 1.30 & \textcolor{red}{0.40\underline{6}} & \textcolor{blue}{1.49\underline{9}} & \textcolor{blue}{415.975} & \textcolor{blue}{415.982} & \textcolor{blue}{415.975} & \textcolor{blue}{415.992} & \textcolor{blue}{0.25\underline{3}} \\
 & Rand & 0.83 & 1.13 & 1.01 & 1.00 & 0.48\underline{1} & 1.6\underline{23} & 454.477 & 454.483 & 454.477 & 454.493 & 0.28\underline{3} \\
 & Gaussian & 0.91 & 1.00 & 1.00 & 1.00 & 0.49\underline{6} & 1.6\underline{53} & 458.080 & 458.086 & 458.818 & 459.986 & 0.29\underline{1} \\
 & Laplacian & 0.89 & 1.00 & 1.00 & 1.00 & 0.4\underline{66} & 1.6\underline{26} & 454.330 & 454.336 & 454.330 & 454.345 & 0.28\underline{5} \\
\hline \hline
\multirow{3}{*}{SELFIESTED} & Gaussian & 4.79 & 1.00 & 1.00 & 1.00 & 0.43\underline{9} & \textcolor{blue}{0.46\underline{6}} & \textcolor{blue}{78.007} & \textcolor{blue}{78.011} & \textcolor{blue}{78.005} & \textcolor{blue}{78.012} & \textcolor{blue}{0.054} \\
 & Laplacian & \textbf{0.85} & 4.73 & 1.21 & 1.00 & 0.43\underline{9} & \textcolor{red}{0.57\underline{6}} & \textcolor{red}{131.868} & \textcolor{red}{131.869} & \textcolor{red}{131.868} & \textcolor{red}{131.872} & \textcolor{red}{0.08\underline{7}} \\
 & Linear & 1.96 & 4.95 & 1.39 & 1.01 & 0.46\underline{2} & 0.6\underline{67} & 159.911 & 159.927 & 159.917 & 159.937 & 0.09\underline{5} \\
\hdashline
\multirow{3}{*}{SELFormer} & Gaussian & 2.92 & 1.05 & 1.01 & 1.00 & 0.60\underline{4} & 1.40\underline{5} & 512.099 & 512.106 & 512.099 & 512.118 & 0.23\underline{5} \\
 & Laplacian & 0.91 & 3.34 & 1.12 & 1.00 & 0.65\underline{9} & 1.67\underline{1} & 553.634 & 553.641 & 553.634 & 553.653 & 0.30\underline{3} \\
 & Linear & 8.41 & 2.14 & 1.13 & 1.00 & 0.63\underline{9} & 1.5\underline{92} & 552.926 & 553.118 & 552.979 & 552.983 & 0.27\underline{5} \\
\hdashline
\multirow{3}{*}{MLT-BERT} & Gaussian & 4.06 & 1.00 & 1.00 & 1.00 & 0.50\underline{7} & 0.97\underline{7} & 213.632 & 213.636 & 213.632 & 213.642 & 0.173 \\
 & Laplacian & 1.06 & 4.84 & 1.27 & 1.00 & 0.60\underline{9} & 1.35\underline{0} & 328.146 & 328.151 & 328.146 & 328.161 & 0.24\underline{6} \\
 & Linear & 11.50 & 1.82 & 1.12 & 1.00 & 0.57\underline{9} & 1.2\underline{94} & 307.626 & 307.630 & 307.626 & 307.640 & 0.24\underline{0} \\
\hdashline
\multirow{3}{*}{ChemBERTa} & Gaussian & 4.74 & 1.00 & 1.00 & 1.00 & 0.46\underline{6} & 1.7\underline{01} & 488.439 & 488.445 & 488.439 & 488.455 & 0.29\underline{7} \\
 & Laplacian & 0.95 & 1.03 & 1.00 & 1.00 & 0.48\underline{3} & 1.7\underline{39} & 505.729 & 505.735 & 505.729 & 505.747 & 0.32\underline{6} \\
 & Linear & 7.11 & \textbf{8.58} & \textbf{1.76} & \textbf{1.05} & 0.52\underline{6} & 1.8\underline{66} & 539.814 & 539.982 & 539.717 & 540.051 & 0.32\underline{5} \\
\hdashline
\multirow{3}{*}{GROVER$_{\text{base}}$} & Gaussian & 3.15 & 1.00 & 1.00 & 1.00 & 0.36\underline{3} & 0.9\underline{96} & 261.153 & 261.169 & 261.168 & 261.187 & 0.113 \\
 & Laplacian & 0.89 & 4.27 & 1.20 & 1.00 & 0.36\underline{9} & 0.9\underline{91} & 292.025 & 292.031 & 292.025 & 292.040 & 0.12\underline{8} \\
 & Linear & 1.93 & 2.84 & 1.21 & 1.00 & 0.37\underline{2} & 1.0\underline{17} & 299.277 & 299.283 & 299.277 & 299.294 & 0.13\underline{0} \\
\hdashline
\multirow{3}{*}{GROVER$_{\text{large}}$} & Gaussian & 1.98 & 1.00 & 1.00 & 1.00 & \textcolor{blue}{0.35\underline{4}} & 0.93\underline{4} & 271.643 & 271.649 & 271.643 & 271.658 & 0.11\underline{9} \\
 & Laplacian & 0.88 & 5.66 & 1.26 & 1.00 & \textcolor{red}{0.35\underline{8}} & 0.93\underline{2} & 273.771 & 273.777 & 273.771 & 273.786 & 0.12\underline{0} \\
 & Linear & 1.84 & 2.97 & 1.22 & 1.01 & 0.36\underline{1} & 0.93\underline{1} & 272.434 & 272.439 & 272.434 & 272.449 & 0.11\underline{9} \\
\hline \hline
\multirow{3}{*}{CM} & Gaussian & 1.72 & 4.91 & 1.32 & 1.00 & 0.6\underline{62} & 0.55\underline{8} & \textcolor{red}{0.81\underline{4}} & \textcolor{red}{0.81\underline{4}} & \textcolor{red}{0.81\underline{4}} & \textcolor{red}{0.81\underline{3}} & 0.023 \\
 & Laplacian & 1.54 & 1.58 & 1.05 & 1.00 & 0.46\underline{4} & 0.34\underline{1} & 4.\underline{110} & 4.\underline{110} & 4.\underline{110} & 4.\underline{110} & 0.013 \\
 & Linear & 9.24 & 1.75 & 1.08 & 1.00 & 0.77\underline{2} & 0.97\underline{3} & 2.\underline{639} & 2.\underline{639} & 2.\underline{639} & 2.\underline{646} & 0.04\underline{0} \\
\hdashline
\multirow{3}{*}{BOB} & Gaussian & 3.13 & \textbf{7.24} & \textbf{1.81} & \textbf{1.10} & 0.45\underline{6} & 0.5\underline{18} & \textcolor{blue}{0.\underline{755}} & \textcolor{blue}{0.\underline{753}} & \textcolor{blue}{0.\underline{755}} & \textcolor{blue}{0.\underline{754}} & 0.01\underline{8} \\
 & Laplacian & 1.54 & 1.65 & 1.08 & 1.00 & 0.31\underline{2} & 0.20\underline{7} & 4.\underline{476} & 4.\underline{476} & 4.\underline{476} & 4.\underline{477} & \textcolor{red}{0.009} \\
 & Linear & 10.84 & 3.20 & 1.36 & 1.03 & 0.6\underline{85} & 0.7\underline{70} & 0.\underline{835} & 0.\underline{833} & 0.\underline{835} & 0.\underline{830} & 0.025 \\
\hdashline
\multirow{3}{*}{SLATM} & Gaussian & 3.64 & 1.00 & 1.00 & 1.00 & \textcolor{blue}{0.22\underline{0}} & \textcolor{blue}{0.09\underline{3}} & 2.\underline{509} & 2.\underline{508} & 2.\underline{508} & 2.\underline{508} & \textcolor{blue}{0.004} \\
 & Laplacian & \textbf{1.35} & 1.18 & 1.02 & 1.00 & \textcolor{red}{0.22\underline{9}} & \textcolor{red}{0.14\underline{0}} & 22.340 & 22.340 & 22.340 & 22.341 & \textcolor{red}{0.009} \\
 & Linear & 4.43 & 1.79 & 1.13 & 1.00 & 0.40\underline{3} & 0.17\underline{8} & 3.971 & 3.971 & 3.971 & 3.970 & \textcolor{blue}{0.004} \\
\hline \hline
\multirow{2}{*}{SOAP} & Gaussian & 4.50 & 1.18 & 1.03 & 1.00 & 0.289 & \textcolor{red}{0.08\underline{7}} & \textcolor{red}{0.06\underline{3}} & \textcolor{red}{0.06\underline{3}} & \textcolor{red}{0.06\underline{3}} & \textcolor{red}{0.06\underline{3}} & \textcolor{blue}{0.003} \\
 & Laplacian & 1.30 & 1.53 & 1.07 & 1.00 & 0.33\underline{2} & 0.14\underline{2} & 0.13\underline{7} & 0.13\underline{7} & 0.13\underline{7} & 0.13\underline{6} & 0.005 \\
\hdashline
\multirow{2}{*}{FCHL19} & Gaussian & 3.13 & 1.21 & 1.03 & 1.00 & \textcolor{blue}{0.27\underline{1}} & \textcolor{blue}{0.08\underline{1}} & \textcolor{blue}{0.05\underline{5}} & \textcolor{blue}{0.05\underline{5}} & \textcolor{blue}{0.05\underline{5}} & \textcolor{blue}{0.05\underline{5}} & \textcolor{blue}{0.003} \\
 & Laplacian & \textbf{1.29} & 1.78 & 1.10 & 1.00 & \textcolor{red}{0.28\underline{8}} & 0.116 & 0.13\underline{8} & 0.13\underline{7} & 0.13\underline{8} & 0.13\underline{7} & \textcolor{red}{0.004} \\
\hdashline
\multirow{2}{*}{ACSF} & Gaussian & 5.39 & 1.19 & 1.03 & 1.00 & 0.32\underline{9} & 0.14\underline{9} & 0.16\underline{2} & 0.16\underline{2} & 0.16\underline{2} & 0.16\underline{1} & 0.005 \\
 & Laplacian & 1.42 & \textbf{2.53} & \textbf{1.19} & \textbf{1.00} & 0.37\underline{2} & 0.18\underline{7} & 0.23\underline{0} & 0.23\underline{0} & 0.23\underline{0} & 0.22\underline{8} & 0.006 \\
\hline \hline
\end{tabular}%
}
\label{tab:kernel_metrics_mae}
\end{table*}

\paragraph{Datasets} In this paper, we consider the molecular property prediction as regression task on two types of molecular datasets: (a) \textbf{QM9} dataset \cite{Ramakrishnan2014} with 7 properties/labels; (b) ESOL, FreeSolv, and Lipophilicity \textbf{MoleculeNet benchmark} with 1 property each.

\paragraph{Molecular Features as Models}
Our experiment involves kernel features implicitly defined by 13 ECFP-based kernels, 3 3D-global kernels and 3 3D-local kernels, and pre-trained features explicitly extracted by 4 transformer-based encoders and 1 GNN-encoder:
\begin{enumerate}
    \setlength{\itemsep}{-2pt}
    \item \textbf{ECFP kernels}: Tanimoto, Dice, Otsuka, Sogenfrei, Braun-Blanquet, Faith, Forbes, Inner-Product, Intersection, Min-Max, and Rand; standard Gaussian and Laplacian kernels (directly to these ECFP representations as one-hot vectors). See Section \ref{sec:ecfp_kernels} for details.
    
    \item \textbf{Pre-trained features:} SELFIESTED, SELFormer, ChemBERTa, and MLT-BERT (transformer-based with string-based inputs), GROVER (GNN-based). See Section \ref{sec:llm_rep} for details.
    
    \item \textbf{Global 3D features:} The Coulomb matrix (CM), bag of bonds (BOB), and SLATM equipped with Gaussian, Laplacian, and linear kernels. See Section \ref{sec:global_3d}.
    
    \item \textbf{Local 3D features:} local SOAP \cite{soap:bartok:2013}, FCHL19 \cite{fchl19:anatole:2020}, and ACSF \cite{acsf:Behler:2011} equipped with Gaussian and Laplacian kernels. See Section~\ref{sec:local_kernels}.
\end{enumerate}

\paragraph{Training}
For kernel features, we use standard Kernel Ridge Regression to perform the regression task. For pre-trained features, we perform our novel \emph{kernel probing} method: we fix a reproducing kernel $k$ defined on $\R^P$ and run KRR on $k$ taking the pre-trained features $\mathbf{h}_i$'s as inputs. In this paper, we choose $k$ to be Gaussian, Laplacian, or linear. Note that kernel probing with a linear kernel is equivalent to linear probing with L2-regularization.

We report the test error as the Mean Absolute Error (MAE) of the regressors trained QM9 on the right side of the Table \ref{tab:kernel_metrics_mae}, with a train size of $N_\text{train} = 5{,}000$ and test size of $N_\text{test} = 10{,}000$. Please refer to Appendix \ref{additional_results} for the results of other datasets, the ablation test on $N_\text{train}$ and the details of the experimental designs.

\subsection{Correlation between Feature Spectrum and Performance}

\textbf{Spectral Metrics} \quad
For each type of molecular feature, we compute the empirical eigenspectrum ${\mu_1, \dots, \mu_n}$ of its Gram matrix $\mK=\frac{1}{N}\mathbf{H}^\top\mathbf{H}$ or $\frac1N\bm{\Phi}^\top\bm{\Phi}$ in $\R^{N\times N}$. Then we evaluate 4 spectral metrics to quantify its richness: polynomial decay rate ($\alpha \downarrow$), spectral Shannon entropy (SSE $\uparrow$), intrinsic dimension (ID $\uparrow$), and stable rank (SR $\uparrow$).
The arrows indicate the direction corresponding to richer spectral features, with formal definitions provided in Section~\ref{metric}. 

In our experiments, we find that most kernel spectra are dominated by a single or a few large leading eigenvalues (e.g., $\mu_1$), followed by a sharply decaying tail (see Fig.~\ref{fig:eigenspectrum} and figures in Section \ref{sec:eigenspectrum_sm} for QM9). Motivated by this structure, SSE, ID, and SR are computed from the full spectrum, while the power-law exponent $\alpha$ is estimated by fitting $\log \mu_i$ vs.\ $\log i$ over the top 50\% of eigenvalues, excluding the noise-dominated tail.

\begin{figure}[ht]
    \centering
    \includegraphics[width=0.3\textwidth]{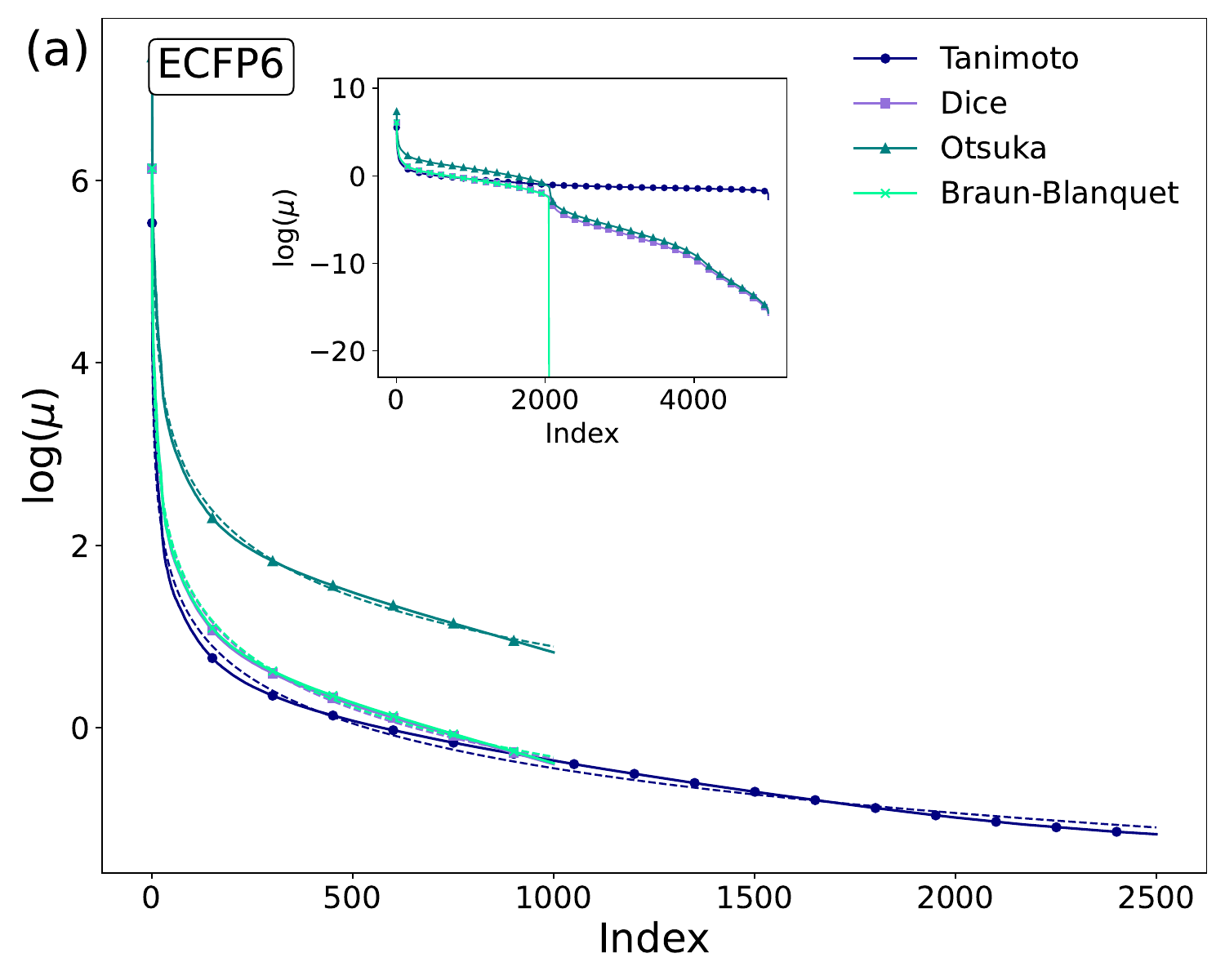}
    \includegraphics[width=0.3\textwidth]{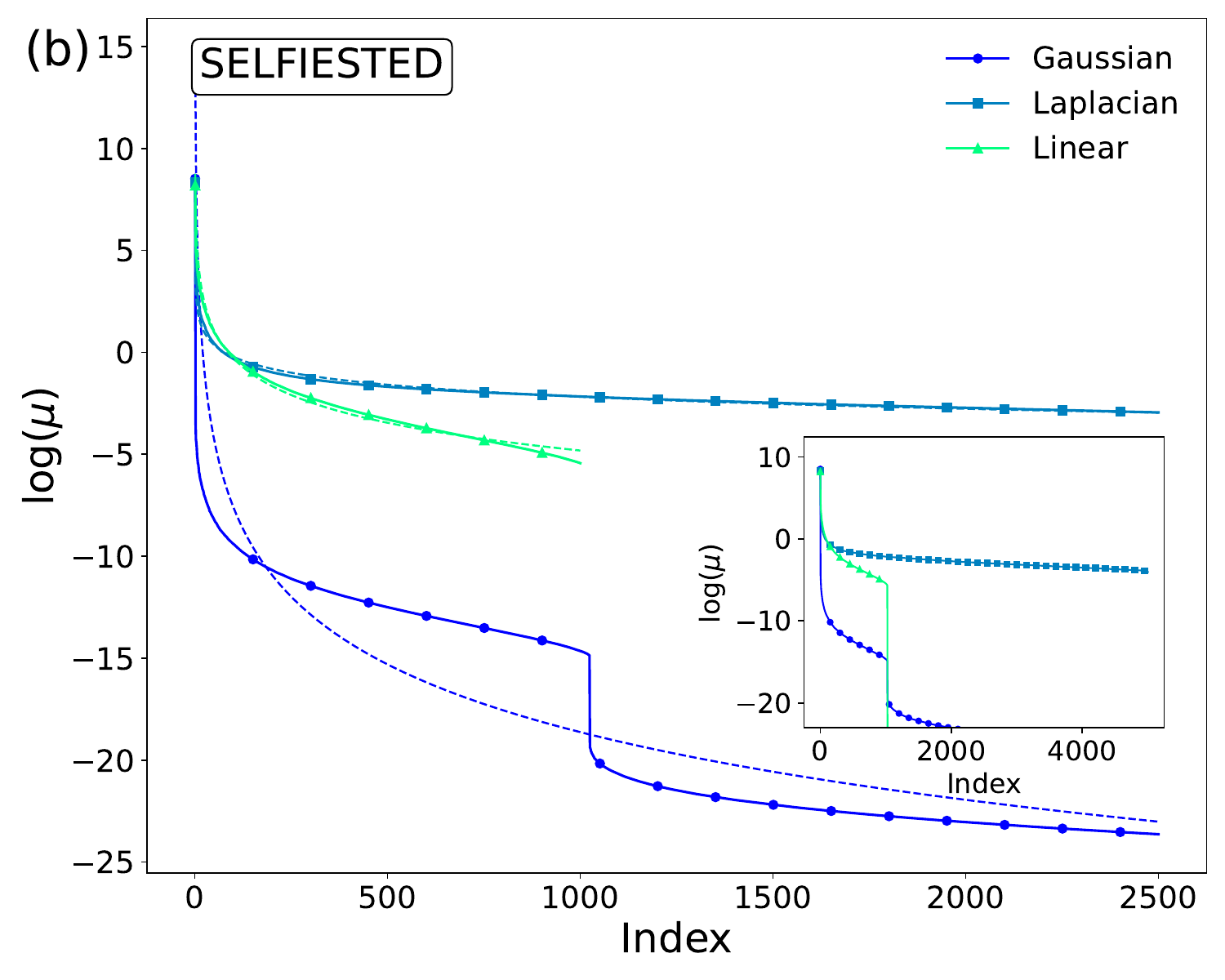}
    \includegraphics[width=0.3\textwidth]{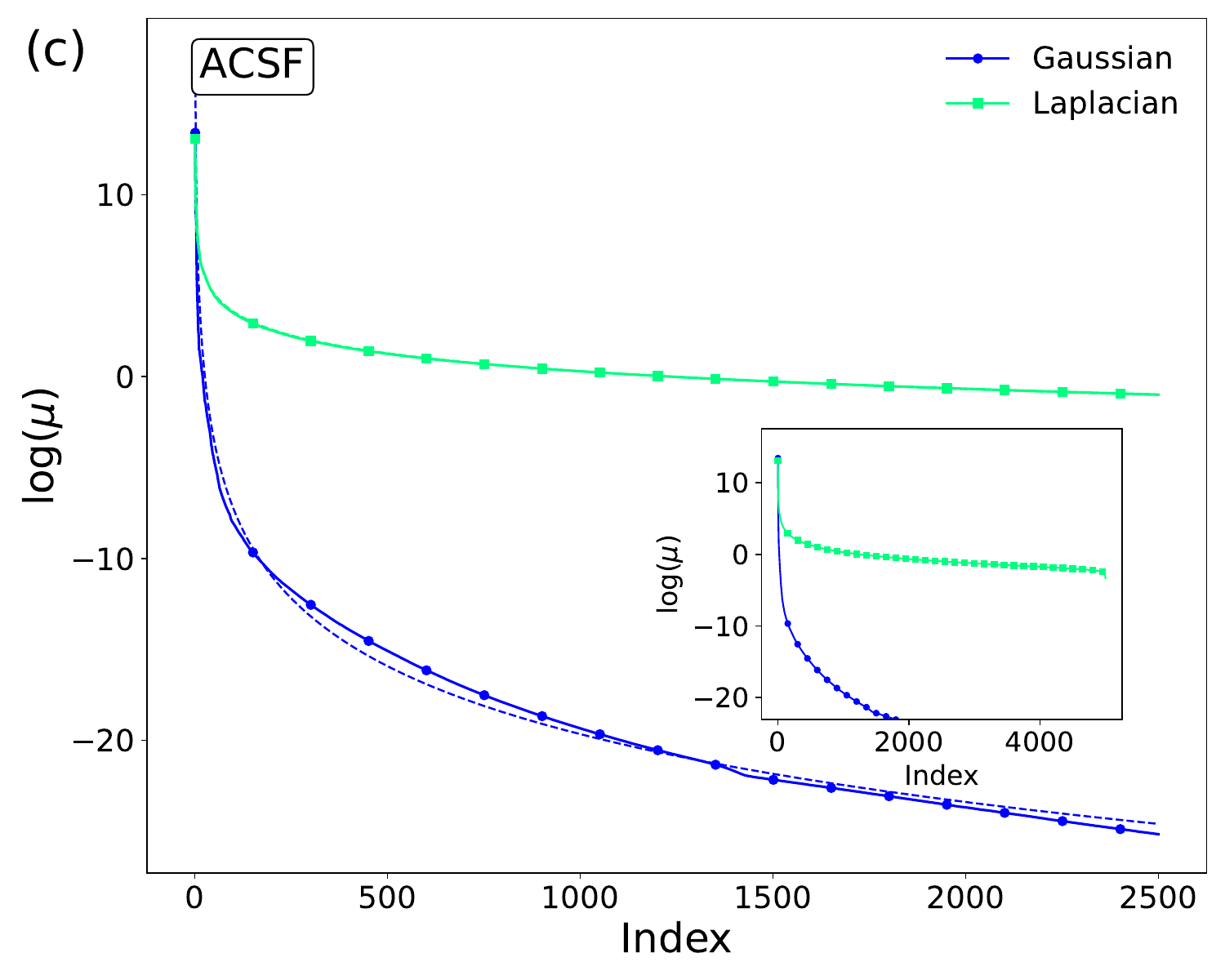}
    \caption{Kernel eigenvalue spectra with insets highlighting that nearly half of the eigenvalues are close to zero (main plots) for different molecular representations. Results are shown for (a) ECFPs, (b) SELFIESTED, and (c) local 3D descriptor-based kernels.} 
    \label{fig:eigenspectrum}
\end{figure}

The spectral metrics are reported in the middle of Tables~\ref{tab:kernel_metrics_mae} for the QM9 dataset. Results for the ESOL, FreeSolv, and Lipophilicity datasets are presented separately in Appendix \ref{additional_results}.


\textbf{Correlation Analysis} 
To investigate whether spectral richness translates to improved predictive accuracy, we evaluate the relationships between our four spectral metrics and the average $R^2$ score across the seven QM9 molecular properties (visualized in Fig.~\ref{fig:correlation}). These relationships are quantified using Pearson correlation coefficients ($\hat{r}$) along with their 95\% confidence intervals, as detailed in Table~\ref{tab:correlation}. Because a smaller power-law decay parameter $\alpha$ signifies a richer spectrum, we report the correlation with $-\alpha$ so that a positive $\hat{r}$ consistently indicates that greater spectral richness aligns with better predictive performance.

Our empirical findings reveal that the common self-supervised learning heuristic—``richer spectra yield better performance''—fails to hold universally. Across the board, correlations are generally weak, frequently statistically insignificant, and many confidence intervals overlap zero. However, distinct behaviors emerge when analyzing specific representation types:
\begin{itemize}[leftmargin=*, itemsep=-0.5pt]
    \item \textbf{ECFP kernels:} Correlations are generally moderate and positive. However, only the spectral Shannon entropy (SSE) achieves statistical significance, while the remaining metrics yield confidence intervals spanning zero.
    \item \textbf{Pre-trained features:} Transformer-based kernels exhibit mixed, statistically inconclusive behavior. The decay rate ($-\alpha$) displays a weak positive correlation, whereas SSE, intrinsic dimension (ID), and stable rank (SR) show weak negative trends.
    \item \textbf{Global 3D features:} These present a notable divergence. The power-law decay rate ($-\alpha$) exhibits a strong, statistically significant positive correlation with performance, whereas SSE, ID, and SR remain weakly negative and non-significant.
    \item \textbf{Local 3D features:} In direct contrast to the SSL heuristic, local 3D kernels show consistently negative correlations across SSE, ID, and SR. This suggests that increased spectral richness does not inherently improve accuracy and may even prove detrimental, though wide confidence intervals leave these specific trends statistically inconclusive.
\end{itemize}

In summary, \textbf{spectral richness alone is an unreliable predictor of downstream performance}; its practical impact hinges critically on the underlying nature of the molecular representation.

\begin{figure}[ht]
    \centering
    \includegraphics[width=0.99\linewidth]{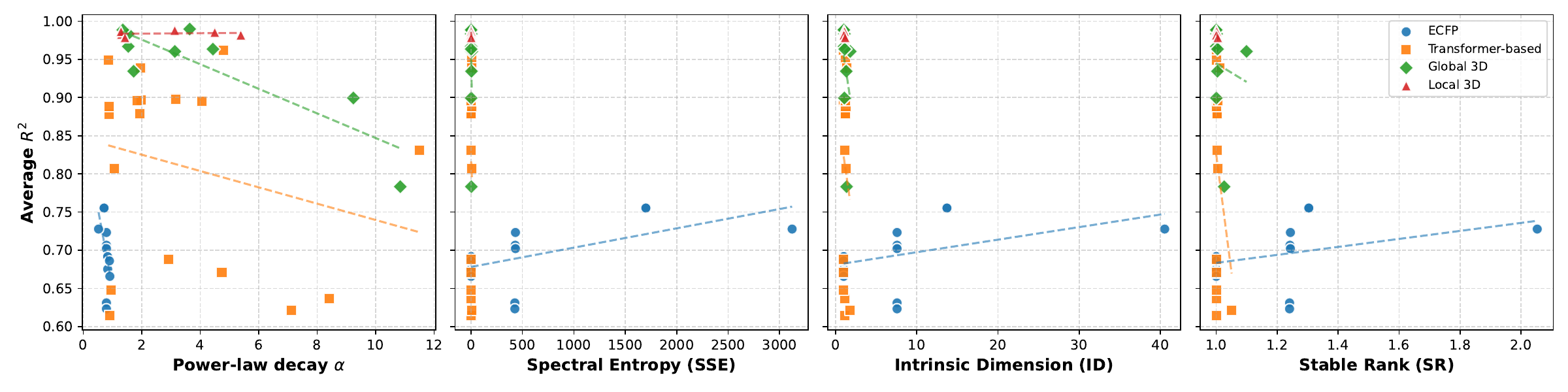}
    \caption{Correlation between spectral metrics and average \textbf{R$^2$} across molecular kernel categories. Dotted lines indicate the best-fit linear trend for each category.}
    \label{fig:correlation}
\end{figure}

\begin{table}[ht]
\centering
\caption{Pearson correlation coefficients ($\hat{r}$) with 95\% confidence intervals (CI) among the spectral metrics $-\alpha$, SSE, ID, and SR, which align with the notion of spectral richness, and average \textbf{R$^2$} across molecular kernel categories. Correlations whose 95\% CI excludes zero are shown in \textbf{bold}.}    \label{tab:correlation}
\begin{tabular}{l c c c c }
\toprule
\textbf{Mol. Rep.} & $-\alpha$ $\uparrow$ & SSE $\uparrow$ & ID $\uparrow$ & SR $\uparrow$ \\
\midrule
ECFP & 0.501 & \textbf{0.585} & 0.430 & 0.362 \\
  & \scriptsize{[-0.069, 0.825]} & \scriptsize{[0.050, 0.859]} & \scriptsize{[-0.158, 0.793]} & \scriptsize{[-0.236, 0.761]} \\
\midrule
Transformer-based & 0.256 & -0.047 & -0.113 & -0.290 \\
  & \scriptsize{[-0.239, 0.646]} & \scriptsize{[-0.503, 0.429]} & \scriptsize{[-0.551, 0.374]} & \scriptsize{[-0.667, 0.204]} \\
\midrule
Global 3D & \textbf{0.863} & -0.195 & -0.281 & -0.121 \\
  & \scriptsize{[0.465, 0.971]} & \scriptsize{[-0.761, 0.539]} & \scriptsize{[-0.797, 0.471]} & \scriptsize{[-0.727, 0.591]} \\
\midrule
Local 3D & -0.124 & -0.646 & -0.655 & -0.716 \\
  & \scriptsize{[-0.850, 0.764]} & \scriptsize{[-0.956, 0.348]} & \scriptsize{[-0.958, 0.334]} & \scriptsize{[-0.966, 0.228]} \\
\bottomrule
\end{tabular}
\end{table}



\subsection{Ablation Study on Feature Spectrum} 
To understand how specific features shape the whole feature spectrum, we conducted a comprehensive feature ablation study across four kernel families (Tanimoto, Dice, Laplacian, and Gaussian) using three representations: ECFP, BOB, and SELFIESTED. For the sparse, rule-based ECFP and BOB representations, we implemented a frequency-weighted ablation scheme, removing $\tilde{P}$ features with a probability proportional to their occurrence dataset-wide. This ensures that the perturbations disrupt frequent, chemically meaningful molecular fragments. For the dense, non-interpretable SELFIESTED embeddings, we uniformly and randomly removed embedding dimensions as a comparative baseline.

As illustrated in Figs.~\ref{fig:ablation_ecfp} and \ref{fig:ablation_bob_selfiested}, the structural response of the eigenvalue spectrum to ablation varies significantly by representation kind. ECFP-based Tanimoto and Dice kernels exhibit remarkable robustness to feature loss, maintaining stable spectra even when a quarter of the features ($\tilde{P}=512$) are removed. Continuous kernels (Gaussian and Laplacian) built on ECFP and BOB are highly sensitive to the length-scale parameter $\ell$—where larger values accelerate eigenvalue decay—but show localized spectral shifts under ablation. Meanwhile, the dense SELFIESTED embeddings prove to be the most robust; their leading eigenvalues remain stable and their spectra decay smoothly even after removing hundreds of dimensions, confirming that optimized latent spaces successfully distribute chemical information across highly correlated dimensions.

\begin{figure}[htpb]
    \centering
    \includegraphics[width=0.99\linewidth]{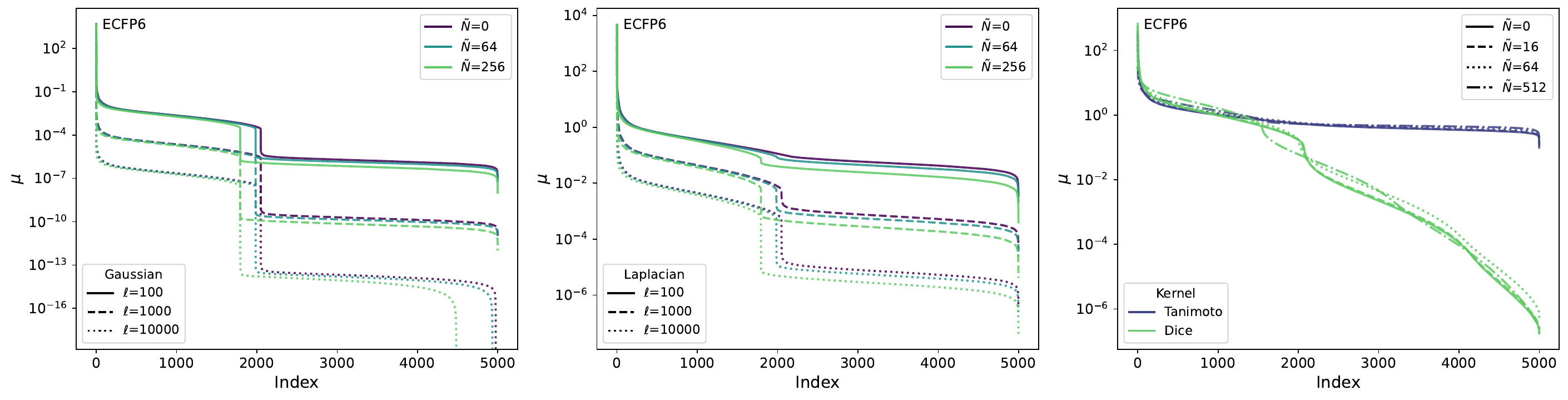}
    \caption{Kernel eigenvalue spectra for (a) Gaussian, (b) Laplace, and (c) Tanimoto and Dice kernels using ECFPs across various ablation levels ($\tilde{P}$) and length-scales ($\ell$). Higher $\tilde{P}$ contracts the spectra, signaling a reduced effective dimensionality.}
    \label{fig:ablation_ecfp}
\end{figure}

\begin{figure}[htpb]
    \centering
    \includegraphics[width=0.99\linewidth]{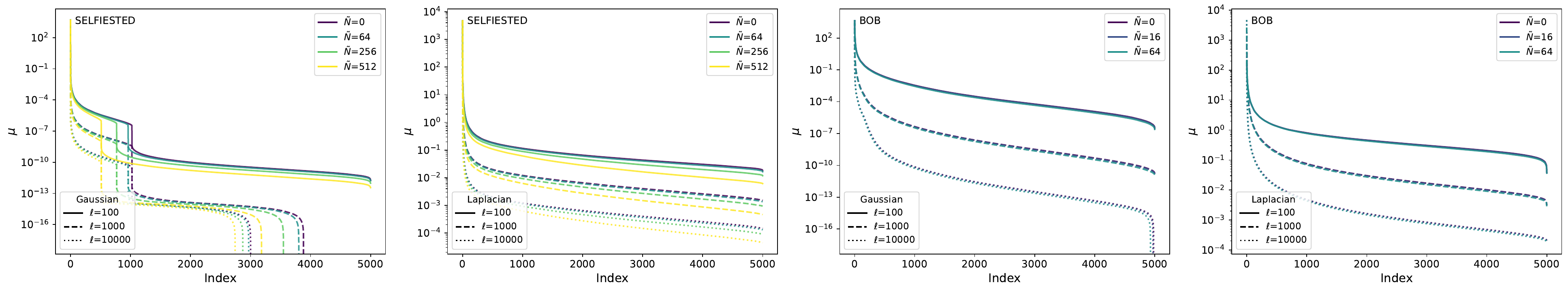}
    \caption{Kernel eigenvalue spectra for (a–b) SELFIESTED and (c–d) BOB representations using Laplace and Gaussian kernels under feature ablation ($\tilde{P}$) and varied length-scales ($\ell$).}
    \label{fig:ablation_bob_selfiested}
\end{figure}

Crucially, our spectral metrics serve as robust and informative indicators of how these ablations reshape the underlying kernel space (Tables~\ref{tab:ablation_mae_ecfp} and \ref{tab:ablation_mae_other}). For ECFP-based kernels, SSE, ID, and SR increase sharply with $\tilde{P}$, mirroring a near-twofold increase in downstream MAE across all properties. This synchronous shift confirms that the metrics successfully capture the degradation of essential predictive features. 

Conversely, for continuous Gaussian and Laplacian kernels across all representations, SSE, ID, and SR remain close to 1 regardless of $\tilde{P}$, whereas the power-law decay exponent $\alpha$ uniquely captures the smoothing effect of the length scale $\ell$. Most notably, for the BOB representation paired with a Laplacian kernel ($\ell=100$), increasing ablation ($\tilde{P}$) actually reduces SSE, ID, and SR while simultaneously \emph{improving} downstream MAE. This exceptional case highlights the diagnostic power of our metrics: they can successfully detect when ablation prunes redundant noise from a kernel spectrum rather than stripping away useful representation capacity.
\subsection{Truncated Kernel Ridge Regression} 

Our findings show that the required truncation threshold ($r/N_\text{train}$) varies substantially across representation types. For global 3D (CM, BOB, SLATM) and local 3D (SOAP, FCHL19, ACSF) representations, retaining fewer than 1\% of the eigenvalues is sufficient to recover 95\% of the performance for energy-related properties ($U_0$, $U_{298}$, $\Delta H$, and $G$), while fewer than 10\% suffices for $C_V$ and $ZPVE$; see Fig.~\ref{fig:TKRR_R2}. These results indicate that the leading eigenvalues capture nearly all of the informative spectral content for these representations. Transformer-based representations require a moderately larger fraction of eigenvalues (3--28\%). In contrast, fingerprint-based kernels show the largest variability: some (e.g., Braun--Blanquet, Forbes, Inner Product) achieve 95\% recovery with fewer than 3\% of eigenvalues, while others (e.g., Tanimoto, Sogenfrei, Min--Max) require nearly the full spectrum for certain properties, indicating a less compact spectral structure; see Fig.~\ref{fig:TKRR_R2}.

\begin{figure}[ht]
    \centering
    \includegraphics[width=0.99\linewidth]{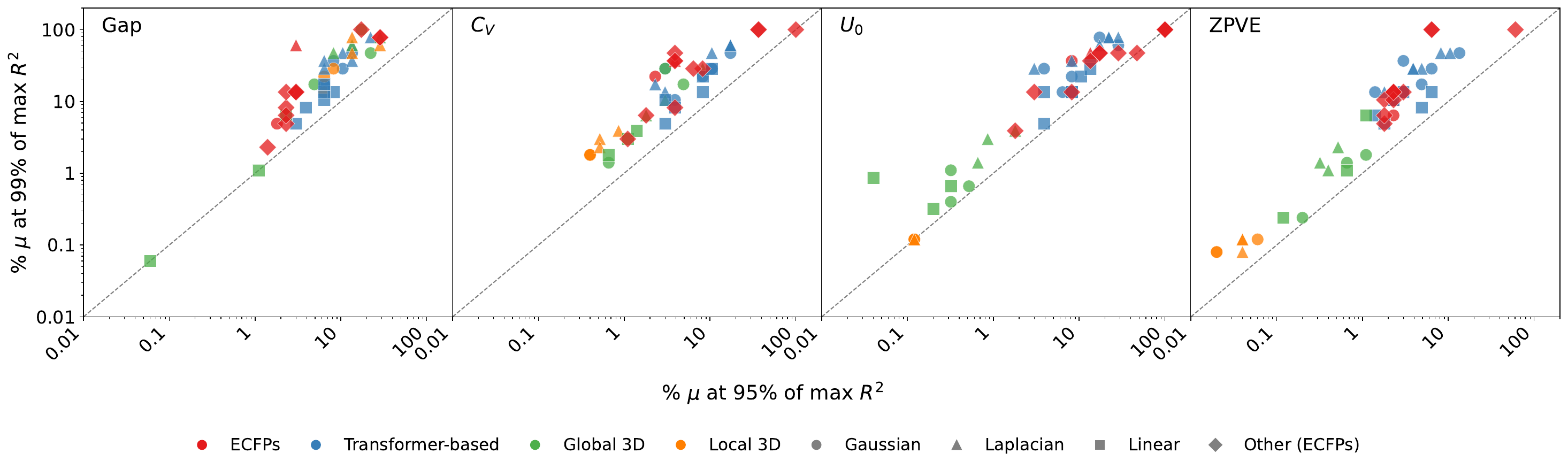}
    \caption{Eigenvalue truncation thresholds to reach 95\% and 99\% of maximum $R^2$.}
    \label{fig:TKRR_R2}
\end{figure} \vspace{-10pt}

\section{Discussion} \label{discussion}
In this section, we discuss the main implications of our empirical findings for (i) kernel theory and self-supervised learning, (ii) practical molecular-chemistry practice, and (iii) limitations and avenues for future work.

\subsection{Insights for Kernel Theory and Self-Supervised Learning}

\textbf{Fingerprint and Transformer Kernels} \,\,
ECFP kernels are the only category showing consistently positive correlations between spectral richness and predictive performance, whereas transformer-based kernels show mixed behavior. Within the fingerprint family, ECFP6 consistently yields richer spectra than ECFP4 across all kernels and correspondingly achieves lower MAEs, with the Gap property being a notable exception, where ECFP4 slightly outperforms ECFP6.
Surprisingly, the Sogenfrei kernel has the best spectral metrics values and the second-lowest MAEs, except for Gap. Contrary to the Tanimoto kernel,  the preferred kernel in cheminformatics, outperforms all other ECFP-based kernels and has the second-highest spectral metrics. In this narrow setting, the common SSL heuristic—“richer spectra yield better performance”—appears to hold. 
Our spectral analysis provides a principled explanation: the key difference lies in the richness of the spectral tail, with Tanimoto retaining more information in the lower-ranked eigenvectors (see Fig.~\ref{fig:reg.vs.noreg}). This might suggest that ECFP-based kernels, being hand-designed, may be fundamentally different from pretrained-derived features: they already encode domain knowledge in the representation itself, so most relevant information is concentrated in the top eigenvectors, making spectral richness less decisive.

Transformer-based kernels, where representations are generated from models pretrained on large chemical corpora and then evaluated on unseen QM9 tasks in a setup analogous to SSL, provide a weak and inconsistent support for the heuristic that greater spectral improves predictive performance. Our results show that only the spectral decay coefficient ($-\alpha$) has a weak positive correlation, while SSE, ID and SR all show weak negative correlations. Moreover, the confidence intervals for all four coefficients span zero, indicating no statistically significant relationship. Consistent with this observation, the kernels with the lowest MAE does not come from representations with high spectral richness (see Table~\ref{tab:kernel_metrics_mae}). Furthermore, among the transformer-based models, SELFormer shows the lowest spectral metric values, while ChemBERTa achieves the highest spectral richness.

\textbf{3D Kernels} \,\,
For 3D global and local kernels, the evidence indicates that both kernel families exhibit predominantly negative correlations. Only in the case of global kernels does $-\alpha$ show a positive correlation. As shown in Table~\ref{tab:kernel_metrics_mae}, FCHL19 outperforms SOAP and ASCF in terms of MAE; however, it does not correspond to the kernel with the highest spectral metric values. Instead, ACSF attains the largest spectral metric values, further challenging a consistent relationship between spectral richness and predictive accuracy. 
Within global 3D kernels, BOB with a Gaussian kernel shows the largest SSE, ID and SR values and achieves the best performance for the energy-related properties, whereas SLATM with a Gaussian kernel attains the highest accuracy on Gap, $C_V$, and ZPVE.
\begin{figure}[htbp]
    \centering
    \includegraphics[width=0.485\textwidth]{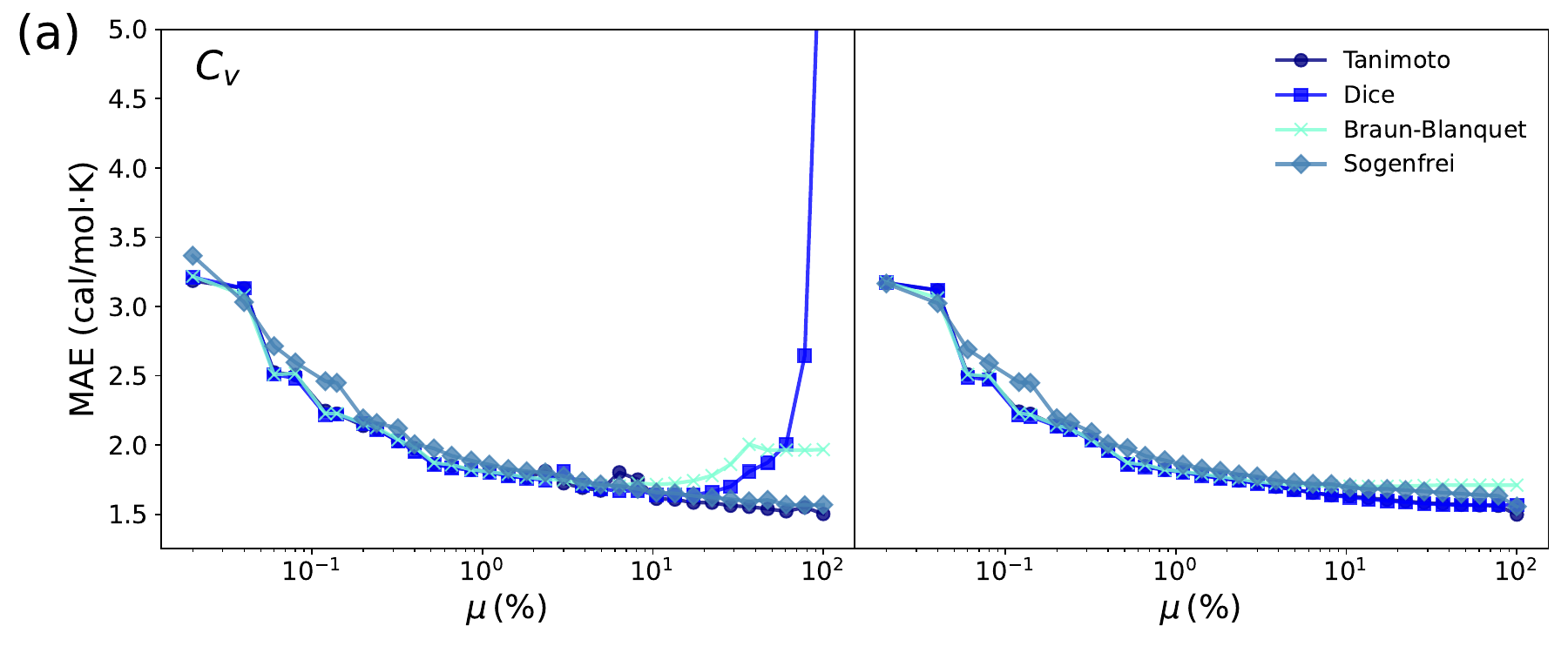}
    \includegraphics[width=0.485\textwidth]{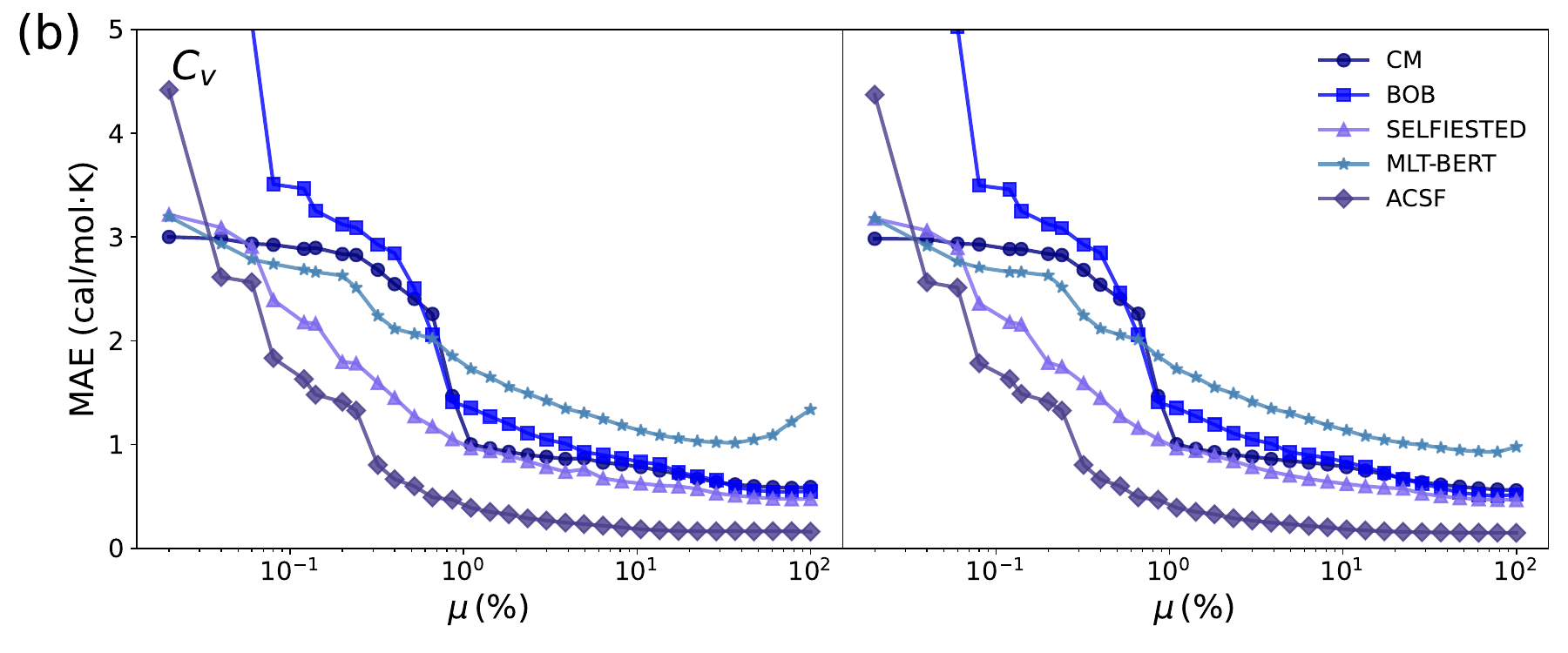}
    \caption{MAE for the heat capacity ($C_V$) property as a function of truncation level ($\mu(\%$) for  (a) selected ECFP-based kernels and (b) four various global (CM, BOB, SELFIESTED, MLT-BERT) kernels and a single local (ACSF) kernel, all with a Gaussian kernel. Left sub-panel: without regularization; right sub-panel: with regularization.  }
    \label{fig:reg.vs.noreg}
\end{figure}

\textbf{Tikhonov Regularization versus Truncation} \,\,
Tikhonov regularization has long been a standard technique in ML to mitigate overfitting to label noise. In kernel methods, it works by penalizing the use of high-frequency eigenfunctions in fitting the data. 
On the other hand, in a linear regression setting \cite{hansen1987} showed that truncation achieves a similar effect by explicitly discarding the tail of the spectrum, thereby removing high-frequency components from the hypothesis space. However, to the best of our knowledge, our paper is the first to discover the same effect in molecular kernels.
As shown in Fig.~\ref{fig:reg.vs.noreg}, the performance of the best ridgeless truncated KRR (left panel) is comparable to that of the fully regularized KRR (right panel). This observation provides a possible explanation for why richer spectra may sometimes harm generalization: additional eigenfunctions in the tail can facilitate overfitting rather than improve predictive accuracy, and any regularization to avoid overfitting would harm the accuracy. Notably, this phenomenon is not unique to ECFP-based kernels, but is also observed across other kernel categories (see Section \ref{sec:trun_vs_notrunc_sm} for additional plots).

\subsection{Insights for Molecular Chemistry}

\textbf{First Comprehensive Results} \,\,
Pretrained molecular embedding models have recently attracted significant interest in chemistry, particularly for small molecules, as they are increasingly adopted for supervised learning tasks. Related work has applied pretrained embeddings in a kernel framework for proteins \cite{GAUCHE:griffiths:2023,kermut:2024}. However, these efforts were limited to kernel construction without further spectral analysis, such as ours. In contrast, this work is the first to explore a kernel-based framework built upon pretrained molecular embedding models for chemistry while also analyzing their spectral characteristics. \vspace{-10pt}

\paragraph{Transformer-based Representations} Previous work has mainly applied linear or MLP-based regression to transformer-derived molecular representations \cite{pretrainedmolecularembedding:2025}, motivated by the high dimensionality of embeddings, where kernel matrices often resemble their linearization—a weighted sum of the covariance matrix, identity, and a rank-one term \cite{el_Karoui2010spectrum}. In contrast, we show that KRR with a Gaussian kernel outperforms the linear baseline on QM9 dataset, indicating that higher-order terms capture additional information beyond linear covariance. Notably, GROVER$_{\text{large}}$ with a Gaussian kernel achieves the best performance on the Gap property, whereas SELFIESTED performs best on the other properties. This suggests an alternative way to evaluate SSL models—via spectral metrics derived from their kernel matrices—which we leave for future work. \vspace{-10pt}

\paragraph{MoleculeNet Benchmarks} In their benchmark \cite{moleculenet:2018}, KRR with ECFP4 and a Gaussian kernel achieved an RMSE of $\sim$0.9. Using a broader set of representations and kernels (Table~\ref{tab:molenet_kernel_metrics_mae}), we find that GROVER$_{\text{large}}$ with a Gaussian kernel achieves the best performance on Lipophilicity, while SELFIESTED combined with a Gaussian or Laplacian kernel performs best on FreeSolv and ESOL, respectively. Among fingerprint-based kernels, ECFP4 with the Tanimoto kernel consistently yields the strongest performance across all three datasets. Overall, several of our kernel-based models outperform the MoleculeNet KRR baseline and remain competitive with their neural models, with results reported in terms of RMSE  (GCNN\cite{moleculenet:2018}: 0.67$\pm$0.04; LSTM-attention: 0.60$\pm$0.04).

As in QM9, no single kernel type consistently dominates across transformer-based representations on the MoleculeNet benchmarks: Gaussian kernels perform better for GROVER and SELFIESTED, while Laplacian kernels are slightly better for SELFormer, MLT-BERT, and ChemBERTa. However, in neither case do the spectral metrics exhibit a clear correlation with predictive performance. Across Lipophilicity, ESOL, and FreeSolv, ECFP6 with Tanimoto consistently shows the richest and slowest-decaying eigenvalue spectrum (Table~\ref{tab:molenet_kernel_metrics_mae}), yet does not achieve the best predictive accuracy. Conversely, transformer-based representations such as GROVER and SELFIESTED with faster spectral decay lead to the best KRR performance on these benchmarks. Thus, our results confirm the same trend seen in QM9: richer spectral structure does not imply better kernel regression performance, and alignment between representation geometry and the kernel function is more important than embedding capacity.\vspace{-10pt}
\paragraph{3D Descriptors}
The comparison between global and local kernels, whose representations are built on Cartesian coordinates, has sparked the latter development in molecular kernels \cite{molkernels:review:2025}. However, we found that global 3D kernels are more susceptible to drastic effects in their accuracy when hyperparameter search is found to be suboptimal, contrary to local 3D kernels. 
For global 3D representations on QM9 dataset, SLATM consistently outperformed the other descriptors on Gap, $C_V$, and ZPVE regardless of kernel choice. In contrast, for the energy-related properties, CM achieved the best performance.

\subsection{Limitations and Future Work}
Despite our systematic experiments and analyses, several limitations remain.


\textbf{Diagnostics for Molecular Features} While this paper provides evidence that the feature heuristics predominantly used in SSL image task does not work in molecular setting, a plausible alternative is still missing and unexplored. 

\textbf{Learnable Backbones} Although we assume the pre-trained features are frozen in this paper, analogous to the fixed kernel features, practitioners often fine-tune the pre-trained features in the downstream task as well as the prediction head. How the feature spectrum evolves also shed lights into the mechanism of modern machine learning.

\textbf{Representations and Kernels}  We did not include recent hybrid graph-based encoders like Mol2Vec \cite{Mol2vec:2018}, which may reveal distinct spectral behaviors. Similarly, quantum-inspired kernels derived from molecular graph circuits \cite{xanadu:phi_q:2020, torabian:2025} represent another promising direction for applying our framework to evaluate the structure and capacity of emerging methods in chemistry and materials science.

\vspace{-10pt}
\section{Conclusion} \label{conclusion}
We presented the first systematic spectral analysis of molecular features for property prediction on QM9 and 3 MoleculeNet benchmarks, spanning kernels with ECFP, pretrained transformer-based features, and global or local 3D descriptors as inputs. Our results show that spectral richness is not a universal predictor of performance. Pearson correlations reveal a consistent pattern only for ECFP-based kernels, where all four spectral metrics are positively correlated with accuracy. In contrast, transformer-based and global 3D kernels show mixed behavior, with $-\alpha$ weakly positive but SSE, ID, and SR negative, while local 3D kernels reverse the heuristic entirely, with all metrics negatively correlated. The truncation analysis supports this mixed behavior. For local 3D features and thermodynamic QM9 targets, fewer than 2\% of eigenvalues are often sufficient to recover 95\% of the maximum performance, reaching as low as 0.02\% in some cases. By contrast, ECFPs and transformer-based features for properties such as the HOMO–LUMO gap typically require substantially more eigenvalues, in some cases nearly the full spectrum. Overall, these findings call into question the common heuristic that “richer features yield better generalization.” More broadly, our study offers practical guidance for pairing molecular representations with kernels and opens a new avenue for bridging spectral analysis between self-supervised learning and kernel methods.

\bibliography{main}
\bibliographystyle{tmlr}

\appendix
\section{Appendix}
The appendix is organized as follows. Section~\ref{molecular_kernels} introduces the molecular representations and kernel functions considered in this work, including fingerprint-based kernels, pretrained text-embedding models, graph-based and Cartesian coordinate–based representations. As noted in the main text, we use the term \textbf{3D} kernels to refer to kernels derived from Cartesian coordinates, whether global or local. Section~\ref{additional_results} presents supplementary experimental results. Section~\ref{metric} provides detailed definitions of the four spectral metrics. Finally, Section~\ref{proof} contains proofs omitted from the main text.

\section{Molecular Kernels} \label{molecular_kernels}

Here, we briefly summarize molecular kernels that are based on molecular representations, which can broadly be divided into two categories:
\begin{definition}[Global Molecular Representation]
Let $\mathcal{M}$ denote a molecule and $\phi: \mathcal{M} \rightarrow \mathbb{R}^d$ be a function that maps a molecule to a $d$-dimensional vector of descriptors that summarize the entire structure (e.g., fingerprints, Coulomb matrix eigenvalues, or learned embeddings by encoding models).
\end{definition}
\begin{definition}[Local Molecular Representation]
Let $\mathcal{M}$ denote a molecule composed of $\Na$ atoms, where each atom is represented by $\z_\ell$ containing Cartesian coordinates and nuclear information such as atomic number. A local representation is given by a function $\phi_\ell: \z_\ell \mapsto \mathbb{R}^d$ that encodes atomic environments based on the arrangement of neighboring atoms. Examples include the Smooth Overlap of Atomic Positions and many-body distribution functions.
\end{definition}

Due to the existence of $\phi$ and $\phi_\ell$, there are two main families of molecular kernels: global and local molecular kernels.
\begin{definition}[Global Molecular Kernel]
A \emph{global molecular kernel} is a positive-definite function $k_{\mathrm{global}}: \mathcal{M}_i \times \mathcal{M}_j \to \mathbb{R}$ defined as
\begin{equation}
k_{\text{global}}(\mol_i, \mol_j) = \kappa\big( \phi(\mol_i), \phi(\mol_j) \big), \label{eqn:glob_kernel}
\end{equation}
where $\kappa: \mathbb{R}^d \times \mathbb{R}^d \to \mathbb{R}$ is a positive-definite kernel function comparing global descriptor vectors computed with $\phi$.
\end{definition}
A prominent example of a global kernel is obtained when $\phi$ is computed via extended connectivity fingerprints (ECFPs) \cite{ECFP:rogers:2010}. ECFPs are fixed-length hashed descriptors generated by iteratively encoding atom-centered circular neighborhoods (the Morgan algorithm) up to a predefined radius $r$. The resulting binary vector, $\z_i^\top = \phi_{\text{ECFP}}(\mol_i)^\top = [1, 0, 1, \cdots, 1]^\top$, captures the 2D molecular topology (and, optionally, chirality) in a global form. 
When using $\phi_{\text{ECFP}}(\mol)$ as the descriptor, similarity can be quantified through fingerprint-specific kernels such as
{\footnotesize%
\begin{equation}
\kernel{Tanimoto} (\mol_i, \mol_j) = \sigma_f^2 \cdot \frac{\langle \x_i, \x_j \rangle}{|\x_i|^2_2 + |\x_j|^2_2 - \langle \x_i, \x_j \rangle}, \quad
\kernel{Dice} (\mol_i, \mol_j) = \sigma_f^2 \cdot \frac{2\langle \x_i, \x_j \rangle}{|\x_i|_1 + |\x_j|_1}, \label{eqn:k_tanimoto}
\end{equation}
}
where $\sigma_f$ is a kernel hyperparameter, $\x = \phi_{\text{ECFP}}(\mathcal{M})$, $\langle \x_i, \x_j \rangle = \x_i^\top \x_j$, and $|\x_j|_p$ is the $p$-norm of the fingerprint vector. We present other ECFP-based kernels in Section \ref{sec:ecfp_kernels}.

Another widely used class of global descriptors arises from \emph{data-driven molecular embeddings}, where $\phi$ is learned from large corpora of molecular strings such as SMILES or SELFIES. Examples include models such as \texttt{SELFIESTED}, \texttt{SELFormer}, and \texttt{MLT-BERT}, which leverage transformer-based language models to capture chemical semantics. Unlike discrete fingerprints, these embeddings yield continuous-valued feature vectors, enabling the use of standard isotropic kernels such as Gaussian, Laplacian, or linear.
Additional details of transformer-based global representations are provided in Section \ref{sec:llm_rep}.

Beyond data-driven embeddings, global representations can also incorporate explicit geometrical information. A classical example is the Coulomb Matrix (CM) \cite{CM:anatole:2012}, which encodes pairwise Coulombic interactions between atoms.
Other notable global descriptors include the bag of bonds (BoB) \cite{BOB:anatole:2015} and the spectrum of London and Axilrod–Teller–Muto (SLATM) \cite{slatm:anatole:2020}. BoB, inspired by the bag-of-words algorithm in natural language processing, extends the CM by grouping pairwise interactions into “bags” according to bond type. SLATM, in contrast, is based on many-body expansions: it represents molecular structures by approximating atomic charge densities with Gaussian functions scaled by interatomic potentials.
Additional details of global kernels are provided in Section \ref{sec:global_kernels}.

\begin{definition}[Local Molecular Kernel]\label{def:local_kernel}
A \emph{local molecular kernel} is a positive-definite function of two molecules, defined as
\begin{equation}
k_{\text{local}}(\mol_i, \mol_j) 
= \sum_{\ell_i=1}^{\Na_{i}} \sum_{\ell_j=1}^{\Na_{j}} 
g(Z_{\ell_i}, Z_{\ell_j}) \;
\kappa\!\big( \phi_\ell(\z_{\ell_i}), \phi_\ell(\z_{\ell_j}) \big), 
\label{eqn:loc_kernel}
\end{equation}
where $\z_{\ell_i}$ denotes the position and chemical identity of the $\ell_i$-th atom in $\mol_i$, $\phi_\ell$ maps its local chemical environment to a descriptor (e.g., SOAP, FCHL19, ACSF), and $\kappa$ is a positive-definite kernel function (such as Gaussian or Laplacian) that measures similarity between atomic environments. The function $g(Z_{\ell_i}, Z_{\ell_j})$ compares atomic species, typically defined as a Kronecker delta on the atomic numbers, i.e.\ $g(Z_{\ell_i}, Z_{\ell_j}) = \delta(Z_{\ell_i} = Z_{\ell_j})$.
\end{definition}

Although global descriptors capture holistic molecular information, they may struggle to generalize across molecules with different sizes or conformations. Much of the recent work on molecular kernel development has therefore focused on incorporating geometric information at the atomic level.
\emph{Local kernels} address this by encoding atomic environments within a cutoff radius, making them naturally suited to enforce invariances (e.g., translation, rotation, and permutation) and improving transferability across chemical space. Prominent examples include the Smooth Overlap of Atomic Positions (SOAP) \cite{soap:bartok:2013}, Faber–Christensen–Huang–Lilienfeld (FCHL) \cite{fchl18:anatole:2018,fchl19:anatole:2020}, and many-body distribution functions (MBDF) \cite{mbdf:2023:anatole,mbdf:2024:anatole}. Additional details of local kernels are provided in Section \ref{sec:local_kernels}.

\subsection{Global Molecular Kernels/Representations } \label{sec:global_kernels}
\subsubsection{Extended-Connectivity Fingerprints} \label{sec:ecfp_kernels}
One of the most common global molecular representations is the extended-connectivity fingerprints (ECFPs) \cite{ECFP:rogers:2010}. Here is a list of some of the global molecular kernels based on the ECFP representation, 
\begin{equation}
   k_{\text{Braun-Blanquet}}= \frac{\langle \x_1, \x_2 \rangle}{\max(|\x_1|, |\x_2|)}, \quad \quad  k_{\text{Dice}}=\frac{2 \langle \x_1, \x_2 \rangle}{|\x_1| + |\x_2|}
\end{equation}

\begin{equation}
    k_{\text{Faith}} = \frac{2 \langle \x_1, \x_2 \rangle + d_0}{2d}, \quad \quad k_{\text{Forbes}} = \frac{d \langle \x_1, \x_2 \rangle}{|\x_1| + |\x_2|}
\end{equation}

\begin{equation}
    k_{\text{Inner-Product}} = \langle \x_1, \x_2 \rangle = \x_1^\top \x_2, \quad \quad k_{\text{Intersection}}=\langle \x_1, \x_2 \rangle + \langle \x_1', \x_2' \rangle
\end{equation}

\begin{equation}
    k_{\text{MinMax}} = \frac{|\x_1| + |\x_2| - |\x_1 - \x_2|}{|\x_1| + |\x_2| + |\x_1 - \x_2|}, \quad \quad k_{\text{Otsuka}} = \frac{\langle \x_1, \x_2 \rangle}{\sqrt{|\x_1| + |\x_2|}}
\end{equation}

\begin{equation}
    k_{\text{Rogers-Tanimoto}} = \langle \x_1, \x_2 \rangle + \frac{d_0}{2|\x_1|} + 2|\x_2| - 3\langle \x_1, \x_2 \rangle + d_0, \quad \quad  k_{\text{Rand}} = \frac{\langle \x_1, \x_2 \rangle + d}{n}    
\end{equation}

\begin{equation}
    k_{\text{Russel-Roa}} = \frac{\langle \x_1, \x_2 \rangle}{n}, \quad \quad k_{\text{Sogenfei}} = \frac{\langle \x_1, \x_2 \rangle^2}{|\x_1| + |\x_2|}
\end{equation}

\begin{equation}
    k_{\text{Soakl-Sneath}} = \frac{\langle \x_1, \x_2 \rangle}{2|\x_1|} + 2|\x_2| - 3 \langle \x_1, \x_2 \rangle, \quad \quad  k_{\text{Tanimoto}} = \frac{\langle \x_1, \x_2 \rangle}{\|\x_1\|^2 + \|\x_2\|^2 - \langle \x_1, \x_2 \rangle}
\end{equation}
where: 
\begin{itemize}
    \item $\x_i$ is the global representation of the molecule using the ECFPs;  $ \x_i= \phi_{\text{ECFP}}(\mol_i)$, for example, $\x_i^\top = [1,0,1,\cdots,1]^\top$.
    \item $\langle \x_i, \x_j \rangle$ denotes the inner product.
    \item $\x'_i$ is the bit-flipped vector of $\x_i$.
    \item $|\x_i|$ represents the $L_1$ norms of $\x_i$.
    \item $d_0$ is the number of common zeros, and $d$ is the dimension of the input vectors,
\end{itemize}

\subsubsection{Pretrained Molecular Embedding Models}\label{sec:llm_rep}
Pretrained molecular embedding models \cite{pretrainedmolecularembedding:2025} have become a standard approach for molecular property prediction. These models are trained on large molecular corpora to produce embedding vectors $\z \in \mathbb{R}^d$, which can then be used for downstream regression tasks.
We briefly describe the five pretrained transformer-based models used in our analysis:
\begin{itemize}
\item \texttt{SELFIESTED}: a BART-based encoder–decoder model for SELFIES, with 358M parameters, 12 layers, and 16 attention heads \cite{selfiested:2025}; \href{https://huggingface.co/ibm-research/materials.selfies-ted}{\texttt{ibm/materials.selfies-ted}}.
\item \texttt{SELFormer}: a RoBERTa-style encoder-only model for SELFIES, with 86M parameters, 12 layers, and 4 attention heads \cite{selformer:2023}. \href{https://huggingface.co/HUBioDataLab/SELFormer}{\texttt{HUBioDataLab/SELFormer}}
\item \texttt{MLT-BERT}: a BERT-style transformer model for sequence modeling, with 16M parameters, 8 layers, and 8 attention heads \cite{ChemBERT}. \href{https://huggingface.co/jonghyunlee/ChemBERT_ChEMBL_pretrained}{\texttt{jonghyunlee/ChemBERT\_ChEMBL\_pretrained}}
\item \texttt{ChemBERTa}: a RoBERTa-style transformer encoder pretrained on 100,000 SMILES strings from the ZINC database using masked language modeling. The model consists of 6 transformer layers, 12 attention heads per layer (72 attention mechanisms in total), and a hidden dimension of 768 \cite{chemberta:2020}. \href{https://huggingface.co/seyonec/ChemBERTa-zinc-base-v1}{\texttt{seyonec/ChemBERTa-zinc-base-v1}} checkpoint.
\item \texttt{GROVER}: a hybrid transformer-GNN model, with 48M (\texttt{GROVER}$_{\text{base}}$) and 100M (\texttt{GROVER}$_{\text{large}}$) parameters, both using 85 functional groups as the motifs of molecules. The graph-level embedding is obtained by concatenating a [CLS] token with pooled atomic embeddings \cite{grover:2020}. \href{https://github.com/gmum/huggingmolecules}{\texttt{gmum/huggingmolecules}}.
\end{itemize}
For these global text and graph embedding models, denoted $\phi_{\text{LLM}}$ or $\phi_{\text{GNN}}$ , we evaluated three kernel functions: linear, isotropic Gaussian, and isotropic Laplacian. 

\subsubsection{Global Cartesian Coordinates Molecular Representations}\label{sec:global_3d}

\textbf{Coulomb Matrix (CM) \cite{CM:anatole:2012}}: CM is a global descriptor that encodes pairwise electrostatic interactions between atoms:
\begin{equation}
    C_{ij} =
    \begin{cases}
        0.5 Z_i^{2.4} & \text{if } i = j \\
        \frac{Z_i Z_j}{R_{ij}} & \text{if } i \ne j,
    \end{cases}
\end{equation}
where $Z_i$ is the atomic number of atom $i$ and $R_{ij}$ is the interatomic distance; $R_{ij} = \|\mathbf{R}_i - \mathbf{R}_j\|$.
Despite its simplicity, CM is not invariant to atom indexing, which limits its generalization. To ensure invariance to atom indexing, each molecule is represented via the eigenvalue spectrum of its CM, sorted by descending absolute value. This diagonalized form is invariant to permutations, translations, and rotations, and yields a continuous molecular distance metric even for molecules with different numbers of atoms (using zero-padding).

\textbf{Bag of Bonds (BoB) \cite{BOB:anatole:2015}}: 
The BoB descriptor is inspired by the bag-of-words model from natural language processing, yielding rotational, translational, and permutation-invariant molecular representations. BoB extends the Coulomb Matrix by grouping pairwise atomic interactions into “bags” based on bond types, with each entry computed as $Z_i Z_j/|R_i - R_j|$. The entries in each bag are sorted by magnitude and zero-padded for consistent vector length. While effective for machine learning tasks, BoB cannot distinguish between homometric molecules.

\textbf{Spectrum of London and Axilrod–Teller–Muto (SLATM) \cite{slatm:anatole:2020}}:
SLATM builds on many-body expansions to describe molecular structures. It models atomic environments by approximating charge densities with Gaussian functions scaled by interatomic potentials. The representation captures one-body (atomic type), two-body (pairwise distances via a London-like potential), and three-body (angles via the Axilrod–Teller–Muto potential) interactions. Each term is binned into histograms to produce fixed-length atomic vectors, ensuring invariance to translation, rotation, and permutation. SLATM supports both local (atomic-level) representations and global ones formed by summing over atomic vectors, making it effective for a wide range of molecular machine learning tasks.

\subsection{Local Molecular Kernels}\label{sec:local_kernels}
We considered three widely used local molecular kernels:
\begin{itemize}
    \item \textbf{Smooth Overlap of Atomic Positions (SOAP)} \cite{soap:bartok:2013}.
    \item \textbf{Faber–Christensen–Huang–Lilienfeld 2019 (FCHL19)} \cite{fchl19:anatole:2020}.
    \item \textbf{Atom-Centered Symmetry Functions (ACSF)} \cite{acsf:Behler:2011}.
\end{itemize}

As in prior works \cite{fchl18:anatole:2018,fchl19:anatole:2020,mbdf:2023:anatole,mbdf:2024:anatole}, local kernels are constructed using an element-matching function, 
\[
g(Z_i, Z_j) = \delta(Z_i = Z_j),
\]
so that, in \textbf{Definition}~\ref{def:local_kernel}, only atoms of the same chemical species in molecules $\mol_i$ and $\mol_j$ contribute to the kernel evaluation. In our experiments, all Gaussian and Laplacian kernels built on local representations follow this convention. The resulting local kernel takes the form
\begin{equation}
k_{\text{local}}(\mol_i, \mol_j) 
= \sum_{\ell_i=1}^{\Na_{i}} \sum_{\ell_j=1}^{\Na_{j}} 
\delta(Z_{\ell_i} = Z_{\ell_j}) \; 
\kappa\!\left( \phi_\ell(\z_{\ell_i}), \phi_\ell(\z_{\ell_j}) \right),
\end{equation}
where $\phi_\ell(\z_{\ell})$ denotes the local atomic descriptor (e.g., SOAP, FCHL19, or ACSF), and $\kappa$ is either an isotropic Gaussian or Laplacian kernel.

\subsection{Implementation}
In terms of implementation, ECFP representations were generated using \texttt{RDKit}, specifically ECFP4 (radius = 2, dimension = 1,024) and ECFP6 (radius = 3, dimension = 2,048). FCHL19, SLATM, CM, and BOB representations were computed via the \href{https://github.com/qml2code/qml2}{\texttt{qml2}} library. The ACSF and SOAP descriptors were generated using the \href{https://singroup.github.io/dscribe/}{\texttt{DScribe}} package \cite{dscribe, dscribe2}. For SOAP, we utilized Gaussian-type radial basis functions with $r_{\mathrm{cut}} = 6.0$ \AA, $n_{\max} = 3$, $l_{\max} = 3$, and $\sigma = 0.1$. For ACSF, we employed a 6.0 \AA\ cutoff with three $G^2$ radial symmetry functions and four $G^4$ angular terms.

\section{Additional Results} \label{additional_results} 

\paragraph{Datasets} In this paper, we consider the molecular property prediction as regression task on two types of molecular datasets:
\begin{enumerate}
    \item \textbf{QM9} dataset \cite{Ramakrishnan2014}, a benchmark of $\sim$134,000 small organic molecules containing up to nine heavy atoms (C, O, N, F). The molecular properties in QM9 were computed using density functional theory at the B3LYP/6-31G(2df,p) level. Our experiments focus on predicting seven core properties: the HOMO–LUMO gap (Gap), internal energy at $0$ K ($U_0$) and $298.15$ K ($U_{298}$), heat capacity ($C_V$), enthalpy ($\Delta H$), Gibbs free energy ($G$) at $298.15$ K, and zero-point vibrational energy (ZPVE).
    \item \textbf{MoleculeNet benchmark} \cite{moleculenet:2018} consists of three regression tasks: ESOL, FreeSolv, and Lipophilicity. MoleculeNet aggregates multiple public datasets to provide standardized ML benchmarks for molecular properties. ESOL contains experimental aqueous solubility values, while FreeSolv provides hydration free energies derived from alchemical calculations and experiments. The Lipophilicity dataset, curated from the ChEMBL database, contains experimental octanol/water distribution coefficients ($\log$P at pH 7.4) for approximately 4,200 molecules. Notably, molecules in these three datasets are provided solely as SMILES strings, enabling us to assess kernel performance in a regime that relies exclusively on 2D structural representations.
\end{enumerate}

\paragraph{Molecular Features as Models}
Our experiment involves kernel features implicitly defined by 13 ECFP kernels, 3 3D-global kernels and 3 3D-local kernels, and pre-trained features explicitly extracted by 4 transformer-based encoders and 1 GNN-encoder:
\begin{enumerate}
    \setlength{\itemsep}{-2pt}
    \item \textbf{ECFP kernels} We utilize the ECFP global representation to construct 11 domain-specific molecular kernels (Tanimoto, Dice, Otsuka, Sogenfrei, Braun-Blanquet, Faith, Forbes, Inner-Product, Intersection, Min-Max, and Rand; see Section \ref{sec:ecfp_kernels} for mathematical details). Additionally, we apply standard Gaussian and Laplacian kernels directly to these ECFP representations as one-hot vectors.
    
    \item \textbf{Pre-trained features:} We extract pre-trained features from state-of-the-art molecular transformers (SELFIESTED, SELFormer, ChemBERTa, and MLT-BERT) using string-based inputs (SMILES/SELFIES), as well as graph-level embeddings using a GNN (GROVER). We then construct Gaussian, Laplacian, and linear kernels on top of these extracted pre-trained features (see Section \ref{sec:llm_rep}).
    
    \item \textbf{Global 3D representations:} We employ representations that capture the complete molecular geometry—namely the Coulomb matrix (CM), bag of bonds (BOB), and SLATM. Isotropic Gaussian, Laplacian, and linear kernels are built on top of these representations (see Section \ref{sec:global_3d}).
    
    \item \textbf{Local 3D representations:} We consider local structural descriptors that encode pairwise atomic environments, including local SOAP \cite{soap:bartok:2013}, FCHL19 \cite{fchl19:anatole:2020}, and ACSF \cite{acsf:Behler:2011}. In practice, linear kernels are rarely used for local 3D representations; therefore, we pair them with nonlinear similarity measures (e.g., Gaussian or Laplacian kernels) to better capture the smooth geometric variations in atomic environments (see Section~\ref{sec:local_kernels}).
\end{enumerate}

\textbf{Hyperparameter Choice for QM9 dataset} 
The hyperparameters associated with the molecular representations were kept fixed to ensure consistency in the representations. For Gaussian and Laplacian kernels, the length scale parameter $\ell$ was optimized via grid search over the set $\{0.1 \cdot 2^i \mid i = 0, \dots, 14\}$, while for global kernels it was $\{10^n \mid n = 2, \dots, 8\}$.. The regularization hyperparameter $\lambda$ was tuned separately for each representation through grid search combined with 4-fold cross-validation. Specifically, for fingerprint-based kernels, $\lambda$ was selected from the range $\{10^i \mid i = -10, \dots, 2\}$, while for all other kernels, it was selected from $\{10^{-3i} \mid i = 1, \dots, 9\}$. The best configuration was then selected based on the lowest mean absolute error (MAE) on the validation set.

\subsection{Kernel eigenvalue spectra}\label{sec:eigenspectrum_sm}

Figs.~\ref{fig:eigenspectrum_ecfps_sm}--\ref{fig:eigenspectrum_soap_fchl_sm} are the eigenvalue spectra of various global and local kernels.

\begin{figure}[ht!]
    \centering
    \includegraphics[width=0.3\linewidth]{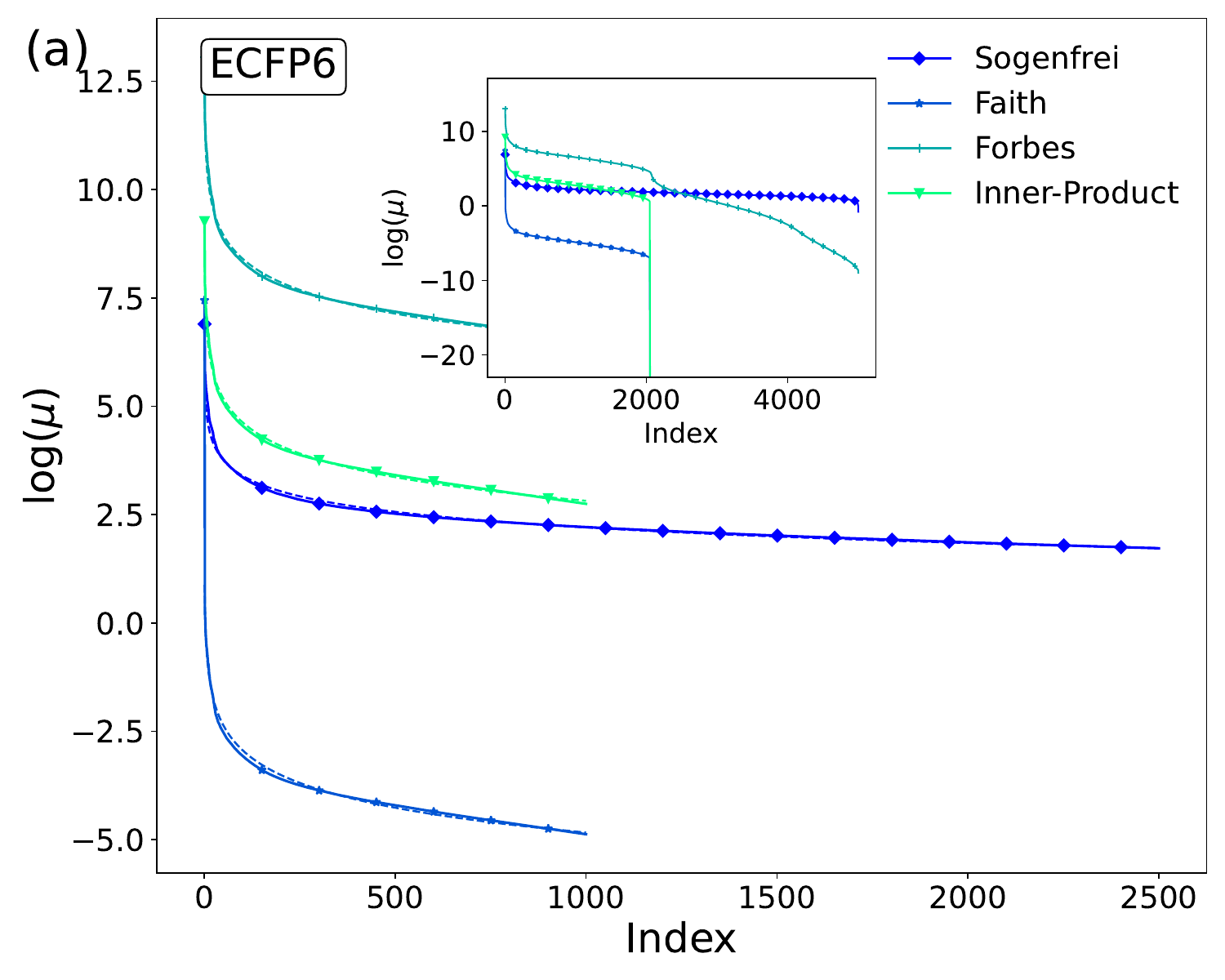}
    \includegraphics[width=0.3\linewidth]{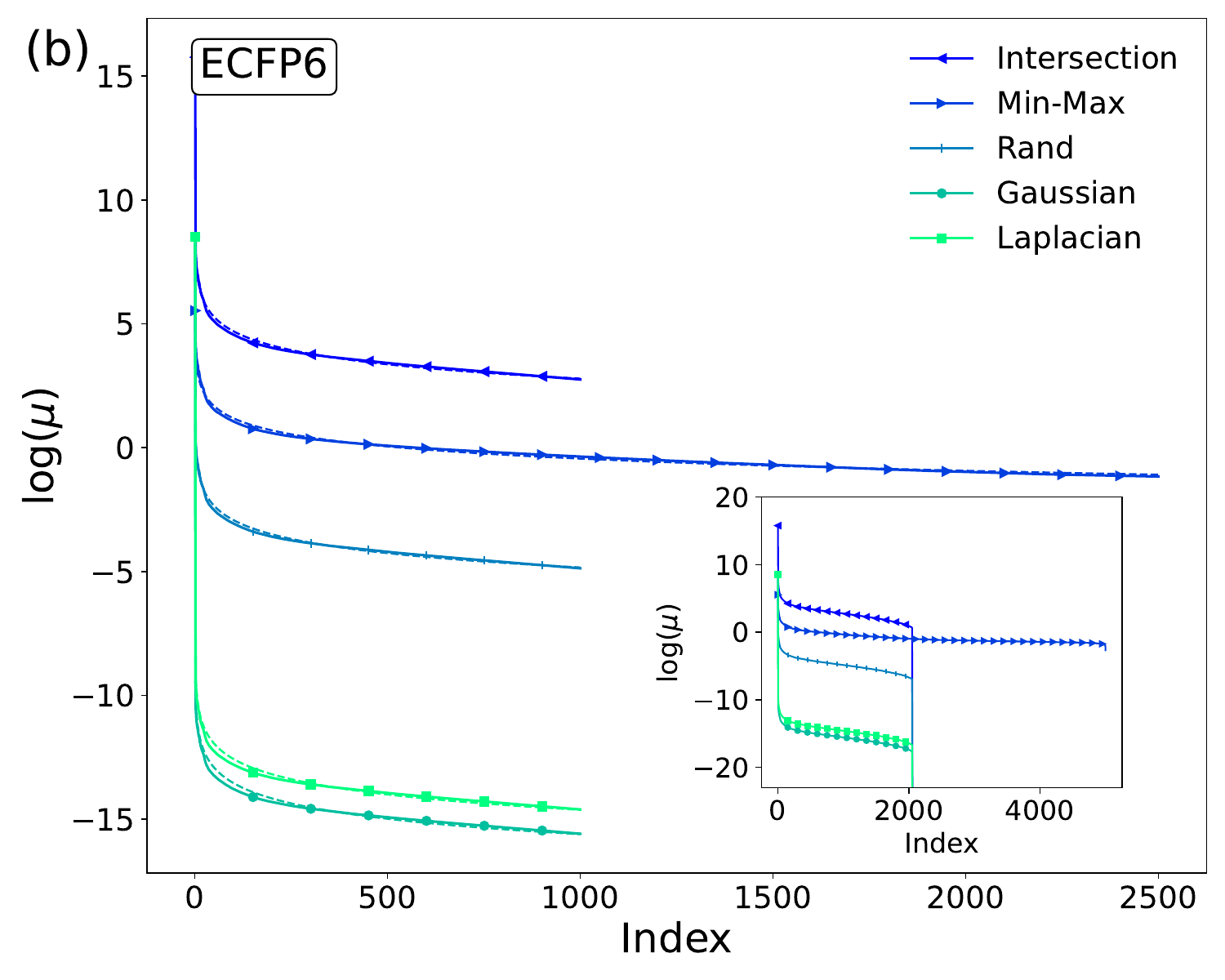}
    \caption{Kernel eigenvalue spectra with insets highlighting that nearly half of the eigenvalues are close to zero (main plots) for different ECFP-based kernels.}
    \label{fig:eigenspectrum_ecfps_sm}
\end{figure}

\begin{figure}[ht!]
    \centering
    \includegraphics[width=0.3\linewidth]{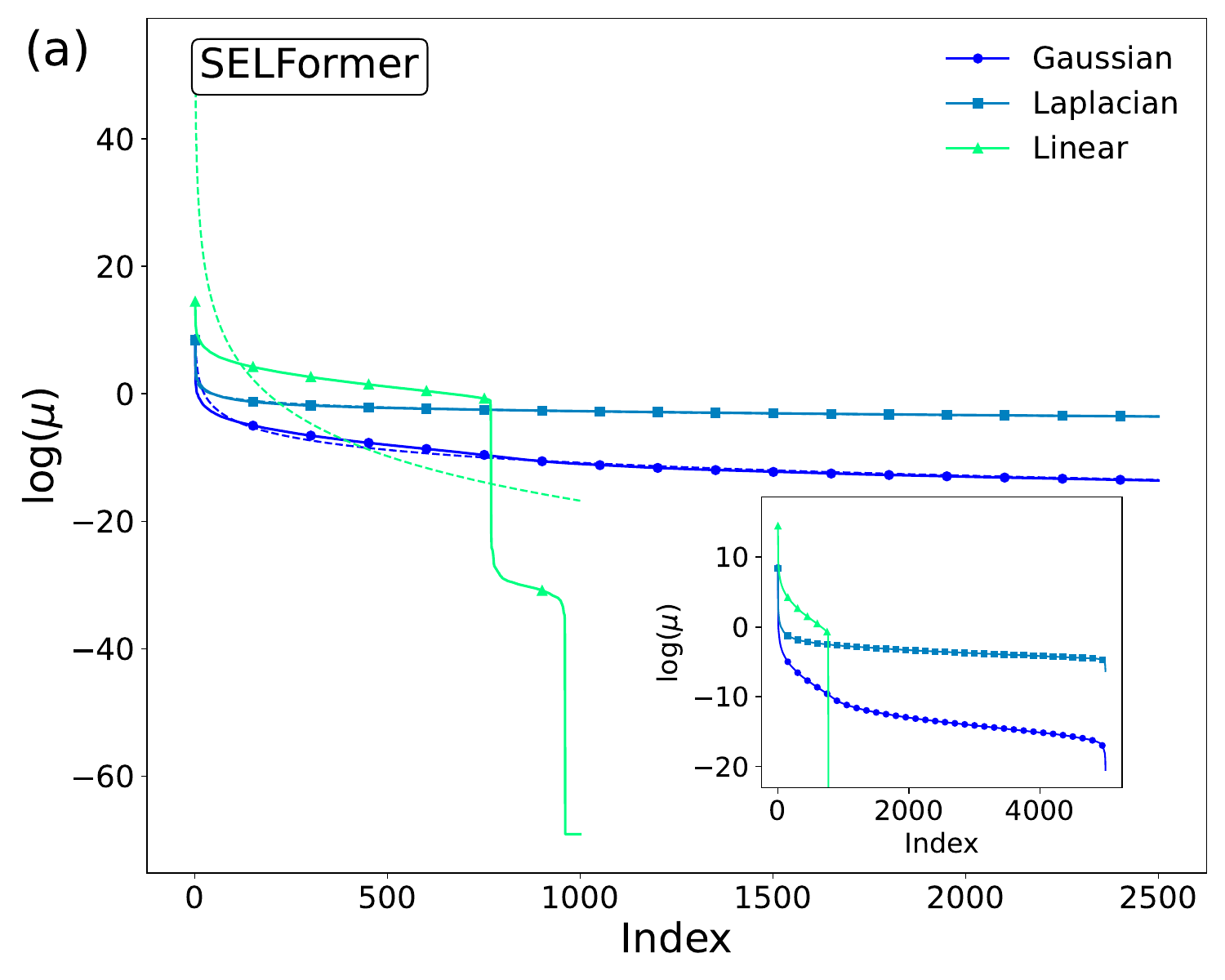}
    \includegraphics[width=0.3\linewidth]{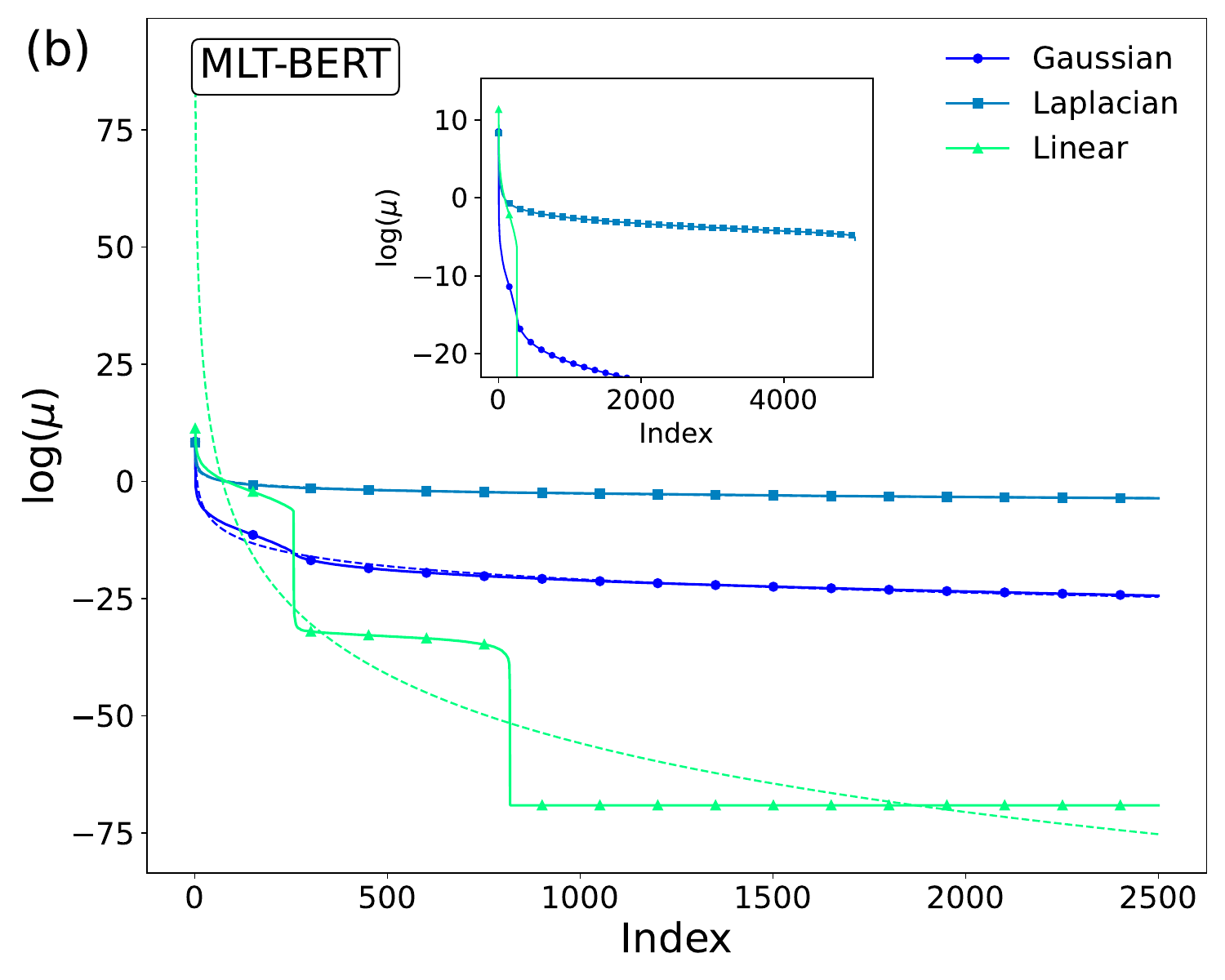}
    \includegraphics[width=0.3\linewidth]{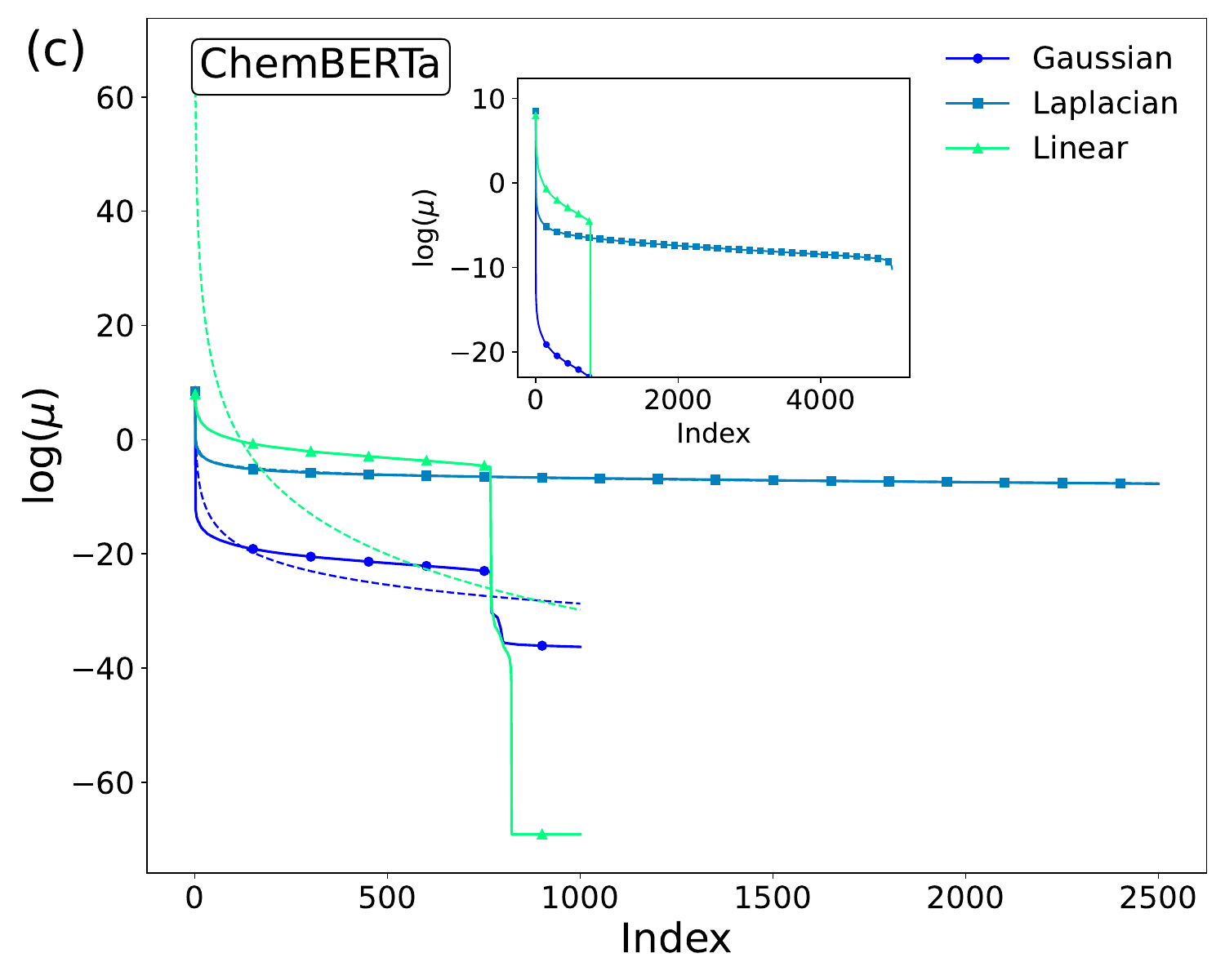}
    \includegraphics[width=0.3\linewidth]{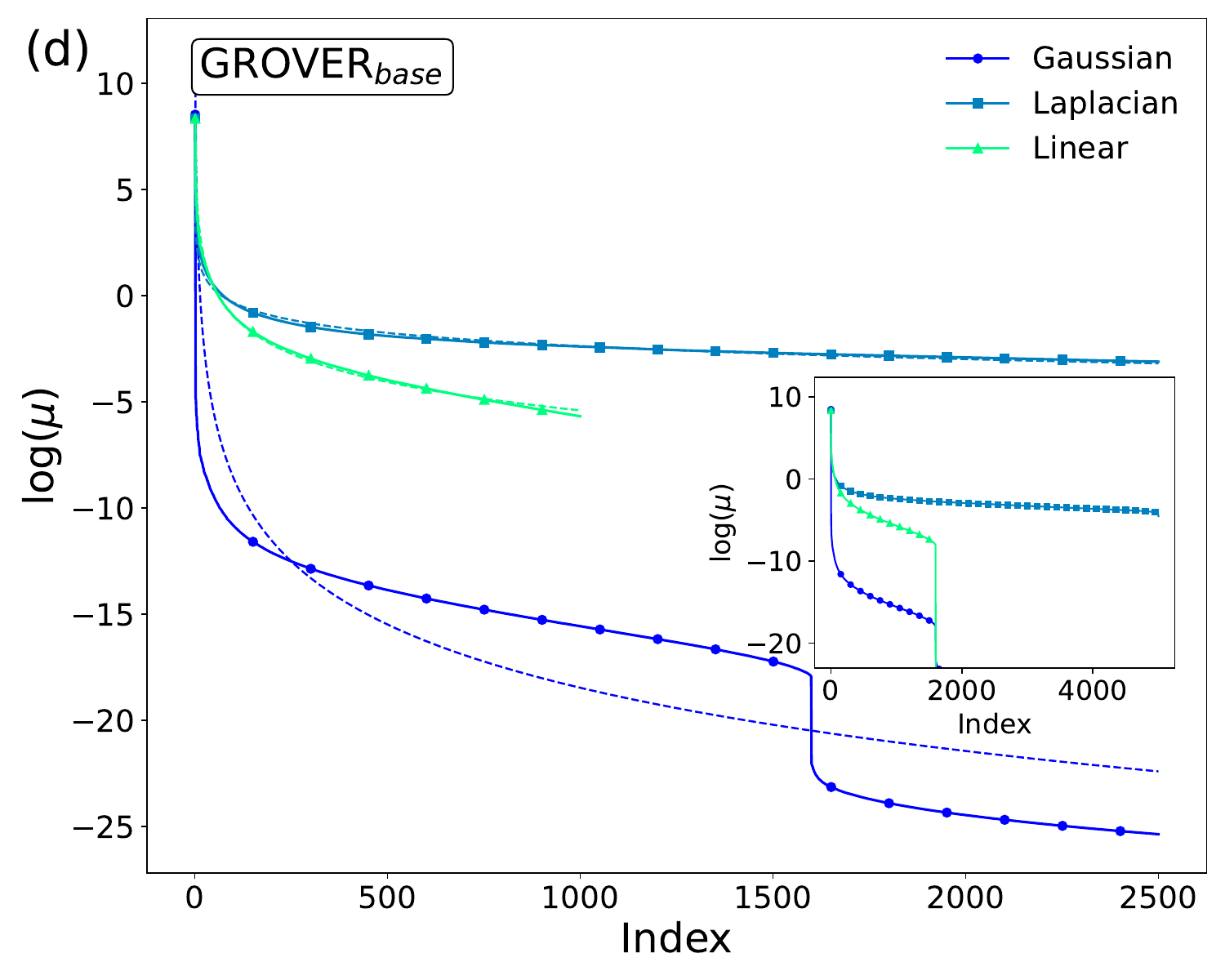}
    \includegraphics[width=0.3\linewidth]{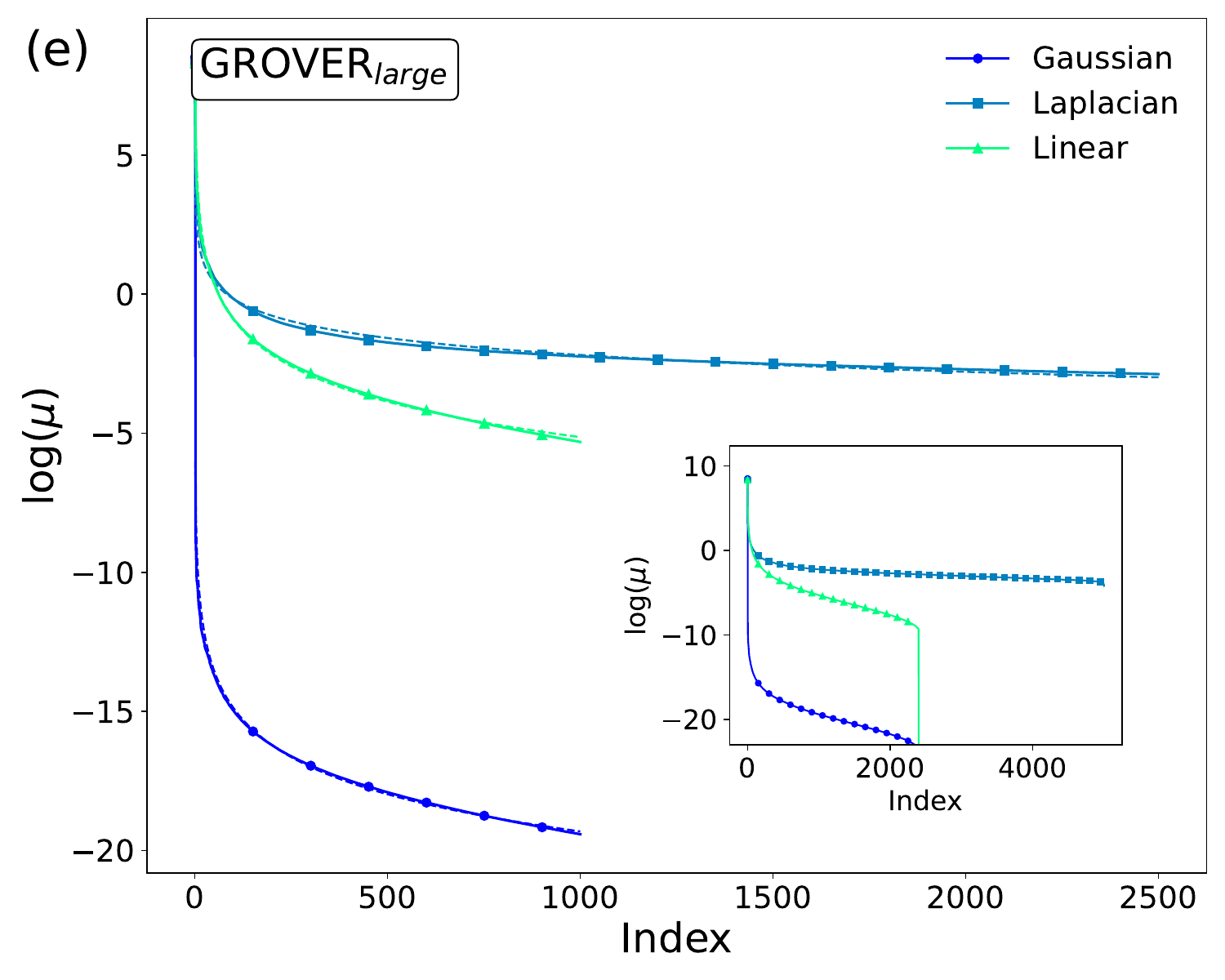}
    \caption{Kernel eigenvalue spectra with insets highlighting that nearly half of the eigenvalues are close to zero (main plots). For all we considered the Gaussian, Laplacian, and linear kernels.}
    \label{fig:eigenspectrum_llm_sm}
\end{figure}

\begin{figure}[ht!]
    \centering
    \includegraphics[width=0.3\linewidth]{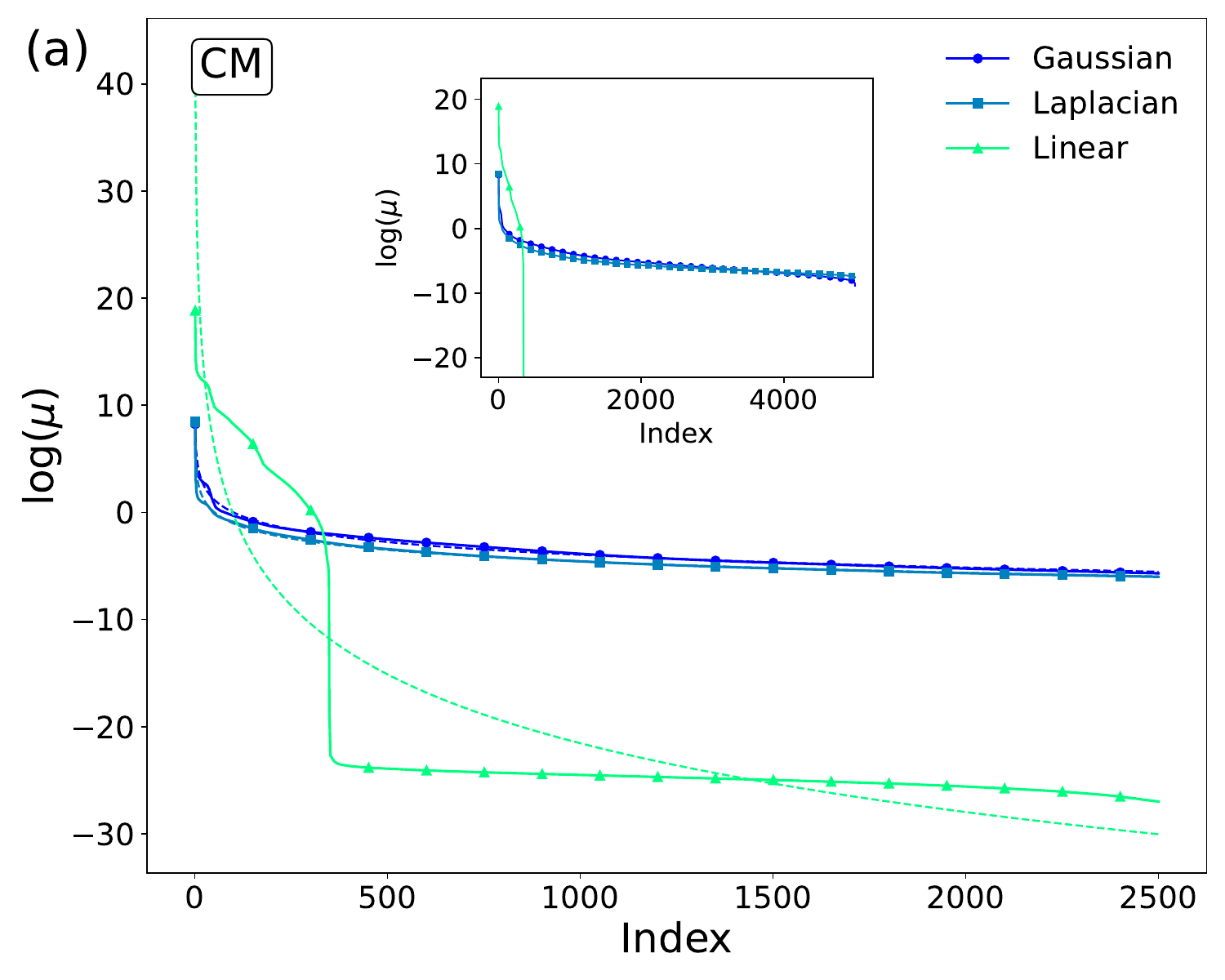}
    \includegraphics[width=0.3\linewidth]{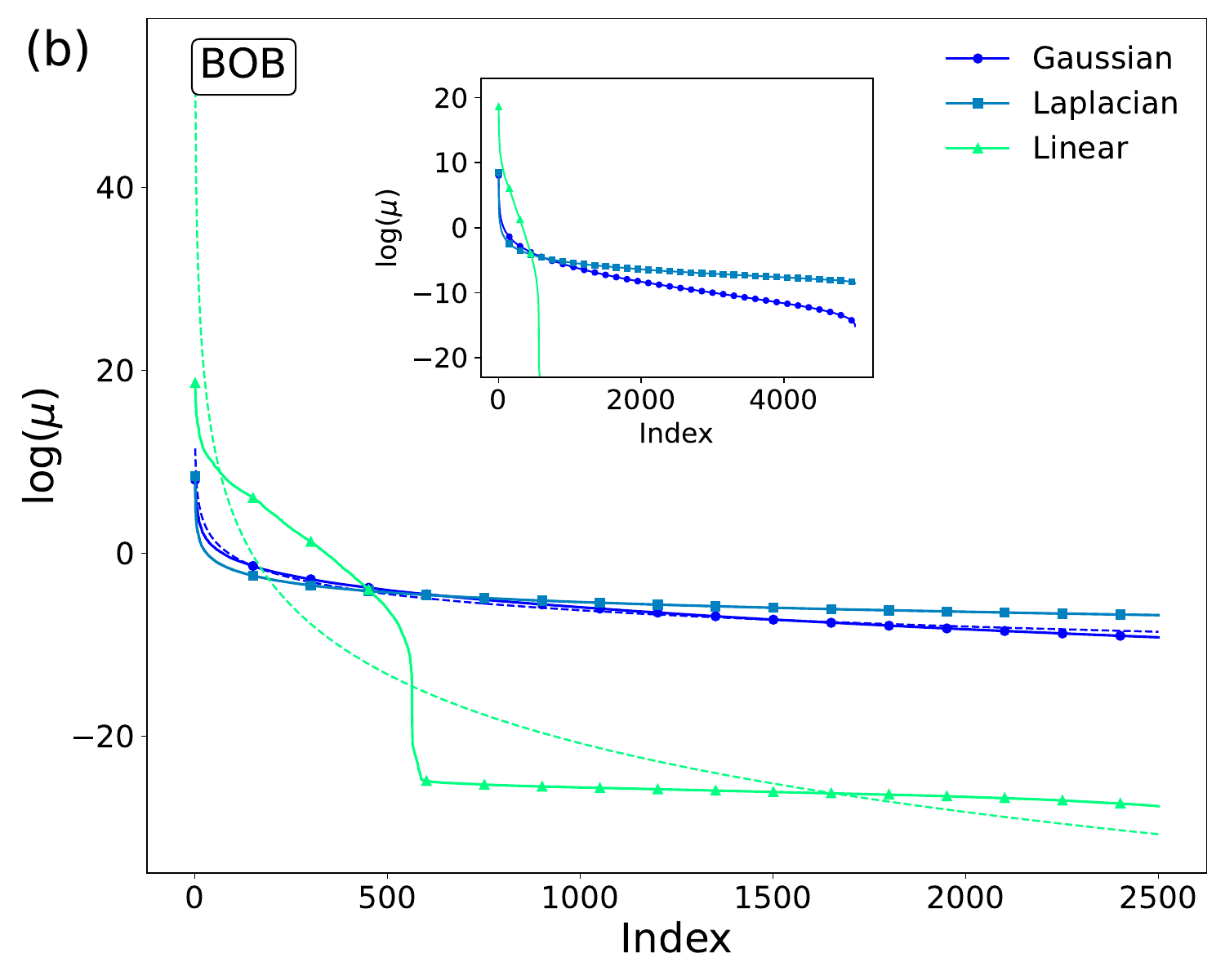}
    \includegraphics[width=0.3\linewidth]{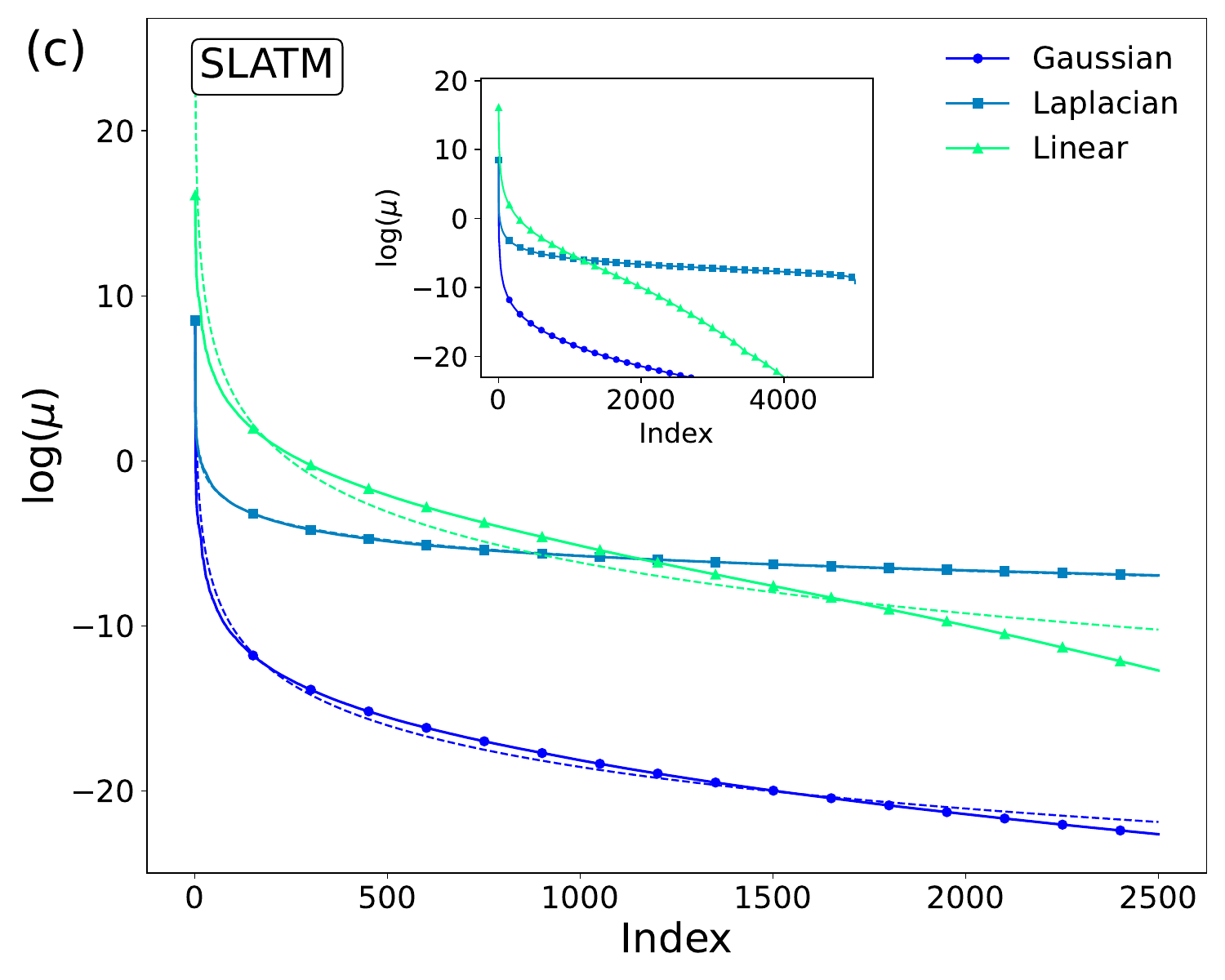}
    \caption{Kernel eigenvalue spectra with insets highlighting that nearly half of the eigenvalues are close to zero (main plots). For all, we considered the Gaussian, Laplacian, and linear kernels. }
    \label{fig:eigenspectrum_cm_bob_slatm_sm}
\end{figure}

\begin{figure}[ht!]
    \centering
    \includegraphics[width=0.3\linewidth]{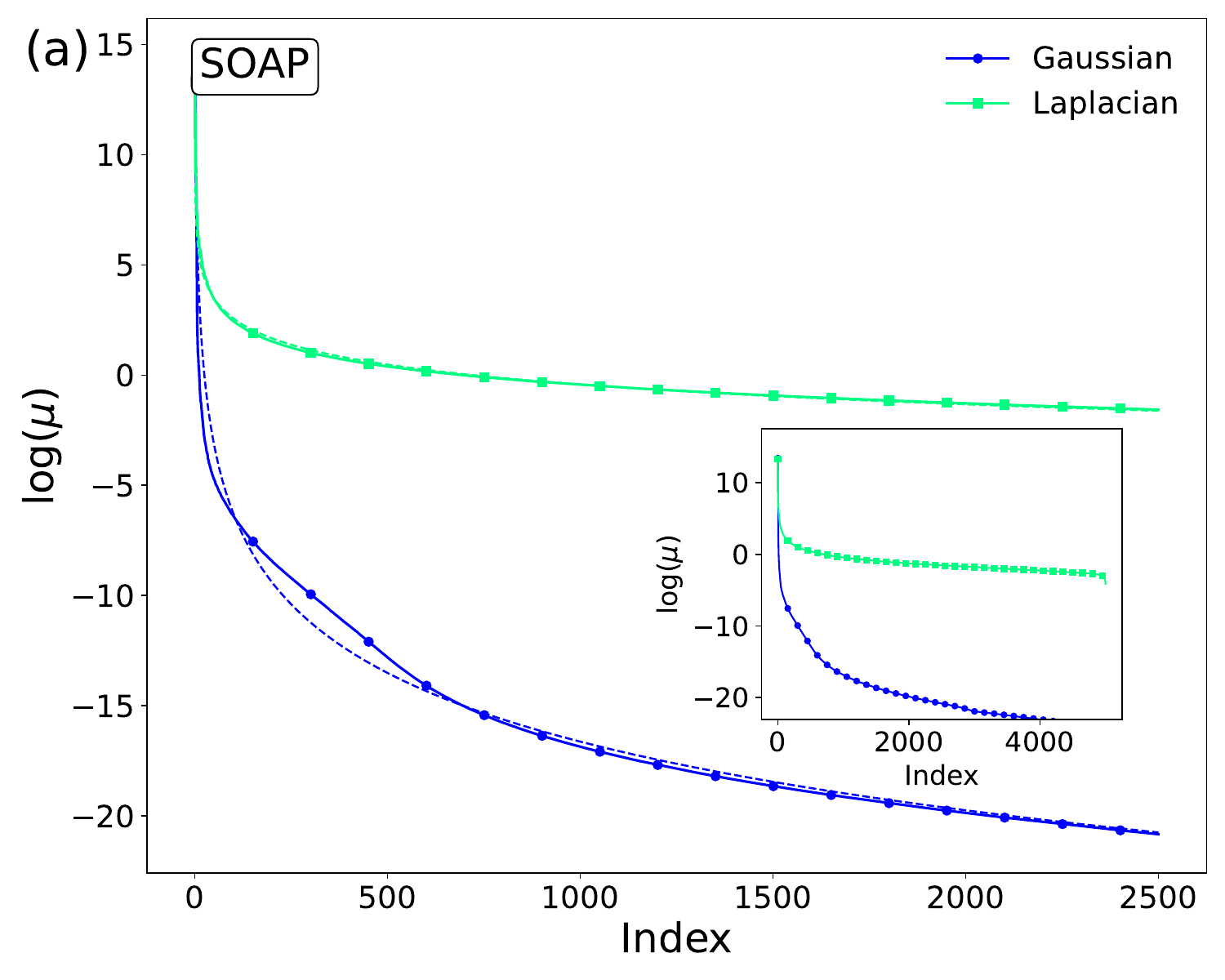}
    \includegraphics[width=0.3 \linewidth]{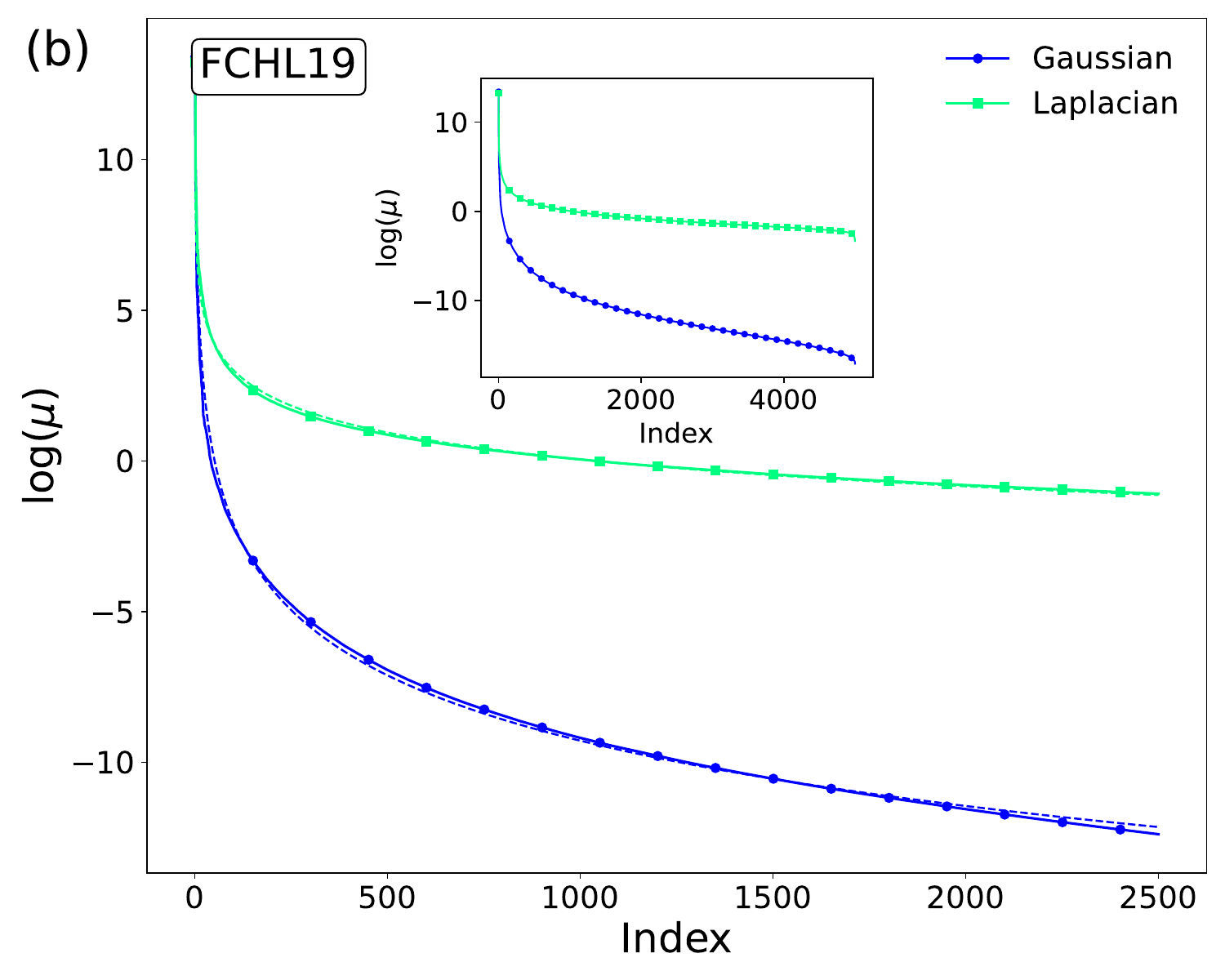}
    \caption{Kernel eigenvalue spectra with insets highlighting that nearly half of the eigenvalues are close to zero (main plots). For both, we considered the Gaussian and Laplacian kernels. }
    \label{fig:eigenspectrum_soap_fchl_sm}
\end{figure}


\subsection{Truncation versus no Truncation}\label{sec:trun_vs_notrunc_sm}
Figs. \ref{fig:r2_reg_noreg_ECFP}--\ref{fig:r2_reg_noreg_G_loc} represent the MAE, for a test set of $10,000$ molecules, for various properties when different truncation levels are considered. At each truncation level, all hyperparameters were optimized.
Fig. \ref{fig:r2_reg_noreg_ECFP} presents the results for four ECFP-based kernels , Fig. \ref{fig:r2_reg_noreg_G_glob} for four global representations (CM, BOB, SELFIESTED, and MLT-BERT), all using the Gaussian kernel, and Fig. \ref{fig:r2_reg_noreg_G_loc} for three local representations (SOAP, FCHL19, ACSF), all using the Gaussian kernel. 

\begin{figure}[htp!]
    \centering
    \includegraphics[width=0.8\linewidth]{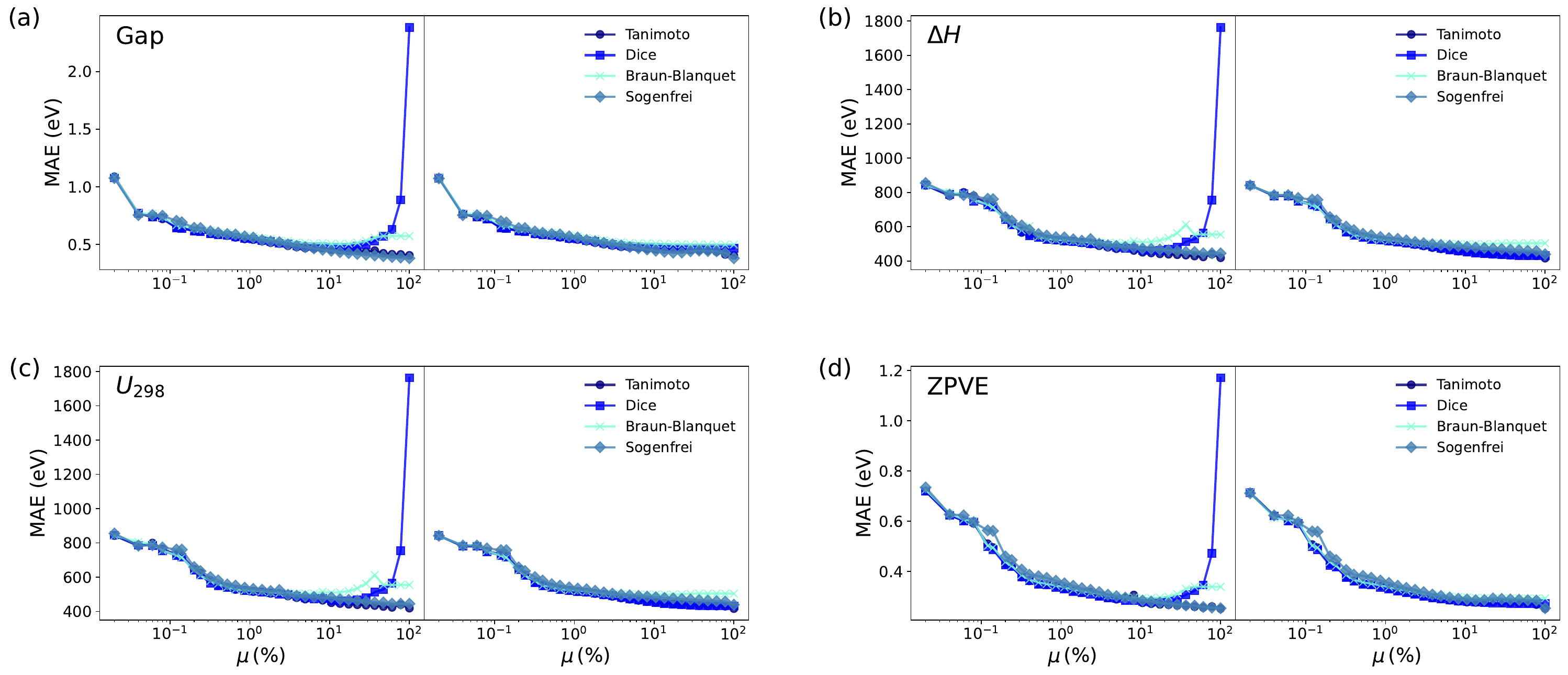}
    \caption{MAE for various properties as a function of truncation level for selected ECFP-based kernels. Left and right subpanels only consider results without and with regularization, respectively.}
    \label{fig:r2_reg_noreg_ECFP}
\end{figure}

\begin{figure}[htp!]
    \centering
    \includegraphics[width=0.8\linewidth]{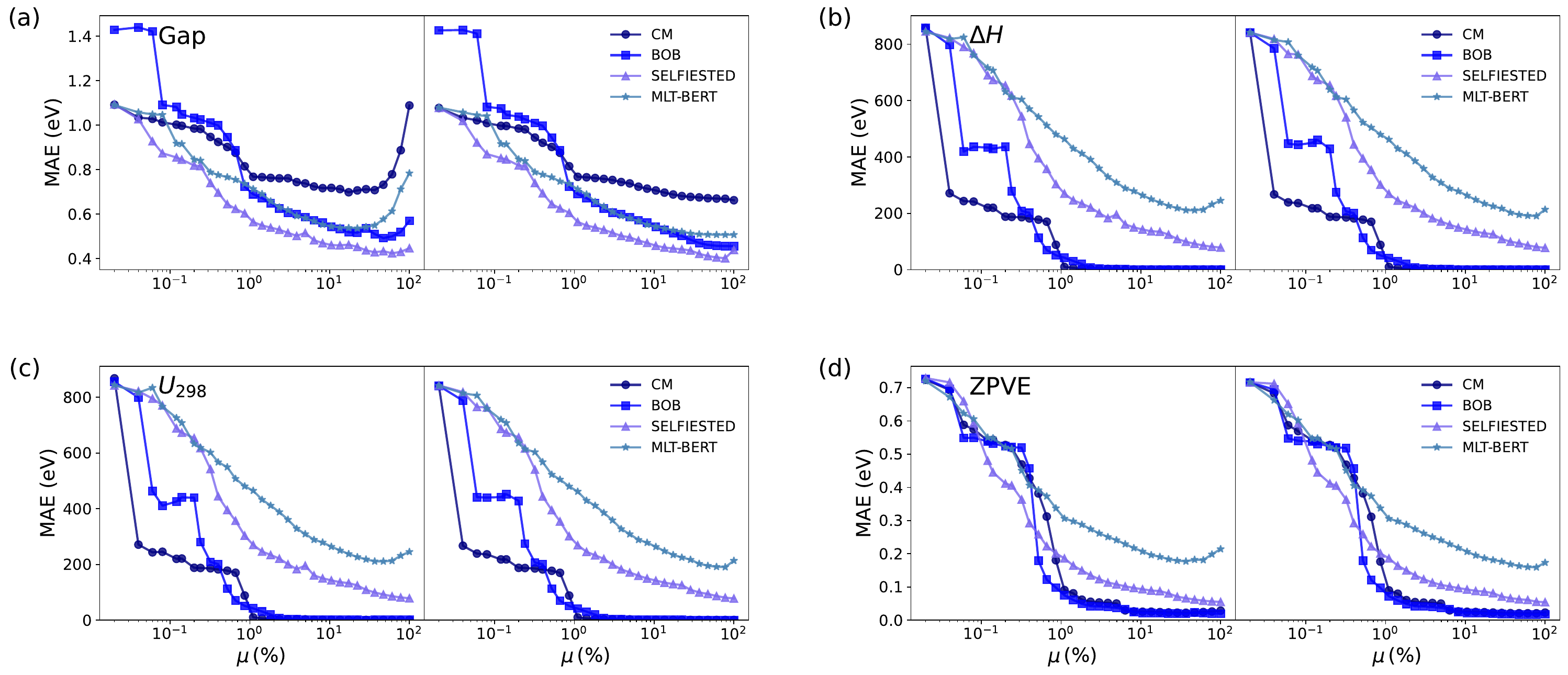}
    \caption{MAE for various properties as a function of truncation level for four global representations using Gaussian kernel. Left and right subpanels only consider results without and with regularization, respectively.}
    \label{fig:r2_reg_noreg_G_glob}
\end{figure}

\begin{figure}[htp!]
    \centering
    \includegraphics[width=0.8\linewidth]{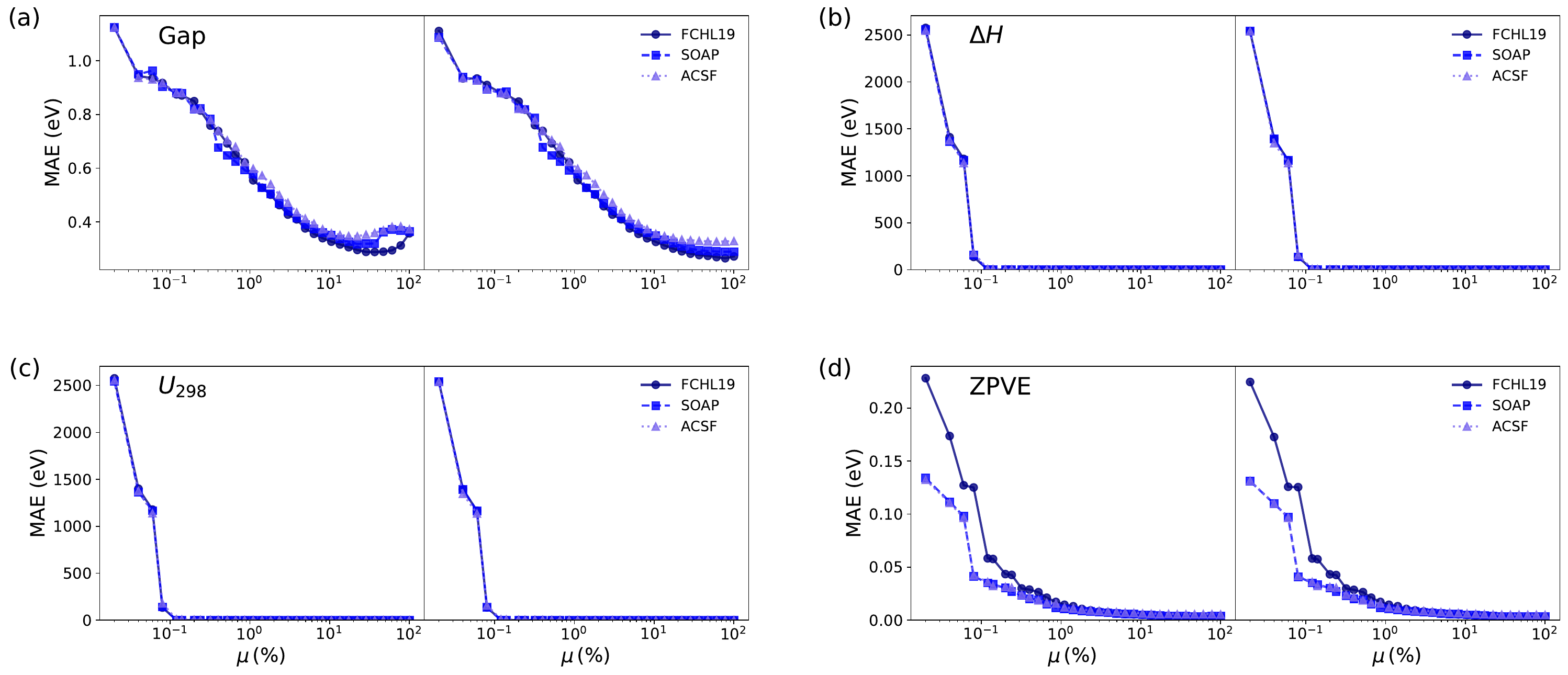}
    \caption{MAE for various properties as a function of truncation level for the three local representations using local Gaussian kernel. Left and right subpanels only consider results without and with regularization, respectively.}
    \label{fig:r2_reg_noreg_G_loc}
\end{figure}



\subsection{Results for ECFP4 Fingerprint on QM9 dataset}

Table ~\ref{tab:ecfp4_kernel_metrics_mae} reports the spectral kernel metrics ($\alpha$, SSE, ID, SR) and MAEs across seven QM9 molecular properties for ECFP4 representation with all fingerprint kernels considered in this study.

\begin{table*}[ht]
\caption{Comparison of spectral metrics and MAE obtained from KRR using ECFP4 representation. Spectral metrics are reported for kernels with hyperparameters tuned on the $C_V$ property. 
The \textcolor{blue}{best} and \textcolor{red}{second-best} MAE values for each property are highlighted in \textcolor{blue}{blue} and \textcolor{red}{red}, respectively. The four spectral metrics quantify the richness of the kernel spectrum (direction indicated by arrows). }
\resizebox{\textwidth}{!}{%
\normalsize
\begin{tabular}{|
>{\small}c|
>{\footnotesize}c||
c|c|c|c||
*{7}{c|}
}
\hline
\multirow{2}{*}{\textbf{Mol. Rep.}} &
\multirow{2}{*}{\textbf{Kernel}} &
\multirow{2}{*}{$\bm{\alpha}\,\downarrow$} &
\multirow{2}{*}{\textbf{SSE} $\uparrow$} &
\multirow{2}{*}{\textbf{ID} $\uparrow$} &
\multirow{2}{*}{\textbf{SR} $\uparrow$} &
\multicolumn{7}{c|}{\textbf{MAE} $\downarrow$} \\
\cline{7-13}
 & & & & & &
 Gap (eV) & $C_V$ (cal/molK) & $\Delta H$ (eV) & $U_0$ (eV) & $U_{298}$ (eV) & $G$ (eV) & ZPVE (eV) \\
\hline
\multirow{13}{*}{ECFP4} & Tanimoto & 0.85 & 1271.10 & 11.74 & 1.30 & \textcolor{red}{0.39\underline{4}} & \textcolor{blue}{1.55\underline{7}} & \textcolor{blue}{437.174} & \textcolor{blue}{437.180} & \textcolor{blue}{437.174} & \textcolor{blue}{437.191} & \textcolor{red}{0.25\underline{9}} \\
 & Dice & 0.93 & 248.39 & 6.57 & 1.24 & 0.48\underline{2} & 1.64\underline{5} & 455.062 & 455.068 & 455.062 & 455.077 & 0.287 \\
 & Otsuka & 0.93 & 244.77 & 6.52 & 1.23 & 0.49\underline{3} & 1.69\underline{1} & 471.640 & 471.647 & 471.640 & 471.657 & 0.294 \\
 & Sogenfrei & \textbf{0.64} & \textbf{2467.07} & \textbf{30.62} & \textbf{1.90} & \textcolor{blue}{0.37\underline{8}} & 1.60\underline{1} & 458.245 & 458.252 & 458.245 & 458.264 & \textcolor{blue}{0.25\underline{9}} \\
 & Braun-Blanquet & 0.94 & 241.10 & 6.47 & 1.23 & 0.51\underline{6} & 1.7\underline{45} & 511.511 & 511.517 & 511.511 & 511.528 & 0.30\underline{3} \\
 & Faith & 0.97 & 1.41 & 1.03 & 1.00 & 0.50\underline{3} & 1.69\underline{2} & 472.206 & 472.212 & 472.206 & 472.222 & 0.29\underline{9} \\
 & Forbes & 0.93 & 248.39 & 6.57 & 1.24 & 0.49\underline{2} & 1.68\underline{6} & 468.320 & 468.327 & 468.320 & 468.337 & 0.292 \\
 & Inner-Product & 0.94 & 241.10 & 6.47 & 1.23 & 0.51\underline{6} & 1.75\underline{5} & 503.874 & 503.880 & 503.874 & 503.889 & 0.30\underline{2} \\
 & Intersection & 0.97 & 1.21 & 1.02 & 1.00 & 0.50\underline{3} & 1.69\underline{3} & 472.456 & 472.462 & 472.456 & 472.471 & 0.29\underline{9} \\
 & Min-Max & 0.85 & 1271.10 & 11.74 & 1.30 & 0.39\underline{4} & \textcolor{red}{1.55\underline{7}} & \textcolor{red}{437.174} & \textcolor{red}{437.180} & \textcolor{red}{437.174} & \textcolor{red}{437.191} & 0.25\underline{9} \\
 & Rand & 0.97 & 1.21 & 1.02 & 1.00 & 0.50\underline{3} & 1.69\underline{2} & 472.206 & 472.212 & 472.206 & 472.221 & 0.29\underline{9} \\
 & Gaussian & 1.07 & 1.00 & 1.00 & 1.00 & 0.42\underline{8} & 1.69\underline{2} & 468.897 & 468.912 & 468.915 & 468.955 & 0.30\underline{2} \\
 & Laplacian & 1.04 & 1.00 & 1.00 & 1.00 & 0.41\underline{3} & 1.67\underline{5} & 467.336 & 467.342 & 467.336 & 467.352 & 0.283 \\
\hline \hline
\end{tabular}%
}
\label{tab:ecfp4_kernel_metrics_mae}
\end{table*}

\subsection{MoleculeNet Benchmark Datasets}\label{sec:lipo_results}

For the ESOL, FreeSolv, and Liphophilicity datasets from the MoleculeNet benchmark \cite{moleculenet:2018}, data were randomly split into 80\% training and 20\% test sets, and this procedure was repeated four times with different random seeds. The regularization hyperparameter~$\lambda$ and the length-scale parameter~$\ell$ were optimized by grid search. For Gaussian and Laplacian kernels, $\ell$ was selected from $\{10^{n} \mid n=-2,\ldots,7\}$. The regularization parameter was selected from $\{10^{-3n} \mid n=1,\ldots,9\}$ for continuous kernels and from $\{10^{i} \mid i=-10,\ldots,2\}$ for fingerprint-based kernels.
The best combination of hyperparameters was selected based on the model achieving the lowest MAE on the test set. In addition to the ECFP fingerprints, we evaluated transformer-based representations, including GROVER, SELFIESTED, SELFormer, ChemBERTa and MLT-BERT, and the results are presented in Table \ref{tab:molenet_kernel_metrics_mae}.

\begin{table*}[ht]
\centering
\caption{Kernel metrics and MAE on three molecular property benchmarks. Bold: best spectral metric within each representation category. \textcolor{blue}{Blue} / \textcolor{red}{Red}: best / second-best MAE within each category. Underlined digits reflect uncertainty at the std scale.}

\scriptsize

\setlength{\tabcolsep}{4pt}

\begin{tabular}{c|c||c|c|c|c||c}
\hline
\textbf{Mol. Rep.} & \textbf{Kernel} & $\bm{\alpha}$ $\downarrow$ & \textbf{SSE} $\uparrow$ & \textbf{ID} $\uparrow$ & \textbf{SR} $\uparrow$ & \textbf{MAE} $\downarrow$ \\
\hline
\multicolumn{7}{c}{\rule{0pt}{2ex}\textbf{ESOL}\rule[-0.8ex]{0pt}{0pt}} \\
\hline
\multirow{4}{*}{ECFP4} & Tanimoto & 0.76 & 327.31 & 10.90 & 1.62 & \textcolor{blue}{0.7\underline{64}} \\
 & Dice & 1.06 & 138.90 & 6.33 & 1.41 & 0.8\underline{40} \\
 & Gaussian & 1.26 & 1.02 & 1.00 & 1.00 & 0.9\underline{59} \\
 & Laplacian & 1.25 & 1.04 & 1.00 & 1.00 & 0.9\underline{66} \\
\hdashline
\multirow{4}{*}{ECFP6} & Tanimoto & \textbf{0.66} & \textbf{433.98} & \textbf{14.29} & \textbf{1.78} & \textcolor{red}{0.7\underline{80}} \\
 & Dice & 0.88 & 216.92 & 8.07 & 1.50 & 0.8\underline{05} \\
 & Gaussian & 1.06 & 1.03 & 1.00 & 1.00 & 0.9\underline{12} \\
 & Laplacian & 0.83 & 16.37 & 1.57 & 1.00 & 0.8\underline{90} \\
\hline \hline
\multirow{2}{*}{SELFIES-TED} & Gaussian & 1.43 & \textbf{8.10} & \textbf{1.58} & 1.03 & \textcolor{red}{0.5\underline{27}} \\
 & Laplacian & \textbf{0.95} & 4.13 & 1.20 & 1.00 & \textcolor{blue}{0.5\underline{20}} \\
\hdashline
\multirow{2}{*}{SELFormer} & Gaussian & 2.12 & 2.57 & 1.14 & 1.01 & 1.0\underline{66} \\
 & Laplacian & 0.97 & 4.27 & 1.24 & 1.00 & 0.9\underline{35} \\
\hdashline
\multirow{2}{*}{MLT-BERT} & Gaussian & 2.88 & 1.43 & 1.06 & 1.00 & 0.9\underline{13} \\
 & Laplacian & 1.14 & 6.87 & 1.49 & \textbf{1.03} & 0.8\underline{68} \\
\hdashline
\multirow{2}{*}{ChemBERTa} & Gaussian & 1.85 & 5.96 & 1.38 & 1.02 & 0.8\underline{65} \\
 & Laplacian & 1.01 & 4.91 & 1.26 & 1.00 & 0.8\underline{16} \\
\hline \hline
\multirow{2}{*}{GROVER$_{\text{base}}$} & Gaussian & 1.94 & 1.85 & \textbf{1.11} & \textbf{1.00} & \textcolor{blue}{0.5\underline{54}} \\
 & Laplacian & 1.19 & 1.00 & 1.00 & 1.00 & 0.5\underline{77} \\
\hdashline
\multirow{2}{*}{GROVER$_{\text{large}}$} & Gaussian & 1.89 & \textbf{1.86} & 1.11 & 1.00 & 0.5\underline{84} \\
 & Laplacian & \textbf{1.18} & 1.00 & 1.00 & 1.00 & \textcolor{red}{0.5\underline{72}} \\
\hline \hline
\multicolumn{7}{c}{\rule{0pt}{2ex}\textbf{FreeSolv}\rule[-0.8ex]{0pt}{0pt}} \\
\hline
\multirow{4}{*}{ECFP4} & Tanimoto & 0.74 & 217.39 & 10.91 & 1.86 & \textcolor{blue}{1.\underline{109}} \\
 & Dice & 1.01 & 103.85 & 6.51 & 1.59 & 1.\underline{230} \\
 & Gaussian & 1.88 & 1.00 & 1.00 & 1.00 & 1.\underline{346} \\
 & Laplacian & 1.21 & 1.96 & 1.07 & 1.00 & 1.\underline{315} \\
\hdashline
\multirow{4}{*}{ECFP6} & Tanimoto & \textbf{0.67} & \textbf{266.31} & \textbf{13.05} & \textbf{1.95} & \textcolor{red}{1.\underline{133}} \\
 & Dice & 0.90 & 141.04 & 7.63 & 1.63 & 1.\underline{159} \\
 & Gaussian & 1.97 & 1.00 & 1.00 & 1.00 & 1.\underline{219} \\
 & Laplacian & 0.94 & 6.58 & 1.33 & 1.00 & 1.\underline{188} \\
\hline \hline
\multirow{2}{*}{SELFIES-TED} & Gaussian & 1.59 & 4.53 & 1.34 & 1.01 & \textcolor{blue}{0.8\underline{07}} \\
 & Laplacian & 1.69 & 1.00 & 1.00 & 1.00 & \textcolor{red}{0.9\underline{48}} \\
\hdashline
\multirow{2}{*}{SELFormer} & Gaussian & 2.17 & 1.07 & 1.01 & 1.00 & 2.\underline{095} \\
 & Laplacian & 1.11 & 2.89 & 1.15 & 1.00 & 1.\underline{961} \\
\hdashline
\multirow{2}{*}{MLT-BERT} & Gaussian & 1.16 & \textbf{60.79} & \textbf{4.29} & \textbf{1.31} & 1.4\underline{31} \\
 & Laplacian & 1.09 & 5.51 & 1.32 & 1.00 & 1.5\underline{31} \\
\hdashline
\multirow{2}{*}{ChemBERTa} & Gaussian & 2.43 & 1.01 & 1.00 & 1.00 & 1.\underline{663} \\
 & Laplacian & \textbf{1.08} & 4.36 & 1.26 & 1.00 & 1.\underline{546} \\
\hline \hline
\multirow{2}{*}{GROVER$_{\text{base}}$} & Gaussian & 1.82 & \textbf{1.95} & \textbf{1.13} & \textbf{1.00} & \textcolor{blue}{0.9\underline{28}} \\
 & Laplacian & 1.73 & 1.00 & 1.00 & 1.00 & 1.\underline{115} \\
\hdashline
\multirow{2}{*}{GROVER$_{\text{large}}$} & Gaussian & 1.77 & 1.93 & 1.12 & 1.00 & \textcolor{red}{0.9\underline{40}} \\
 & Laplacian & \textbf{1.71} & 1.00 & 1.00 & 1.00 & 1.\underline{085} \\
\hline \hline
\multicolumn{7}{c}{\rule{0pt}{2ex}\textbf{Lipophilicity}\rule[-0.8ex]{0pt}{0pt}} \\
\hline
\multirow{4}{*}{ECFP4} & Tanimoto & 0.92 & 621.88 & 7.55 & 1.12 & \textcolor{blue}{0.5\underline{70}} \\
 & Dice & 2.10 & 156.80 & 4.34 & 1.07 & 0.6\underline{93} \\
 & Gaussian & 1.31 & 12.64 & 1.44 & 1.00 & 0.6\underline{15} \\
 & Laplacian & 1.08 & 60.58 & 2.06 & 1.00 & 0.5\underline{92} \\
\hdashline
\multirow{4}{*}{ECFP6} & Tanimoto & \textbf{0.79} & \textbf{988.46} & \textbf{10.52} & \textbf{1.16} & \textcolor{red}{0.57\underline{9}} \\
 & Dice & 1.20 & 327.31 & 5.85 & 1.09 & 0.65\underline{2} \\
 & Gaussian & 1.00 & 34.25 & 1.71 & 1.00 & 0.6\underline{16} \\
 & Laplacian & 0.84 & 208.42 & 2.88 & 1.01 & 0.5\underline{92} \\
\hline \hline
\multirow{2}{*}{SELFIES-TED} & Gaussian & 1.38 & 9.38 & 1.52 & 1.01 & \textcolor{blue}{0.6\underline{23}} \\
 & Laplacian & 0.76 & 312.47 & 3.23 & 1.04 & \textcolor{red}{0.6\underline{39}} \\
\hdashline
\multirow{2}{*}{SELFormer} & Gaussian & 2.48 & 2.53 & 1.15 & 1.00 & 0.8\underline{11} \\
 & Laplacian & 0.65 & 297.63 & 3.46 & 1.03 & 0.7\underline{65} \\
\hdashline
\multirow{2}{*}{MLT-BERT} & Gaussian & 3.61 & 1.20 & 1.02 & 1.00 & 0.9\underline{15} \\
 & Laplacian & 0.98 & 607.97 & \textbf{15.14} & \textbf{1.86} & 0.8\underline{70} \\
\hdashline
\multirow{2}{*}{ChemBERTa} & Gaussian & 1.52 & 6.30 & 1.39 & 1.01 & 0.6\underline{65} \\
 & Laplacian & \textbf{0.63} & \textbf{667.87} & 5.45 & 1.05 & 0.64\underline{5} \\
\hline \hline
\multirow{2}{*}{GROVER$_{\text{base}}$} & Gaussian & 2.11 & 1.29 & 1.04 & 1.00 & \textcolor{red}{0.53\underline{3}} \\
 & Laplacian & \textbf{0.86} & \textbf{31.58} & \textbf{1.61} & \textbf{1.01} & 0.56\underline{9} \\
\hdashline
\multirow{2}{*}{GROVER$_{\text{large}}$} & Gaussian & 2.01 & 1.31 & 1.04 & 1.00 & \textcolor{blue}{0.5\underline{26}} \\
 & Laplacian & 0.91 & 2.40 & 1.11 & 1.00 & 0.55\underline{2} \\
\hline \hline
\end{tabular}
\label{tab:molenet_kernel_metrics_mae}
\end{table*}

\subsection{Ablation Study}
To investigate how different molecular representations contribute to the structure and stability of global kernels, we performed a comprehensive feature ablation study on the QM9 dataset. Using training and test sets of 5,000 and 10,000 molecules, respectively. Tables~\ref{tab:ablation_mae_ecfp} and~\ref{tab:ablation_mae_other} report the test MAE for four properties ($C_V$, HOMO--LUMO gap, $U_0$, and ZPVE), together with the spectral metrics $\alpha$, SSE, SR, and ID, across different ablation levels. We consider three molecular representations: ECFP6, BOB, and SELFIES-TED. For ECFP6, results are reported for the Tanimoto, Dice, Gaussian, and Laplacian kernels, with $\ell \in \{10^{n} \mid n \in \{2, 4\}\}$.

\begin{table}[ht]
\centering
\caption{Test MAE and kernel metrics for ECFP6 across kernels, length scale $\ell$, and ablation level $\tilde{N}$. Bold MAE: best $\tilde{N}$ per $(\ell,\text{property})$ group.}
\label{tab:ablation_mae_ecfp}
\resizebox{\textwidth}{!}{%
\begin{tabular}{llcccccccccc}
\toprule
Rep & Kernel & $\sigma_\ell$ & $\tilde{N}$ & \multicolumn{1}{c}{$\alpha$} & \multicolumn{1}{c}{SSE} & \multicolumn{1}{c}{ID} & \multicolumn{1}{c}{SR} & \multicolumn{1}{c}{\shortstack[c]{Cv\\{(cal/(mol$\cdot$K))}}} & \multicolumn{1}{c}{\shortstack[c]{ZPVE\\{(eV)}}} & \multicolumn{1}{c}{\shortstack[c]{Gap\\{(eV)}}} & \multicolumn{1}{c}{\shortstack[c]{U0\\{(eV)}}} \\
\midrule
\multirow{26}{*}{ECFP6} & \multirow{4}{*}{Tanimoto} & \multirow{4}{*}{---} & 0 & 0.71 & 1699.\underline{71} & 13.\underline{71} & 1.30 & \textbf{1.9\underline{4}} & \textbf{0.30} & \textbf{0.49} & \textbf{609.\underline{02}} \\
 &  &  & 16 & 0.70 & 1936.\underline{02} & 17.\underline{31} & 1.4\underline{3} & 2.09 & 0.33 & 0.5\underline{3} & 647.\underline{69} \\
 &  &  & 64 & 0.68 & 2575.\underline{61} & 36.\underline{42} & 2.\underline{26} & 2.\underline{60} & 0.4\underline{3} & 0.6\underline{6} & 780.\underline{30} \\
 &  &  & 512 & 0.71 & 3224.\underline{69} & 140.\underline{59} & 11.\underline{25} & 3.\underline{70} & 0.6\underline{6} & 1.0\underline{0} & 1103.\underline{68} \\
\cline{2-12}
 & \multirow{4}{*}{Dice} & \multirow{4}{*}{---} & 0 & 2.7\underline{8} & 432.\underline{12} & 7.5\underline{7} & 1.24 & \textbf{1.8\underline{4}} & \textbf{0.31} & \textbf{0.56} & \textbf{515.\underline{17}} \\
 &  &  & 16 & 2.7\underline{0} & 520.\underline{98} & 9.\underline{44} & 1.3\underline{5} & 1.9\underline{2} & 0.33 & 0.5\underline{7} & 532.\underline{60} \\
 &  &  & 64 & 2.5\underline{9} & 816.\underline{00} & 19.\underline{64} & 2.\underline{03} & 2.\underline{23} & 0.4\underline{2} & 0.6\underline{2} & 592.\underline{29} \\
 &  &  & 512 & 2.8\underline{7} & 1174.\underline{36} & 76.\underline{23} & 8.\underline{30} & 3.1\underline{0} & 0.6\underline{5} & 0.9\underline{3} & 873.\underline{22} \\
\cline{2-12}
 & \multirow{9}{*}{Gaussian} & \multirow{3}{*}{100} & 0 & 2.99 & 1.03 & 1.00 & 1.00 & \textbf{1.6\underline{3}} & \textbf{0.28} & \textbf{0.48} & \textbf{456.\underline{46}} \\
 &  &  & 64 & 3.1\underline{1} & 1.02 & 1.00 & 1.00 & 1.9\underline{5} & 0.3\underline{7} & 0.5\underline{3} & 518.\underline{44} \\
 &  &  & 256 & 3.31 & 1.02 & 1.00 & 1.00 & 2.2\underline{3} & 0.47 & 0.69 & 645.\underline{57} \\
\cdashline{3-12}
 &  & \multirow{3}{*}{1000} & 0 & 4.59 & 1.00 & 1.00 & 1.00 & \textbf{1.6\underline{3}} & \textbf{0.28} & \textbf{0.48} & \textbf{454.\underline{48}} \\
 &  &  & 64 & 4.73 & 1.00 & 1.00 & 1.00 & 1.9\underline{9} & 0.3\underline{7} & 0.54 & 521.\underline{90} \\
 &  &  & 256 & 4.99 & 1.00 & 1.00 & 1.00 & 2.2\underline{3} & 0.47 & 0.69 & 646.\underline{09} \\
\cdashline{3-12}
 &  & \multirow{3}{*}{10000} & 0 & 4.82 & 1.00 & 1.00 & 1.00 & \textbf{1.6\underline{3}} & \textbf{0.28} & \textbf{0.48} & \textbf{454.\underline{28}} \\
 &  &  & 64 & 4.83 & 1.00 & 1.00 & 1.00 & 2.0\underline{0} & 0.3\underline{7} & 0.54 & 524.\underline{67} \\
 &  &  & 256 & 4.88 & 1.00 & 1.00 & 1.00 & 2.2\underline{4} & 0.48 & 0.69 & 647.\underline{46} \\
\cline{2-12}
 & \multirow{9}{*}{Laplacian} & \multirow{3}{*}{100} & 0 & 0.99 & 19.9\underline{0} & 1.50 & 1.00 & \textbf{1.7\underline{2}} & \textbf{0.28} & \textbf{0.44} & \textbf{506.\underline{59}} \\
 &  &  & 64 & 1.04 & 13.\underline{66} & 1.38 & 1.00 & 2.0\underline{4} & 0.3\underline{7} & 0.51 & 562.\underline{81} \\
 &  &  & 256 & 1.15 & 8.2\underline{7} & 1.28 & 1.00 & 2.2\underline{6} & 0.47 & 0.67 & 676.\underline{61} \\
\cdashline{3-12}
 &  & \multirow{3}{*}{1000} & 0 & 1.45 & 1.54 & 1.04 & 1.00 & \textbf{1.6\underline{3}} & \textbf{0.28} & \textbf{0.48} & \textbf{460.\underline{37}} \\
 &  &  & 64 & 2.05 & 1.44 & 1.03 & 1.00 & 1.9\underline{9} & 0.3\underline{7} & 0.54 & 524.\underline{88} \\
 &  &  & 256 & 2.22 & 1.33 & 1.02 & 1.00 & 2.2\underline{3} & 0.47 & 0.68 & 647.\underline{66} \\
\cdashline{3-12}
 &  & \multirow{3}{*}{10000} & 0 & 2.75 & 1.05 & 1.00 & 1.00 & \textbf{1.6\underline{2}} & \textbf{0.28} & \textbf{0.48} & \textbf{454.\underline{59}} \\
 &  &  & 64 & 2.87 & 1.05 & 1.00 & 1.00 & 1.9\underline{9} & 0.3\underline{7} & 0.54 & 522.\underline{07} \\
 &  &  & 256 & 3.06 & 1.04 & 1.00 & 1.00 & 2.2\underline{3} & 0.48 & 0.69 & 646.\underline{39} \\
\bottomrule
\end{tabular}%
}
\end{table}

\begin{table}[ht]
\centering
\caption{Test MAE and kernel metrics for SELFIES-TED and BOB across kernels, length scale $\ell$, and ablation level $\tilde{N}$. Underlined digits mark the first uncertain decimal place. Bold MAE: best $\tilde{N}$ per $(\sigma_\ell,\text{property})$ group.}
\label{tab:ablation_mae_other}
\resizebox{\textwidth}{!}{%
\begin{tabular}{llcccccccccc}
\toprule
Rep & Kernel & $\sigma_\ell$ & $\tilde{N}$ & \multicolumn{1}{c}{$\alpha$} & \multicolumn{1}{c}{SSE} & \multicolumn{1}{c}{ID} & \multicolumn{1}{c}{SR} & \multicolumn{1}{c}{\shortstack[c]{Cv\\{(cal/(mol$\cdot$K))}}} & \multicolumn{1}{c}{\shortstack[c]{ZPVE\\{(eV)}}} & \multicolumn{1}{c}{\shortstack[c]{Gap\\{(eV)}}} & \multicolumn{1}{c}{\shortstack[c]{U0\\{(eV)}}} \\
\midrule
\multirow{24}{*}{SELFIES-TED} & \multirow{12}{*}{Gaussian} & \multirow{4}{*}{100} & 0 & 4.79 & 1.00 & 1.00 & 1.00 & \textbf{0.4\underline{6}} & \textbf{0.06} & \textbf{0.40} & \textbf{88.\underline{70}} \\
 &  &  & 64 & 4.85 & 1.00 & 1.00 & 1.00 & 0.4\underline{6} & 0.06 & 0.40 & 90.\underline{68} \\
 &  &  & 256 & 4.97 & 1.00 & 1.00 & 1.00 & 0.48 & 0.07 & 0.40 & 98.\underline{68} \\
 &  &  & 512 & 4.89 & 1.00 & 1.00 & 1.00 & 0.54 & 0.08 & 0.41 & 119.\underline{76} \\
\cdashline{3-12}
 &  & \multirow{4}{*}{1000} & 0 & 4.87 & 1.00 & 1.00 & 1.00 & \textbf{0.58} & \textbf{0.08} & \textbf{0.44} & \textbf{128.\underline{12}} \\
 &  &  & 64 & 4.86 & 1.00 & 1.00 & 1.00 & 0.5\underline{9} & 0.09 & 0.44 & 131.\underline{78} \\
 &  &  & 256 & 4.73 & 1.00 & 1.00 & 1.00 & 0.6\underline{2} & 0.09 & 0.46 & 145.\underline{24} \\
 &  &  & 512 & 4.21 & 1.00 & 1.00 & 1.00 & 0.69 & 0.11 & 0.48 & 170.\underline{78} \\
\cdashline{3-12}
 &  & \multirow{4}{*}{10000} & 0 & 3.18 & 1.00 & 1.00 & 1.00 & \textbf{0.58} & \textbf{0.08} & \textbf{0.44} & \textbf{127.\underline{99}} \\
 &  &  & 64 & 3.16 & 1.00 & 1.00 & 1.00 & 0.5\underline{9} & 0.09 & 0.44 & 131.\underline{55} \\
 &  &  & 256 & 3.05 & 1.00 & 1.00 & 1.00 & 0.6\underline{2} & 0.09 & 0.46 & 144.\underline{88} \\
 &  &  & 512 & 2.71 & 1.00 & 1.00 & 1.00 & 0.69 & 0.11 & 0.48 & 170.\underline{71} \\
\cline{2-12}
 & \multirow{12}{*}{Laplacian} & \multirow{4}{*}{100} & 0 & 0.85 & 4.7\underline{3} & 1.21 & 1.00 & \textbf{0.57} & \textbf{0.09} & \textbf{0.43} & \textbf{132.\underline{95}} \\
 &  &  & 64 & 0.86 & 4.3\underline{7} & 1.19 & 1.00 & 0.58 & 0.09 & 0.44 & 135.\underline{57} \\
 &  &  & 256 & 0.89 & 3.39 & 1.15 & 1.00 & 0.62 & 0.09 & 0.45 & 146.89 \\
 &  &  & 512 & 0.95 & 2.35 & 1.10 & 1.00 & 0.67 & 0.11 & 0.46 & 169.20 \\
\cdashline{3-12}
 &  & \multirow{4}{*}{1000} & 0 & 0.89 & 1.23 & 1.02 & 1.00 & \textbf{0.60} & \textbf{0.09} & \textbf{0.44} & \textbf{140.\underline{58}} \\
 &  &  & 64 & 0.90 & 1.22 & 1.02 & 1.00 & 0.60 & 0.09 & 0.45 & 143.\underline{71} \\
 &  &  & 256 & 0.92 & 1.17 & 1.01 & 1.00 & 0.64 & 0.10 & 0.46 & 155.18 \\
 &  &  & 512 & 0.98 & 1.12 & 1.01 & 1.00 & 0.70 & 0.11 & 0.47 & 177.69 \\
\cdashline{3-12}
 &  & \multirow{4}{*}{10000} & 0 & 0.90 & 1.03 & 1.00 & 1.00 & \textbf{0.60} & \textbf{0.09} & \textbf{0.44} & \textbf{141.\underline{60}} \\
 &  &  & 64 & 0.91 & 1.02 & 1.00 & 1.00 & 0.61 & 0.09 & 0.45 & 147.07 \\
 &  &  & 256 & 0.93 & 1.02 & 1.00 & 1.00 & 0.64 & 0.10 & 0.46 & 156.83 \\
 &  &  & 512 & 0.99 & 1.01 & 1.00 & 1.00 & 0.70 & 0.12 & 0.47 & 178.55 \\
\midrule
\multirow{18}{*}{BOB} & \multirow{9}{*}{Gaussian} & \multirow{3}{*}{100} & 0 & 2.56 & 9.\underline{23} & 2.0\underline{7} & 1.13 & 0.49 & 0.02 & \textbf{0.45} & \textbf{11.\underline{50}} \\
 &  &  & 16 & 2.57 & 8.\underline{85} & 2.0\underline{2} & 1.1\underline{2} & \textbf{0.49} & \textbf{0.02} & 0.45 & 12.\underline{57} \\
 &  &  & 64 & 2.6\underline{3} & 8.\underline{18} & 1.9\underline{7} & 1.12 & 0.50 & 0.02 & 0.46 & 11.\underline{67} \\
\cdashline{3-12}
 &  & \multirow{3}{*}{1000} & 0 & 3.64 & 1.08 & 1.01 & 1.00 & \textbf{0.4\underline{9}} & \textbf{0.02} & \textbf{0.4\underline{8}} & \textbf{0.\underline{80}} \\
 &  &  & 16 & 3.65 & 1.07 & 1.01 & 1.00 & 0.4\underline{9} & 0.02 & 0.4\underline{8} & 5.\underline{34} \\
 &  &  & 64 & 3.7\underline{1} & 1.07 & 1.01 & 1.00 & 0.5\underline{2} & 0.02 & 0.49 & 5.\underline{52} \\
\cdashline{3-12}
 &  & \multirow{3}{*}{10000} & 0 & 4.82 & 1.00 & 1.00 & 1.00 & \textbf{0.58} & \textbf{0.02} & \textbf{0.51} & \textbf{0.\underline{77}} \\
 &  &  & 16 & 4.82 & 1.00 & 1.00 & 1.00 & 0.59 & 0.02 & 0.5\underline{1} & 9.\underline{07} \\
 &  &  & 64 & 4.85 & 1.00 & 1.00 & 1.00 & 0.62 & 0.02 & 0.52 & 9.\underline{12} \\
\cline{2-12}
 & \multirow{9}{*}{Laplacian} & \multirow{3}{*}{100} & 0 & 0.83 & 1154.\underline{36} & 28.\underline{43} & 4.\underline{43} & 1.2\underline{4} & 0.12 & 0.50 & 415.\underline{45} \\
 &  &  & 16 & 0.83 & 1144.\underline{26} & 27.\underline{98} & 4.\underline{34} & 1.2\underline{3} & 0.12 & 0.49 & 408.\underline{66} \\
 &  &  & 62 & 0.8\underline{5} & 1039.\underline{90} & 25.\underline{85} & 4.\underline{19} & \textbf{1.1\underline{6}} & \textbf{0.11} & \textbf{0.4\underline{9}} & \textbf{377.\underline{70}} \\
\cdashline{3-12}
 &  & \multirow{3}{*}{1000} & 0 & 1.34 & 15.\underline{24} & 1.9\underline{7} & 1.07 & \textbf{0.24} & \textbf{0.01} & 0.31 & 13.\underline{61} \\
 &  &  & 16 & 1.34 & 14.\underline{92} & 1.9\underline{5} & 1.07 & 0.24 & 0.01 & \textbf{0.31} & 13.\underline{71} \\
 &  &  & 62 & 1.35 & 13.\underline{95} & 1.9\underline{2} & 1.06 & 0.25 & 0.01 & 0.3\underline{3} & \textbf{13.\underline{19}} \\
\cdashline{3-12}
 &  & \multirow{3}{*}{10000} & 0 & 1.54 & 1.65 & 1.08 & 1.00 & \textbf{0.21} & \textbf{0.01} & \textbf{0.33} & \textbf{4.\underline{52}} \\
 &  &  & 16 & 1.54 & 1.64 & 1.08 & 1.00 & 0.21 & 0.01 & 0.33 & 4.\underline{88} \\
 &  &  & 62 & 1.55 & 1.6\underline{2} & 1.08 & 1.00 & 0.22 & 0.01 & 0.3\underline{4} & 5.\underline{43} \\
\bottomrule
\end{tabular}%
}
\end{table}

\textbf{Ablation on Training Size} 
While the results in Table~\ref{tab:kernel_metrics_mae} are reported with a fixed training size of $N_\text{train} = 5{,}000$, we also conducted an ablation study varying $N_\text{train}$. Fig.~\ref{fig:mae_vs_Ntr} plots the mean absolute error across different kernels and molecular properties. The results show a steady improvement in test performance as $N_\text{train}$ increases to 10{,}000 and 20{,}000. 

\begin{figure}[t!]
    \centering
    \includegraphics[width=0.3\textwidth]{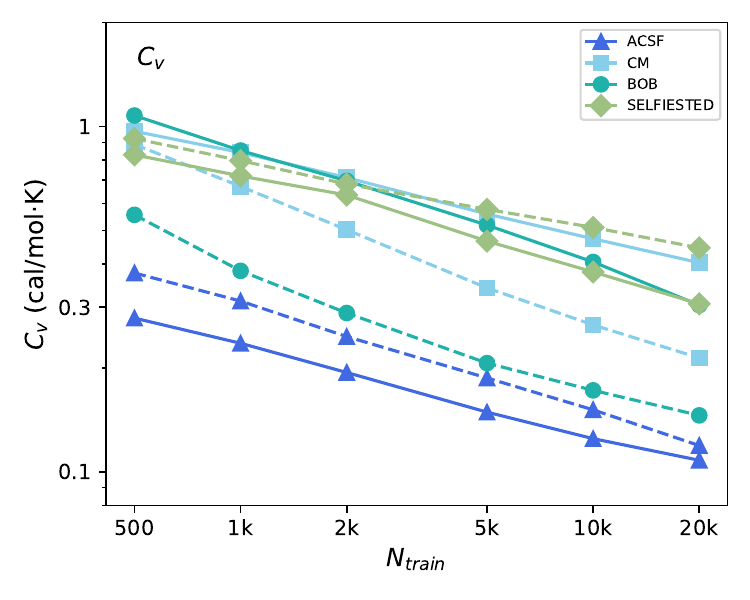}
    \includegraphics[width=0.3\textwidth]{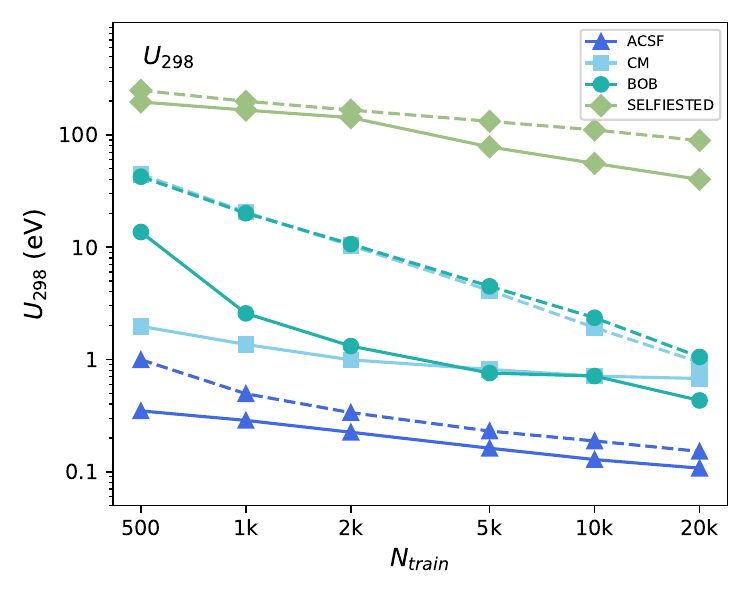}
    \includegraphics[width=0.3\textwidth]{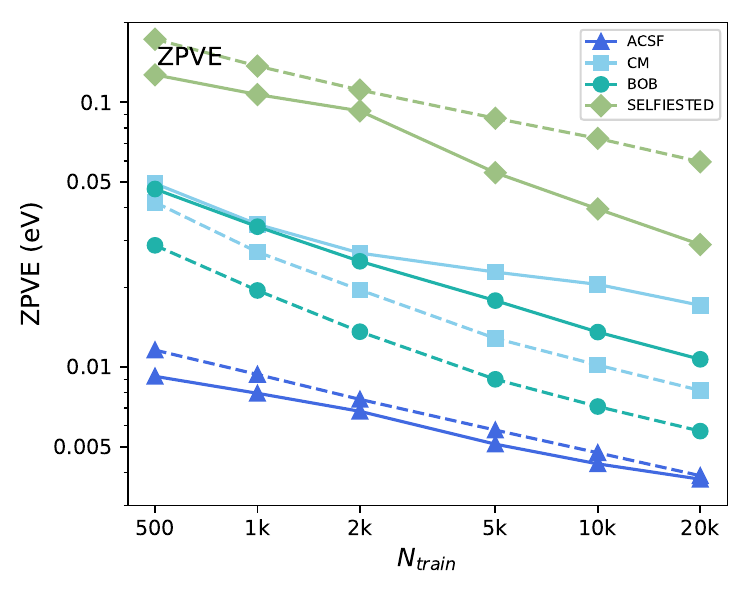}
    \caption{Test MAE, computed on $10,000$ molecules, as a function of training set size for three properties: (a) $C_V$, (b) U$_{298}$, and (c) ZPVE. Solid: Gaussian, and Dashed: Laplacian kernel.
    }
    \label{fig:mae_vs_Ntr}
\end{figure}

To examine whether spectral characterization changes with dataset size, Fig.~\ref{fig:kernel_metrics_vs_ntrain} reports kernel metrics as a function of $N_\text{train}$. In contrast to the MAE, the kernel metrics show only modest variation across training sizes, indicating that the spectral structure of the kernels is an intrinsic property of the molecular representation, rather than by the specific training size used in the analysis. In particular, metrics such as SR remain nearly constant, while ID and $\alpha$ show only moderate fluctuations. Consequently, the trends observed at $N_\text{train} = 5{,}000$ appear to remain qualitatively consistent at larger scales.

Fingerprint-based kernels show a more nuanced dependence on training size. Tanimoto kernel maintains stable spectral metrics and consistent MAE improvement with increasing $N_\text{train}$, whereas Braun-Blanquet and Dice show larger fluctuations in $\alpha$ and reduced stability at larger training sizes, suggesting a less robust spectral decay structure.

\begin{figure}
    \centering
    \includegraphics[width=0.99\linewidth]{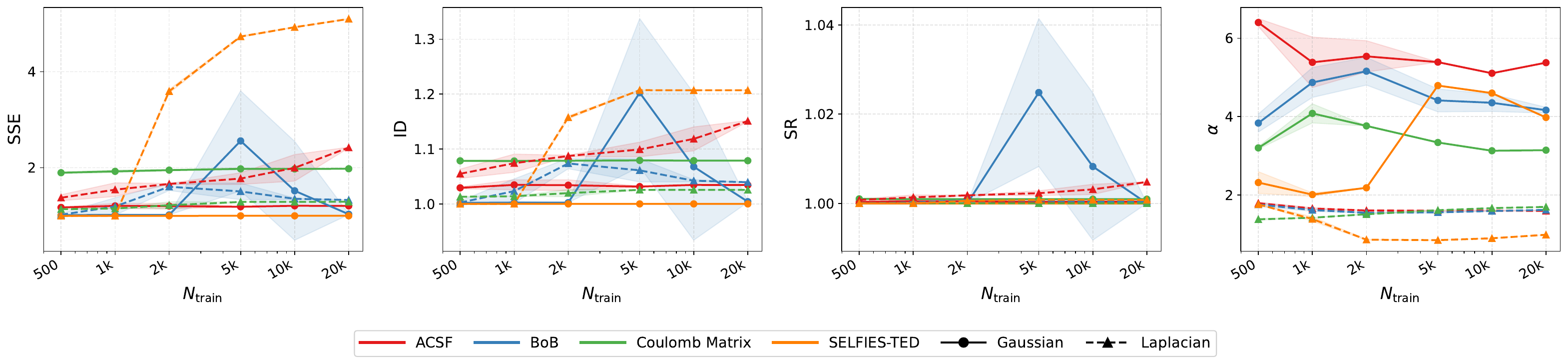}
    \caption{Kernel spectral metrics as a function of training size $N_\text{train}$ for different molecular representations and kernel types.}
    \label{fig:kernel_metrics_vs_ntrain}
\end{figure}

\section{Spectral Metrics} \label{metric}
\begin{definition}[Power Law Decay or polynomial decay rate]\label{def:powerlaw}
Let $\{\mu_1, \mu_2, \ldots, \mu_p\},\; p \in \sN \cup \{\infty\}$ denote a non-increasing spectrum of positive values. 
We say the spectrum exhibits a \emph{power law decay} if there exists an exponent $\alpha > 0$ such that
\begin{equation}
    \mu_j \;\propto\; j^{-\alpha}, \quad j = 1,2,\ldots,p.
\end{equation}
The decay rate $\alpha$ can be estimated empirically by performing a linear regression on the log-log plot of the spectrum, i.e., $\log \mu_j \approx - \alpha \log j $ \,  \cite{agrawal2022alpha, mallinar2022benign}.
\end{definition}

\begin{definition}[$p$-Stable rank]
    Suppose the integers $m\geq n$ and the matrix $\mA\in\R^{m\times n}$ has singular values $s_1(\mA)\geq s_2(\mA)\geq ... \geq s_n(\mA)$. For $1\leq p\leq \infty$, the $p$-Schatten norm is defined to be
    \begin{equation}
        \| \mA \|_p \eqdef \sqrt[p]{s_1(\mA)^p+...+s_n(\mA)^p}.
    \end{equation}
    And the $p$-stable rank of the matrix $\mA$ is defined to be
    \begin{equation}
        r_p(\mA)
        \eqdef
        \frac{\| \mA \|_p^p}{\opnorm{\mA}^p}.
    \end{equation}
\end{definition}

\begin{definition}[Intrinsic dimension (ID) and stable rank (SR)]
    Note that the notation of $p$-stable rank unifies the two metrics intrinsic dimension and stable rank, which are often used in ill-conditioned matrices. In particular, we have
    \begin{align*}
        r_1(\mA) 
        &= \frac{\| \mA \|_1}{\opnorm{\mA}}
        = \frac{s_1(\mA)+...+s_n(\mA)}{s_1(\mA)}
        =
        \text{intrinsic dimension of } \mA;\\
        r_2(\mA) 
        &= \frac{\| \mA \|_2^2}{\opnorm{\mA}^2}
        = \frac{\norm{\mA}{F}^2}{\opnorm{\mA}^2}
        =
        \text{stable rank of } \mA.
    \end{align*}
\end{definition}

In particular, the true rank of $\mA$ is always an upper bound of $r_p(\mA)$ for any $p$. In particular,
\begin{proposition}[Remark 5.4 in \cite{ipsen2024stable}] \label{proposition:p_stable_rank}
    Suppose the integers $m\geq n$ and $p\geq q$. Then for any matrix $\mA\in\R^{m\times n}$, we have
    \begin{equation}
        1 \leq r_p(\mA) \leq r_q(\mA) \leq \text{rank}(\mA) \leq n.
    \end{equation}
\end{proposition}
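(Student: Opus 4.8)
The plan is to unwind the definition of the $p$-stable rank into a single sum of normalized singular-value powers, after which all four inequalities become elementary facts about numbers in $[0,1]$. First I would dispose of the trivial case $\mA = \mathbf{0}$ (where $r_p$ is undefined) and otherwise assume $\opnorm{\mA} = s_1(\mA) > 0$. Setting $t_i \eqdef s_i(\mA)/s_1(\mA)$, the maximality of $s_1$ among the singular values gives $t_1 = 1$ and $0 \leq t_i \leq 1$ for all $i$, and the definitions of the Schatten and operator norms collapse the stable rank to
\begin{equation*}
r_p(\mA) = \frac{\| \mA \|_p^p}{\opnorm{\mA}^p} = \frac{s_1(\mA)^p + \cdots + s_n(\mA)^p}{s_1(\mA)^p} = \sum_{i=1}^n t_i^{\,p}.
\end{equation*}

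With this expression in hand, I would establish the chain one link at a time. The lower bound $1 \leq r_p(\mA)$ follows by keeping only the first summand, since $t_1^{\,p} = 1$ and all other terms are nonnegative. For the monotonicity $r_p(\mA) \leq r_q(\mA)$ when $p \geq q$, I would use that for fixed $t \in [0,1]$ the map $p \mapsto t^{\,p}$ is non-increasing (because $\log t \leq 0$), so $t_i^{\,p} \leq t_i^{\,q}$ term by term; summing over $i$ gives the claim. For $r_q(\mA) \leq \mathrm{rank}(\mA)$, write $r = \mathrm{rank}(\mA)$ for the number of nonzero singular values, so that $t_i = 0$ for $i > r$ and $t_i \in (0,1]$ for $i \leq r$; then $r_q(\mA) = \sum_{i=1}^r t_i^{\,q} \leq \sum_{i=1}^r 1 = r$. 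Finally $\mathrm{rank}(\mA) \leq \min(m,n) = n$ is the standard rank bound combined with the hypothesis $m \geq n$.

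There is essentially no hard step here: the entire argument rests on the single observation that raising a number in $[0,1]$ to a larger power only decreases it, together with the normalization $t_i = s_i/s_1$. The only points requiring care are bookkeeping rather than substance — handling the zero matrix separately so the ratio $\| \mA \|_p/\opnorm{\mA}$ is well-defined, adopting the convention $0^{p}=0$ for $p>0$ so that terms past the rank vanish cleanly, and observing that the identical derivation holds with $q$ in place of $p$, which is what makes the expression $r_q(\mA)=\sum_i t_i^{\,q}$ (and hence the two inequalities involving $r_q$) available. I expect the final write-up to span only a few lines once this normalization is introduced.
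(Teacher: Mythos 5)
Your proof is correct. The paper itself gives no proof of this proposition --- it is imported verbatim as Remark~5.4 of the cited reference \cite{ipsen2024stable} --- so there is no in-paper argument to compare against; your normalization $t_i = s_i(\mA)/s_1(\mA)$ reduces every link in the chain to the observation that $t\mapsto t^p$ is non-increasing in $p$ for $t\in[0,1]$, and each of the four inequalities follows exactly as you describe. The only caveat worth flagging is that the paper's definition permits $p=\infty$, for which $r_p$ must be read as a limit (giving $r_\infty=1$, consistent with the chain); your argument as written covers all finite $p\geq q\geq 1$, which is all that is used elsewhere in the paper.
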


We notice that there is another measure of rank used in ML literature:
\begin{definition}[Spectral Shannon Entropy (SSE), Definition 2.1 in \cite{huh2023low}]
    Suppose the integers $m\geq n$ and the matrix $\mA\in\R^{m\times n}$ has singular values $s_1(\mA)\geq s_2(\mA)\geq ... \geq s_n(\mA)$. Let $\Bar{s}_i(\mA)\eqdef \frac{s_i(\mA)}{s_1(\mA)+...+s_n(\mA)}$ be the normalized singular values such that $\Bar{s}_1(\mA) + ... + \Bar{s}_n(\mA) = 1$. The spectral entropy, or the effective dimension, of $\mA$ is defined to be:
    \begin{equation}
        \rho(\mA) 
        \eqdef
        \exp(-\sum_{i=1}^n \Bar{s}_i(\mA)\log(\Bar{s}_i(\mA))).
    \end{equation}
\end{definition}


\section{Proof} \label{proof}
In this section, we present the proof which are omitted in the main text. 

KRR finds the optimal predictor $\hat{f}$ by minimizing the regularized empirical risk:
\begin{equation} \label{eq:KRR}
    \hat{f}(\mol) \eqdef \min_{f\in\mathcal{H}} \sum_{i=1}^N \left( f(\mol_i) - y_i \right)^2 + \lambda\|f\|_\mathcal{H}^2  = \sum_{i=1}^N \alpha_i k(\mol_i,\mol) ,
\end{equation}
where $\alpha_i = [(\mK+\lambda\mI)^{-1}\y]_i \in \mathbb{R}$, $\mK\in\R^{N\times N}$ the kernel matrix, $\y \in \mathbb{R}^N$ is the target vector, and $\lambda \geq 0$ is the regularization constant.

\begin{theorem}
    With notation above, let $\hat{f}$ be the KRR predictor in Eq. (\ref{eq:KRR}) and $\hat{f}^{(r)}$ the TKRR predictor with truncation level $r$.
    Define
    \begin{equation} \label{eq:TKRR:approx}
         \tilde{k}^{(r)}(\mol_i,\mol) = [\mathbf{U}_{\leq r}\mathbf{U}_{\leq r}^\top \mathbf{k}]_i
    \end{equation}
    where $\mathbf{U}_{\leq r} = (\mathbf{u}_k^\top)_{k=1}^r \in \sR^{n\times r}$ is the sub-matrix of the orthonormal matrix $\mathbf{U}\in \R^{n\times n}$. Then we have 
    \begin{enumerate}
        \item For any $r\leq n$ and $i,j$, $\tilde{k}^{(r)}(\mol_i,\mol_j) = {K}^{(r)}_{\mol_i,\mol_j}$ and hence $\tilde{f}^{(r)}(\mol_i)=\hat{f}^{(r)}(\mol_i)$ for all $i=1,...,n$.
        \item For $r=n$ and any $i$ and any test point $\mol$, $\tilde{k}^{(n)}(\mol_i,\mol) = {k}^{(n)}(\mol_i,\mol)$ and hence $\tilde{f}^{(n)}(\mol)=\hat{f}^{(n)}(\mol)=\hat{f}(\mol)$.
    \end{enumerate}
\end{theorem}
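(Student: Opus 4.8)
The plan is to reduce both parts to two elementary facts about the orthogonal projection $\mathbf{P}_r \eqdef \mathbf{U}_{\leq r}\mathbf{U}_{\leq r}^\top = \sum_{k=1}^r \mathbf{u}_k\mathbf{u}_k^\top$ onto the span of the top-$r$ eigenvectors of $\mK$. First I would record the key identity $\mathbf{P}_r\mK = \mK^{(r)}$. This follows by substituting the eigendecomposition $\mK = \sum_{l=1}^n \mu_l \mathbf{u}_l\mathbf{u}_l^\top$ and using orthonormality $\mathbf{u}_k^\top\mathbf{u}_l = \delta_{kl}$, which collapses the double sum to $\sum_{k=1}^r \mu_k\mathbf{u}_k\mathbf{u}_k^\top = \mK^{(r)}$ as defined in Eq.~(\ref{eq:TKRR}).

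For Part 1, the observation is that when the test point is itself a training point $\mol_j$, the vector $\mathbf{k} = (k(\mol_l,\mol_j))_{l=1}^n$ is precisely the $j$-th column of $\mK$, i.e.\ $\mathbf{k} = \mK\mathbf{e}_j$ with $\mathbf{e}_j$ the $j$-th standard basis vector. Then by the definition of $\tilde{k}^{(r)}$ in Eq.~(\ref{eq:TKRR:approx}) and the key identity,
\begin{equation*}
\tilde{k}^{(r)}(\mol_i,\mol_j) = [\mathbf{P}_r\mK\mathbf{e}_j]_i = [\mK^{(r)}\mathbf{e}_j]_i = K^{(r)}_{\mol_i,\mol_j},
\end{equation*}
which is the first claim. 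The predictor identity then follows because both $\hat{f}^{(r)}$ and $\tilde{f}^{(r)}$ use the same coefficient vector $\bm{\alpha}^{(r)} = (\mK^{(r)}+\lambda\mI)^{-1}\y$, since this vector is determined solely by $\mK^{(r)}$ and $\y$; evaluating at $\mol_i$ and substituting the just-proved kernel agreement gives $\tilde{f}^{(r)}(\mol_i) = \sum_j \alpha^{(r)}_j K^{(r)}_{\mol_i,\mol_j} = \hat{f}^{(r)}(\mol_i)$.

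For Part 2, the simplification is that $r=n$ makes $\mathbf{U}_{\leq n} = \mathbf{U}$ a full orthogonal matrix, so $\mathbf{P}_n = \mathbf{U}\mathbf{U}^\top = \mI$. Hence for \emph{every} test point $\mol$ we get $\tilde{k}^{(n)}(\mol_i,\mol) = [\mI\,\mathbf{k}]_i = k(\mol_i,\mol)$. Since $\mK^{(n)} = \mK$, the full-rank truncation leaves the kernel unchanged, so $k^{(n)} = k$ and the coefficients $\bm{\alpha}^{(n)} = (\mK+\lambda\mI)^{-1}\y = \bm{\alpha}$ coincide with the original KRR coefficients; the three predictors therefore agree at every $\mol$.

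The only point requiring care, rather than a genuine obstacle, is the bookkeeping around the definition of $\hat{f}^{(r)}$: its coefficients are fixed by $\mK^{(r)}$, so the kernel-level agreement of $\tilde{k}^{(r)}$ and $k^{(r)}$ on training pairs (Part 1) transfers immediately to the predictor level. For a general test point this transfer is unavailable precisely because $k^{(r)}$ is empirically intractable off the training set, which is exactly why the off-training identity in Part 2 must be restricted to the trivial truncation $r=n$.
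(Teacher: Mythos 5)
Your proposal is correct and follows essentially the same route as the paper's proof: both arguments reduce everything to the projection identity $\mathbf{U}_{\leq r}\mathbf{U}_{\leq r}^\top \mK = \mK^{(r)}$ (the paper expands the relevant column of $\mK$ in the eigenbasis and applies the projection termwise, which is the same computation), and both handle $r=n$ via $\mathbf{U}\mathbf{U}^\top = \mI$. The only difference is presentational: you spell out the predictor-level step by noting that $\hat{f}^{(r)}$ and $\tilde{f}^{(r)}$ share the coefficient vector $(\mK^{(r)}+\lambda\mI)^{-1}\y$, which the paper leaves implicit in its ``and hence.''
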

\begin{proof}
    We use the standard notation in kernel theory: $\tilde{k}^{(r)}_{\X,\x_i} = [\tilde{k}^{(r)}({\mol_j,\mol_i})]_{j=1}^n$, and analogously for $k_{\X,\x_i}$.
    The first statement comes from:
    \[
    \tilde{k}^{(r)}_{\X,\x_i} 
    = \mathbf{U}_{\leq r}\mathbf{U}_{\leq r}^\top K_{\X,\x_i} 
    = \mathbf{U}_{\leq r}\mathbf{U}_{\leq r}^\top \sum_{k=1}^n \mu_k{u}_{ik}\mathbf{u}_k 
    = \sum_{k=1}^n \mu_k{u}_{ik} \mathbf{U}_{\leq r}{\mathbf{U}_{\leq r}^\top \mathbf{u}_k }
    = \sum_{k=1}^r \mu_k{u}_{ik}\mathbf{u}_k
    =  {k}^{(r)}_{\X,\x_i}.
    \]
    The second statement comes from:
    \[
    \tilde{k}^{(n)}_{\X,\x} = \mathbf{U}\mathbf{U}^\top k_{\X,\x} = K_{\X,\x} =  {k}^{(n)}_{\X,\x}.
    \]
\end{proof}

In short, the above theorem establishes that for $r\leq n$ (1) our approximated TKRR predictor $\tilde{f}^{(r)}$ coincides with the TKRR predictor $\hat{f}^{(r)}$ on the training set, and (2) for $r=n$ it coincides with the original KRR predictor $\hat{f}$ on any new test points. As an independent contribution, this result extends the definition of TKRR beyond the original formulation in \cite{amini2022target}, which may be of interest to kernel theorists.

\section*{The Use of Large Language Models}
In this work, we used large language models (LLMs) primarily as assistive tools to improve the clarity, grammar, and presentation of the manuscript. LLMs were employed to polish writing, rephrase sentences for readability, and ensure consistency in terminology. The use of LLMs did not influence the scientific content or conclusions of the paper; their role was limited to language refinement.

\end{document}